\newtheorem{theorem}{Theorem}
\newtheorem{lemma}{Lemma}
\newtheorem{assump}{Assumption}
\crefname{assump}{Assumption}{Assumptions}
\newcommand{\tnabla}{\tilde{\nabla}}
\newcommand{\expect}{\mathbb{E}}
\newcommand{\constraint}{\mathcal{K}}
\newcommand{\one}{\mathbf{1}}
\newcommand{\downclose}{\mathcal{P}}
\newcommand{\domain}{\mathcal{X}}
\newcommand{\ie}{\emph{i.e.}}
\newcommand{\matroid}{\mathcal{I}}
\newcommand{\lb}{\underline{u}}
\newcommand{\Alg}{\texttt{Mono-Frank-Wolfe}\xspace}
\newcommand{\AlgB}{\texttt{Bandit-Frank-Wolfe}\xspace}
\newcommand{\AlgD}{\texttt{Responsive-Frank-Wolfe}\xspace}
\newcommand{\OCSM}{\texttt{OCSM}\xspace}
\newcommand{\BCSM}{\texttt{BCSM}\xspace}
\newcommand{\RBSM}{\texttt{RBSM}\xspace}
\newcommand{\BSM}{\texttt{BSM}\xspace}
\DeclareMathOperator*{\argmax}{arg\,max}
\DeclareMathOperator{\round}{round}
\DeclareMathOperator{\conv}{conv}
\DeclareMathOperator{\randomround}{RandRound}
\DeclareMathOperator{\losslessround}{LosslessRound}
\newcommand{\defcal}[1]{\expandafter\newcommand\csname 
c#1\endcsname{{\mathcal{#1}}}}
\newcommand{\defbb}[1]{\expandafter\newcommand\csname 
b#1\endcsname{{\mathbb{#1}}}}
\newcounter{calBbCounter}
	\edef\letter{\Alph{calBbCounter}}
\begin{document}
\title{Online Continuous Submodular Maximization: From Full-Information to 
	Bandit Feedback}

\author{\\
	Mingrui Zhang\\
	Yale University\\
	\texttt{mingrui.zhang@yale.edu}
	\and\\
	Lin Chen \\ 
	Yale University \\
	\texttt{lin.chen@yale.edu}
	\and\\
	Hamed Hassani \\
	University of Pennsylvania\\
	\texttt{hassani@seas.upenn.edu}
	\and\\
	Amin Karbasi\\
	Yale University \\
	\texttt{amin.karbasi@yale.edu}\\
}
\maketitle

\begin{abstract}
In this paper, we propose three online algorithms for submodular maximization. The first one, \Alg, reduces the number of per-function gradient evaluations from $T^{1/2}$ \citep{Chen2018Online} and $T^{3/2}$ \citep{chen2018projection} to 1, 
and achieves a $(1-1/e)$-regret bound of $O(T^{4/5})$. 
The second one, \AlgB, is the first bandit algorithm for continuous 
DR-submodular maximization, which achieves a $(1-1/e)$-regret 
bound of $O(T^{8/9})$. Finally, we extend \AlgB to a bandit algorithm for 
discrete submodular 
maximization, \AlgD, which attains a $(1-1/e)$-regret bound of $O(T^{8/9})$ in 
the 
responsive bandit setting.
\end{abstract}
\newpage
\section{Introduction}
Submodularity naturally arises in a variety of disciplines, and has numerous 
applications in machine learning, including data summarization 
\citep{tschiatschek2014learning}, active and semi-supervised learning 
\citep{golovin2011adaptive, wei2015submodularity}, compressed sensing and 
structured sparsity \citep{bach2012optimization}, fairness in machine learning 
\citep{balkanski2015mechanisms}, mean-field inference in probabilistic models 
\citep{bian2018optimal}, and MAP inference in determinantal point processes 
(DPPs) \citep{kulesza2012determinantal}.

We say that a set function $f: 2^{\Omega} \to \mathbb{R}_{\geq 0}$ defined on a 
finite ground set $\Omega$ is \emph{submodular} if for every 
$A \subseteq B \subseteq \Omega$ and $x\in \Omega \setminus B$, we have 
$f(x|A)\ge f(x|B)$,
where $f(x|A)\triangleq f(A\cup\{x\})-f(A)$ is a discrete derivative~\citep{nemhauser1978analysis}. 
Continuous DR-submodular functions are the 
continuous analogue.  
Let $F:\mathcal{X}\to \mathbb{R}_{\ge 0}$ be a differentiable function defined 
on a box $\mathcal{X}\triangleq \prod_{i=1}^d \mathcal{X}_i$, where each 
$\mathcal{X}_i$ is a closed interval of $\mathbb{R}_{\ge 0}$. 
We say that $F$ is \emph{continuous DR-submodular} if for every $x,y\in 
\mathcal{X}$ that satisfy $x \leq y$ and every $i\in [d]\triangleq \{1,\dots, 
d\}$, 
we have $\frac{\partial F}{\partial x_i}(x) \geq \frac{\partial F}{\partial 
x_i}(y)$,
where $x\le y$ means $x_i\le y_i, \forall i\in [d]$~\citep{bian16guaranteed}. 

In this paper, we focus on online and bandit maximization of submodular set functions and continuous DR-submodular functions.
In contrast to offline optimization where the objective function is completely known beforehand, 
online optimization can be viewed as a two-player game between the player and 
the adversary in a sequential 
manner \citep{zinkevich2003online,shalev2007primal,hazan2012projection}. Let 
$\mathcal{F}$ be a family of real-valued functions.
The player wants to maximize a sequence of functions $F_1,\dots, F_T \in 
\mathcal{F}$ subject to a 
constraint set $\constraint$. The player has no \emph{a priori} knowledge of 
the functions, while the constraint set is known and we assume that it is a 
closed convex set in $\mathbb{R}^d$. The natural number $T$ 
is termed the \emph{horizon} of the online optimization problem. At the $t$-th 
iteration, without the knowledge of $F_t$, the player has to select a point 
$x_t \in \constraint$. After the player commits to this choice, the adversary 
selects a function $F_t \in \mathcal{F}$. The player 
receives a reward $F_t(x_t)$, 
observes the function $F_t$ determined by the adversary, and proceeds to the 
next iteration. In the more challenging bandit setting, even the function $F_t$ 
is unavailable to the player and the only observable information is the 
reward that the player 
receives \citep{flaxman2005online,agarwal2011stochastic,bubeck2016multi}.

The performance of the algorithm that the player uses to determine her choices $x_1, \dots, x_T$ is quantified by the regret, which is the gap between her accumulated reward and the reward of the best single choice in hindsight. To be precise, 
the regret is defined by $
  \max_{x\in \constraint} \sum_{t=1}^T F_t(x) - \sum_{t=1}^T F_t(x_t)$.
However, even in the offline scenario, it is shown that the maximization problem of a continuous DR-submodular function cannot be approximated within a factor of $(1-1/e+\epsilon)$ for any $\epsilon>0$ in polynomial time, unless $RP=NP$~\citep{bian16guaranteed}. Therefore, we consider the $(1-1/e)$-regret~\citep{streeter2009online,kakade2009playing,Chen2018Online}
\[
\mathcal{R}_{1-1/e, T} \triangleq(1-1/e) \max_{x\in \constraint} \sum_{t=1}^T 
F_t(x) - \sum_{t=1}^T F_t(x_t).
\]
For ease of notation, we write $\mathcal{R}_T$ for $\mathcal{R}_{1-1/e,T}$ throughout this paper.

In this paper, we study the following three problems:
\begin{compactitem}
\item \OCSM: the Online Continuous DR-Submodular Maximization problem, 
\item \BCSM: the Bandit Continuous DR-Submodular Maximization problem, and 
\item \RBSM: the Responsive Bandit Submodular Maximization problem.
\end{compactitem}

We note that although special cases of bandit submodular 
maximization problem (\BSM) were studied in 
\citep{streeter2009online,golovin14online}, the vanilla \BSM problem is still 
open for general monotone submodular functions under a matroid constraint. 
In \BSM, the objective functions $ 
f_1,\dots,f_T $ are submodular set functions defined on a common finite ground 
set $ \Omega $ and subject to a common constraint $ \cI $. For each 
function $ f_i $, the player has to select a subset $ X_i \in \cI $. 
Only after playing the subset $ X_i $, the reward $ 
f_i(X_i) $ is received and thereby observed. 

If the value of the corresponding multilinear extension\footnote{We formally 
define the multilinear extension of a submodular set function in 
\cref{sec:prelim}.} $F$ can be estimated by the submodular set function $f$, we 
may expect to solve the 
vanilla \BSM by invoking algorithms for continuous DR-submodular maximization. 
In this paper, however, we will show a hardness result that subject to some 
constraint $ \cI $, it is impossible to construct a one-point 
unbiased estimator of the multilinear extension $F$ based on the value of $f$, 
without knowing 
the information of $f$ in advance.
This result motivates the study of a slightly relaxed
setting termed the 
\emph{Responsive Bandit Submodular Maximization} problem 
(\RBSM). In \RBSM, at round $i$, if $ X_i\notin \cI $, the player is still 
allowed to play $ X_i$ and observe the function value $ f_i(X_i) $, but gets 
zero reward out of it. 

\OCSM was studied in \citep{Chen2018Online,chen2018projection}, where $T^{1/2}$ 
exact gradient evaluations or $T^{3/2}$ stochastic gradient evaluations are 
required per iteration ($T$ is the horizon).
Therefore, they cannot be extended to the bandit setting (\BCSM and \RBSM) 
where one single function evaluation per iteration is permitted. As a result, 
no known bandit algorithm attains a sublinear $(1-1/e)$-regret. 

In this paper, we first propose \Alg for \OCSM, which requires one 
stochastic gradient per function and still attains a $(1-1/e)$-regret bound of 
$O(T^{4/5})$. This is significant as it reduces 
the number of per-function gradient evaluations from 
$T^{3/2}$ to 1. Furthermore, it provides a feasible avenue to solving \BCSM and 
\RBSM. We then propose \AlgB and \AlgD that attain a $(1-1/e)$-regret bound of 
$O(T^{8/9})$ for \BCSM and \RBSM, respectively.
To the best of our knowledge, \AlgB and \AlgD are the first algorithms that 
attain a sublinear $(1-1/e)$-regret bound for \BCSM and \RBSM, respectively.

The performance of prior approaches and our proposed algorithms is summarized 
in \cref{tab:compare}. We also list 
further related works in \cref{app:relatedwork}.

\begin{table}[tbh]
\caption{Comparison of previous and our proposed algorithms.}\label{tab:compare}
\begin{center}
    \begin{tabular}{l  l  l  l  l }
    \toprule
    \multirow{ 2}{*}{Setting} & \multirow{ 2}{*}{Algorithm} & Stochastic   & \# of  grad.  & \multirow{ 2}{*}{$(1-1/e)$-regret}  \\ 
     & &    gradient & evaluations &   \\ \midrule
  \multirow{ 3}{*}{\OCSM}   & \texttt{Meta-FW}~\citep{Chen2018Online} & No & 
  $T^{1/2}$ & $O(\sqrt{T})$ \\ 
    & \texttt{VR-FW}~\citep{chen2018projection}   & Yes & $T^{3/2}$ & 
    $O(\sqrt{T})$ \\ 
 &  \cellcolor{gray!25}\texttt{Mono-FW} \textbf{(this work)}  & Yes & $1$ & $O(T^{4/5})$ \\ \midrule
  \BCSM & \cellcolor{gray!25}\texttt{Bandit-FW} \textbf{(this work)} & - & - & $O(T^{8/9})$\\ \midrule
  \RBSM & \cellcolor{gray!25}\texttt{Responsive-FW} \textbf{(this work)} & - & 
  - 
  & $O(T^{8/9})$\\
    \bottomrule
    \end{tabular}
\end{center}
\end{table}

\section{Preliminaries}\label{sec:prelim}
\paragraph{Monotonicity, Smoothness, and Directional Concavity Property} A submodular set function $f: 2^{\Omega} \to \mathbb{R}$ is called \emph{monotone} if for any two sets $A\subseteq B\subseteq \Omega$ we have $f(A)\le f(B)$. 

For two vectors $x$ and $ y$, we write $x\le y$ if $x_i\le y_i$ holds for every 
$i$. Let $F$ be a continuous DR-submodular function defined on $\domain$. We 
say that $F$ is \emph{monotone} if $F(x) \le F(y)$ for every $x,y\in \domain$ 
obeying $x\le y$. Additionally, $F$ is called  \emph{$L$-smooth} if for every 
$x,y \in \domain$ it holds that $\|\nabla F(x) - \nabla F(y) \| \leq L \|x-y 
\|$. Throughout the paper, we use the notation $\|\cdot \|$ for the Euclidean 
norm. An important implication of continuous DR-submodularity is  concavity 
along the non-negative 
directions~\citep{calinescu2011maximizing,bian16guaranteed}, \emph{i.e.}, for 
all $x \leq y$, we have $F(y) \leq F(x) + \langle \nabla F(x), y-x \rangle$.

\paragraph{Multilinear Extension} Given a submodular set function $f: 
2^{\Omega} \to \mathbb{R}_{\geq 0}$ defined on a finite ground set $\Omega$, 
its \emph{multilinear extension} is a continuous DR-submodular function $F: 
[0,1]^{|\Omega|} \to \mathbb{R}_{\geq 0}$ defined by $F(x) = \sum_{S \subseteq 
\Omega} f(S)\Pi_{i \in S}x_i \Pi_{j \notin S}(1-x_j)$, where $x_i$ is the 
$i$-th coordinate of $x$. Equivalently, for any vector $ x\in [0,1]^{|\Omega|}$ 
we have $F(x) = \expect_{S \sim x} [f(S)]$ where $ S\sim x $ means that  $ S $ 
is a 
random subset of $ \Omega $ such that every element $ i\in \Omega $ is 
contained in $ S $ independently with probability $ x_i $. 



  
\paragraph{Geometric Notations} The  $d$-dimensional unit ball is denoted by $B^d$, and  the $(d-1)$-dimensional unit sphere is denoted by $S^{d-1}$. Let $ \constraint $ be a bounded set. We define its diameter $D = \sup_{x,y \in \mathcal{K}} 
\|x-y \|$ and radius $R = \sup_{x \in \mathcal{K}} \| x\|$.  We say a set $\constraint$ has lower bound $\underline{u}$ if $\lb \in \constraint$, and $\forall x \in \constraint, x \geq \underline{u}$.

\section{One-shot Online Continuous DR-Submodular 
Maximization}\label{sec:online}
In this section, we propose \Alg, an online continuous DR-submodular 
maximization algorithm which only needs one gradient evaluation per function. 
This algorithm is the basis of the methods presented in the next section for 
the bandit setting. We also note that throughout this paper, $\nabla F$ denotes 
the exact gradient for $F$, while $\tnabla F$ denotes the stochastic gradient.

We begin by reviewing the Frank-Wolfe (FW) \citep{Frank1956algorithm, 
jaggi2013revisiting} method for maximizing
monotone continuous DR-submodular functions in the offline 
setting~\citep{bian16guaranteed}, where we have one single objective function 
$F$. Assuming that we have access to the exact gradient 
$\nabla F$, the FW 
method is an iterative procedure that 
starts from the initial point $ x^{(1)} = 0 $, and at the $ k $-th 
iteration, solves a linear optimization problem
\begin{equation}\label{eq:linear_opt}
v^{(k)}\gets \argmax_{v\in 
	\constraint} \langle 
v, \nabla F(x^{(k)})\rangle
\end{equation}
which is used to update 
$x^{(k+1)} \gets x^{(k)} + \eta_k v^{(k)}$, 
where $ \eta_k $ is the step size. 

We aim to extend the FW method to the online setting. Inspired by the FW 
update above, to get high rewards for each objective function $F_t$, we 
start from $x_t^{(1)}=0$, update $x_t^{(k+1)} = x_t^{(k)} + \eta_k 
v_t^{(k)}$ for multiple iterations (let $K$ denote the number of iterations), 
then play the last iterate 
$x_t^{(K+1)}$ for $F_t$. 
To obtain the point $x_t^{(K+1)}$ which we play, we need to solve the 
linear program 
\cref{eq:linear_opt} and thus  
get $v_t^{(k)}$, where we have to know the gradient in advance.
However, in the online setting, we 
can only observe the stochastic gradient $\tnabla F_t$ \emph{after} we play 
some 
point for $F_t$.  
So the key issue is to obtain the vector $v_t^{(k)}$ which at least 
approximately maximizes $\langle \cdot, \nabla F_t(x_t^{(k)})\rangle$, 
\emph{before} 
we play some point for $F_t$. 

To do so, we use $K$ no-regret online linear maximization oracles 
$\{\mathcal{E}^{(k)}\}, k\in[K]$, and let $v_t^{(k)}$ be the output vector of 
$\mathcal{E}^{(k)}$ at round $t$. Once we update $x_t^{(k+1)}$ by $v_t^{(k)}$ 
for all $k\in[K]$, 
and play $x_t^{(K+1)}$ for $F_t$, we can observe 
$\tnabla F_t(x_t^{(k)})$ and iteratively construct 
$d_t^{(k)}=(1-\rho_k)d_t^{(k-1)} + 
\rho_k \tnabla 
F_t(x_t^{(k)})$, an estimation of $\nabla 
F_t(x_t^{(k)})$ with a lower variance than $\tnabla F_t(x_t^{(k)})$ 
\citep{mokhtari2017conditional,mokhtari2018stochastic} for all $k\in [K]$. Then 
we set 
$\langle \cdot, d_t^{(k)}\rangle$ as the objective function for oracle 
$\mathcal{E}^{(k)}$ at round $t$. Thanks to the no-regret property of 
$\mathcal{E}^{(k)}$, $v_t^{(k)}$, which is obtained before we 
play some point for $F_t$ and observe the gradient, 
approximately maximizes $\langle \cdot, d_t^{(k)}\rangle$, thus also 
approximately maximizes $\langle \cdot, \nabla F_t(x_t^{(k)})\rangle$. 

This approach was first proposed in 
\citep{Chen2018Online,chen2018projection}, where stochastic gradients at 
$K=T^{3/2}$ points (\emph{i.e.}, $\{x_t^{(k)}\},k\in[K]$) are 
required for each function $F_t$. To carry this general idea into the 
one-shot 
setting where 
we can only access one gradient per function, we need the following blocking 
procedure.
We divide the upcoming objective functions $ F_1,\dots, F_T $ into 
$Q$ equisized blocks of size $K$ (so $T=QK$). For the $q$-th block, we first 
set $x_q^{(1)}=0$, update $x_q^{(k+1)} = x_q^{(k)} + \eta_k v_q^{(k)}$, and 
play the same point $x_q = x_q^{(K+1)}$ for all the functions 
$F_{(q-1)K+1},\dots,F_{qK}$. The reason why we play the same point $x_q$ will 
be explained later. 
We also define 
the average function in the $q$-th block as $\bar{F}_q \triangleq 
\frac{1}{K}\sum_{k=1}^{K} F_{(q-1)K+k}$. In order to reduce the required number 
of gradients per function, the key idea is to view the average functions 
$\bar{F}_1,\dots,\bar{F}_{Q}$ as \emph{virtual} objective functions.

Precisely, in the $q$-th block, let $(t_{q,1},\dots,t_{q,K})$ be a random 
permutation 
of the indices $\{(q-1)K+1,\dots,qK \}$. \emph{After} we update all the 
$x_q^{(k)}$, for each $F_t$, we play $x_q$ and find the 
corresponding $k'$ 
such that $t = t_{q,k'}$, then observe $\tnabla F_t$ (\ie, $\tnabla 
F_{t_{q,k'}}$)
at $x_q^{(k')}$. Thus we can obtain 
$\tnabla F_{t_{q,k}}(x_q^{(k)})$ for all $k \in [K]$. Since 
$t_{q,k}$ is a random variable such that $\bE[F_{t_{q,k}}] = \bar{F}_q$, 
$\tnabla F_{t_{q,k}}(x_q^{(k)})$ is also an estimation of $\nabla 
\bar{F}_q(x_q^{(k)})$, which holds 
for all 
$k\in[K]$. As a result, with only one gradient evaluation per function 
$F_{t_{q,k}}$, we can obtain stochastic gradients of the virtual objective 
function $\bar{F}_q$ at $K$ points. In this way, the required number of 
per-function gradient evaluations is reduced from $K$
to 1 successfully. 

Note that since we play $y_t = x_q$ for each $F_t$ in the $q$-th block, 
the regret w.r.t.\ the original objective functions and that w.r.t. the average 
functions satisfy that 
\[
(1-1/e)\max_{x\in \constraint} \sum_{t=1}^T F_t(x)-\sum_{t=1}^T F_t(y_t) = 
K\left[ (1-1/e)\max_{x\in \constraint} \sum_{q=1}^{Q} 
\bar{F}_q(x)-\sum_{t=1}^{Q} 
\bar{F}_q(x_q) \right]\,,
\] 
which makes it possible to view the functions $\bar{F}_q$ as \emph{virtual} 
objective functions in the regret analysis.
Moreover, we iteratively
construct $d_q^{(k)}=(1-\rho_k)d_q^{(k-1)} + 
\rho_k \tnabla F_{t_{q,k}}(x_q^{(k)})$ as an estimation of $\nabla 
F_{t_{q,k}}(x_q^{(k)})$, thus also an estimation of $\nabla 
\bar{F}_q(x_q^{(k)})$. So $v_q^{(k)}$, the output of 
$\mathcal{E}^{(k)}$, approximately maximizes $\langle \cdot, \nabla 
\bar{F}_q(x_q^{(k)})\rangle$. Inspired by the offline FW method, playing 
$x_q=x_q^{(K+1)}$, the last iterate in the FW 
procedure, may obtain high rewards for $\bar{F}_q$.
As a result, we play the 
same point $x_q$ in the $q$-th block.

We also note that once $ t_{q,1},\dots,t_{q,k} $ are revealed, conditioned on 
the knowledge, the 
expectation of 
$ F_{t_{q,k+1}}$ is no longer 
the average function $\bar{F}_q$ but the residual average function 
$\bar{F}_{q,k}(x) = \frac{1}{K-k}\sum_{i=k+1}^K F_{t_{q,i}}(x)$. 
As more indices $t_{q,k}$ are revealed, 
$\bar{F}_{q,k}$ 
becomes increasingly different from 
$\bar{F}_q$, which makes the observed gradient $\tnabla 
F_{t_{q,k+1}}(x_q^{(k+1)})$ 
not a good estimation of $\nabla \bar{F}_q(x_q^{(k+1)})$ any more.
As a result, although we use the averaging technique  (the update of 
$d_q^{(k)}$)  
as in~\citep{mokhtari2017conditional,mokhtari2018stochastic} for variance 
reduction, a \emph{completely different} gradient error analysis is required.
In \cref{lem:delta_final} (\cref{app:one_shot}), we 
establish that the squared error of $d_q^{(k)}$ 
exhibits an inverted 
bell-shaped tendency; \ie, the squared error is large at the initial and final 
stages and is small at the intermediate stage. 

We present our proposed \Alg algorithm in~\cref{alg:one-shot}. 

\begin{algorithm}[htb]
	\caption{\Alg}
	\begin{algorithmic}[1]
		\Require constraint set $\constraint$, horizon $T$, 
		block size $K$, online linear maximization oracles on 
		$\constraint$: 
		$\mathcal{E}^{(1)},\cdots,\mathcal{E}^{(K)}$, 
		step sizes $\rho_k \in (0,1), 
		\eta_k \in (0,1)$, number of blocks $Q=T/K$
		\Ensure $y_1, y_2, \dots$
		\For{$q=1,2,\dots, Q$}
		\State $d_q^{(0)}\gets 0$, $x_q^{(1)}\gets 0 $
		\State For $ k=1,2,\dots, K$, let $v_q^{(k)} \in \constraint$ be the 
		output of $\mathcal{E}^{(k)}$ in round $q$, $x_q^{(k+1)}\gets x_q^{(k)} 
		+ \eta_k v_q^{(k)}.$ Set $ x_q\gets 
		x_q^{(K+1)}$\label{ln:update_online}
		\State Let $(t_{q,1}, \dots, t_{q,K})$ be a random permutation of $ 
		\{(q-1)K+1,\dots,qK \} $
		\label{ln:permutation}
		\State For $ t = (q-1)K+1,\dots,qK$, play $y_t=x_q$ and 
		obtain the reward $F_t(y_t)$; find the corresponding 
		$k'\in[K]$ such that $t=t_{q,k'}$, observe $\tnabla 
		F_t(x_q^{(k')})$, \emph{i.e.}, $\tnabla 
		F_{t_{q,k'}}(x_q^{(k')})$\label{ln:play}	
		\State For $ k=1,2,\dots, K$, $d_q^{(k)}\gets 
		(1-\rho_k)d_q^{(k-1)}+\rho_k 
		\tnabla F_{t_{q,k}}(x_q^{(k)})$, compute $\langle v_q^{(k)}, 
		d_q^{(k)}\rangle$ as reward for $\mathcal{E}^{(k)}$, and feed back 
		$d_q^{(k)}$ to $\mathcal{E}^{(k)}$ \label{ln:variance_reduction}
		\EndFor
	\end{algorithmic}
	\label{alg:one-shot}
\end{algorithm}

We will show that \Alg achieves a $(1-1/e)$-regret bound of $O(T^{4/5})$. 
In order to prove this result, we first make the following assumptions on the 
constraint set $\constraint$, the 
objective functions $F_t$, the stochastic gradient $\tnabla F_t$, and the 
online linear maximization oracles. 

\begin{assump}
	\label{assump_on_K}
	The constraint set $\mathcal{K}$ is a convex and compact set that contains 
	$ 0 
	$.
\end{assump}

\begin{assump}
	\label{assump_on_f}
	Every objective function $F_t$ is monotone, continuous  DR-Submodular, 
	$L_1$-Lipschitz, and $L_2$-smooth. 
\end{assump}

\begin{assump}
	\label{assump_on_estimate}
	The stochastic gradient $ \tilde{\nabla}F_t(x) $ is unbiased, \ie, $\expect 
	[ \tilde{\nabla}F_t(x)] = \nabla F_t(x)$. Additionally, it has a uniformly 
	bounded 
	norm $\| \tnabla F_t(x) \| \leq M_0$ and a uniformly bounded variance 
	$\expect [\| 
	\nabla F_t(x) - \tilde{\nabla}F_t(x)\|^2] \leq \sigma_0^2$ for every 
	$x \in \constraint$ and objective function $F_t$.
\end{assump}

\begin{assump}\label{assump_on_oracle}
	For the online linear maximization oracles, the regret at 
	horizon $t$ (denoted by $ \mathcal{R}_t^{\mathcal{E}^{(i)}} $) satisfies
	$\mathcal{R}_t^{\mathcal{E}^{(i)}} \leq C \sqrt{t}, \forall i \in [K]$, 
	where $C>0$ is a constant.
\end{assump}

Note that there exist online linear maximization oracles $\mathcal{E}^{(i)}$ 
with regret 
$\mathcal{R}_t^{\mathcal{E}^{(i)}} \leq C \sqrt{t}, \forall i \in [K]$ for any 
horizon $ t $ (for example, the online gradient descent 
\citep{zinkevich2003online}). Therefore, \cref{assump_on_oracle} is fulfilled.

\begin{theorem}[Proof in \cref{app:one_shot}]
	\label{thm:one_shot}
	Under \cref{assump_on_oracle,assump_on_K,assump_on_f,assump_on_estimate}, 
	if we set $K = T^{3/5}, \eta_k = \frac{1}{K}, \rho_k = 
	\frac{2}{(k+3)^{2/3}}$ when $1 \leq k 
	\leq K/2+1$, and $\rho_k = \frac{1.5}{(K-k+2)^{2/3}}$ when $K/2+2 \leq k 
	\leq 
	K$, where we assume that $K$ is even for simplicity, 
	then $y_t \in \constraint, \forall t$, and the expected $(1-1/e)$-regret of 
	\cref{alg:one-shot} is at most
	$$\expect[\mathcal{R}_T] \leq (N+C+D^2)T^{4/5} + \frac{L_2D^2}{2}T^{2/5}, $$
	where 
	$N = \max \{5^{2/3}(L_1+M_0)^2, 4(L_1^2+\sigma_0^2) + 32G, 
	2.25(L_1^2+\sigma_0^2)+7G/3\}, G=(L_2R+2L_1)^2.$
\end{theorem}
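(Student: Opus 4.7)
The plan is to reduce the original regret to a per-block analysis over the virtual average functions $\bar F_q$, apply a Frank-Wolfe-style recursion on each block, and then optimize over the block size $K$. Feasibility of $y_t=x_q^{(K+1)}$ is immediate: with $\eta_k=1/K$ and each $v_q^{(k)}\in\mathcal{K}$, the iterate $x_q^{(K+1)}=\sum_{k=1}^K \eta_k v_q^{(k)}$ is a convex combination of points in the convex set $\mathcal{K}$, which contains $0$ by \cref{assump_on_K}. Since $y_t=x_q$ for every $t$ in block $q$, the identity stated just above the algorithm reduces the task to bounding $(1-1/e)\sum_{q=1}^Q \bar F_q(x^*)-\sum_{q=1}^Q \bar F_q(x_q)$ and then multiplying by $K$.

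For the per-block recursion, $L_2$-smoothness gives $\bar F_q(x_q^{(k+1)})\geq \bar F_q(x_q^{(k)})+\eta_k\langle\nabla\bar F_q(x_q^{(k)}),v_q^{(k)}\rangle-\tfrac12 L_2\eta_k^2 D^2$. I split $\langle\nabla\bar F_q(x_q^{(k)}),v_q^{(k)}\rangle$ as $\langle d_q^{(k)},v_q^{(k)}\rangle$ plus an error of magnitude $D\|\nabla\bar F_q(x_q^{(k)})-d_q^{(k)}\|$. \cref{assump_on_oracle} then lets me replace $\sum_q\langle d_q^{(k)},v_q^{(k)}\rangle$ by $\sum_q\langle d_q^{(k)},x^*\rangle$ at a cost of $C\sqrt Q$, and Cauchy--Schwarz replaces $\langle d_q^{(k)},x^*\rangle$ by $\langle\nabla\bar F_q(x_q^{(k)}),x^*\rangle$ at a further cost of $D\|\nabla\bar F_q(x_q^{(k)})-d_q^{(k)}\|$. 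The monotone DR-submodular concavity inequality applied along $x^*\vee x_q^{(k)}-x_q^{(k)}\geq 0$, together with monotonicity and non-negativity of $\nabla\bar F_q$, gives $\langle\nabla\bar F_q(x_q^{(k)}),x^*\rangle\geq\bar F_q(x^*)-\bar F_q(x_q^{(k)})$. Unrolling the resulting affine recursion with $\eta_k=1/K$ (so $(1-1/K)^K\leq 1/e$) yields
\[
(1-1/e)\sum_{q=1}^Q \bar F_q(x^*)-\sum_{q=1}^Q\bar F_q(x_q)\leq C\sqrt Q+\frac{2D}{K}\sum_{q,k}\|\nabla\bar F_q(x_q^{(k)})-d_q^{(k)}\|+\frac{L_2D^2 Q}{2K}.
\]

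The main obstacle is controlling $\Delta_q^{(k)}\triangleq\expect\|\nabla\bar F_q(x_q^{(k)})-d_q^{(k)}\|^2$. Because $t_{q,k}$ is drawn without replacement, $\expect[\tnabla F_{t_{q,k}}(x_q^{(k)})\mid t_{q,1},\dots,t_{q,k-1}]$ equals the residual gradient $\nabla\bar F_{q,k-1}(x_q^{(k)})$ rather than $\nabla\bar F_q(x_q^{(k)})$, so the averaging recursion $d_q^{(k)}=(1-\rho_k)d_q^{(k-1)}+\rho_k\tnabla F_{t_{q,k}}(x_q^{(k)})$ accumulates both a momentum variance (large at small $k$, shrinking with $k$) and a bias against $\nabla\bar F_q$ (small at $k=1$, growing as $k\to K$). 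Expanding the squared-error recursion produces a variance piece controlled by $\sigma_0^2$, $L_2^2 R^2/K^2$ (from the iterate drift $\|x_q^{(k+1)}-x_q^{(k)}\|\leq R/K$), and $\rho_k^2$, plus a bias piece controlled by $L_1$ times a residual-versus-average discrepancy that grows with $k$. Balancing the two regimes via the two-phase schedule $\rho_k=2/(k+3)^{2/3}$ for $k\leq K/2+1$ and $\rho_k=1.5/(K-k+2)^{2/3}$ on the back half yields the inverted-bell estimate $\Delta_q^{(k)}=O\!\bigl(N\cdot\min\{k^{-2/3},(K-k+1)^{-2/3}\}\bigr)$ asserted by \cref{lem:delta_final}.

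Finally I collect terms. Jensen gives $\expect\|\nabla\bar F_q(x_q^{(k)})-d_q^{(k)}\|\leq\sqrt{\Delta_q^{(k)}}$, and summing the inverted-bell bound yields $\sum_{k=1}^K\sqrt{\Delta_q^{(k)}}=O(\sqrt N\,K^{2/3})$. Plugging this into the per-block bound, taking expectations, and multiplying by $K$ gives
\[
\expect[\mathcal R_T]\leq C\sqrt{KT}+O(\sqrt N\,T K^{-1/3})+\frac{L_2 D^2\,T}{2K}.
\]
Setting $K=T^{3/5}$ balances the first two terms at $\Theta(T^{4/5})$ and drives the smoothness term to $(L_2D^2/2)T^{2/5}$, reproducing the stated bound once the constants are absorbed into $N+C+D^2$. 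The hardest step is the gradient-error analysis behind \cref{lem:delta_final}, where the interplay between permutation-induced bias and momentum-style variance reduction is what forces the nonstandard two-phase $\rho_k$ schedule.
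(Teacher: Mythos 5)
Your overall architecture matches the paper's: feasibility via convex combination, reduction to the virtual block averages $\bar F_q$, a Frank--Wolfe recursion fed by the oracle regret, an inverted-bell bound on the gradient-estimation error, and the choice $K=T^{3/5}$. The stylistic differences (two applications of Cauchy--Schwarz yielding a term $2R\,\expect\|\nabla\bar F_q(x_q^{(k)})-d_q^{(k)}\|$ where the paper's \cref{lem:icml} uses Young's inequality with a tuned $\beta^{(k)}$, hence $\tfrac{1}{2\beta^{(k)}}\Delta_q^{(k)}+\tfrac{\beta^{(k)}D^2}{2}$) are harmless and lead to the same $K^{2/3}$ per-block accumulation.

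The gap is in the central lemma. You define $\Delta_q^{(k)}=\expect\|\nabla\bar F_q(x_q^{(k)})-d_q^{(k)}\|^2$, the error against the \emph{full} block average, and then invoke \cref{lem:delta_final} for the inverted-bell bound; but \cref{lem:delta_final} bounds a different quantity, $\expect\|\nabla\bar F_{q,k-1}(x_q^{(k)})-d_q^{(k)}\|^2$, the error against the \emph{residual} average of the not-yet-revealed functions. The distinction is not cosmetic: conditioned on $t_{q,1},\dots,t_{q,k-1}$, the fresh sample $\tnabla F_{t_{q,k}}(x_q^{(k)})$ is unbiased for $\nabla\bar F_{q,k-1}(x_q^{(k)})$ but biased for $\nabla\bar F_q(x_q^{(k)})$, and $d_q^{(k-1)}$ is measurable with respect to (and correlated with) that conditional bias, so the cross-term cancellations that make the paper's recursion close (\cref{eq:decomp_22,eq:decomp_23}) fail for your $\Delta_q^{(k)}$; the recursion you gesture at does not go through as described. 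The paper avoids this by restructuring the regret telescoping itself (\cref{lem:regret_decomp}): it runs the Frank--Wolfe step on $\bar F_{q,k-1}$ and uses $\expect[\bar F_{q,k-1}(x^*)-\bar F_{q,k-1}(x_q^{(k)})]=\expect[\bar F_{q,k-2}(x^*)-\bar F_{q,k-2}(x_q^{(k)})]$ so that exactly the residual-average error appears in the bound. Your route is repairable---since $\nabla\bar F_{q,k-1}(x_q^{(k)})$ is a without-replacement sample mean and $x_q^{(k)}$ is independent of the block's permutation, one has $\expect\|\nabla\bar F_q(x_q^{(k)})-\nabla\bar F_{q,k-1}(x_q^{(k)})\|^2\le\frac{4L_1^2(k-1)}{(K-k+1)(K-1)}$, which is itself dominated by the inverted-bell envelope, so a triangle inequality bridges the two error notions at the cost of a larger $N$---but that bridging step (or the paper's residual-average restructuring) is a genuinely missing ingredient, and the two-phase induction that actually produces the inverted-bell shape is asserted rather than derived.
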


\section{Bandit Continuous DR-Submodular Maximization}
In this section, we present the first bandit algorithm for 
continuous DR-submodular maximization, \AlgB, which attains a $(1-1/e)$-regret 
bound of $O(T^{8/9})$. We begin by explaining the one-point gradient 
estimator \citep{flaxman2005online}, which is crucial to the proposed bandit 
algorithm. The proposed algorithm and main results are illustrated in 
\cref{sec:ee_tradeoff}.

\subsection{One-Point Gradient Estimator}
\label{sec:smoothing_trick}
Given a function $ F $, we define
its $\delta$-smoothed version $\hat{F_\delta}(x) \triangleq \expect_{v \sim 
B^d}[F(x + \delta v)]$,
where $v\sim B^d$ denotes that $v$ is drawn uniformly at random from the unit 
ball 
$B^d$. Thus the function $ F $ is averaged over a ball of radius $ \delta $.
It can be easily verified that if $F$ is monotone, continuous DR-submodular, 
$L_1$-Lipschitz, and $L_2$-smooth, then so is $\hat{F}_\delta$, and for all 
$x$ we have $| \hat{F}_\delta(x) - F(x) | \leq L_1 \delta$ 
(\cref{lem:smooth_property} in \cref{app:smooth_property}).
So the $ \delta $-smoothed version $\hat{F}_\delta$ is indeed an 
approximation of $F$. A maximizer 
of $\hat{F}_\delta$ also maximizes $F$ approximately.

More importantly, the 
gradient of the smoothed function $\hat{F}_\delta$ admits a one-point unbiased 
estimator \citep{flaxman2005online,hazan2016introduction}: $\nabla 
\hat{F}_\delta (x) =  \expect_{u \sim S^{d-1}} \left[ \frac{d}{\delta}F(x + 
\delta u)u\right]$, 
where $u\sim S^{d-1}$ denotes that $u$ is drawn uniformly at random from the 
unit sphere 
$S^{d-1}$.
Thus the player can 
estimate the gradient of the smoothed function at point $ x $ by playing the 
random point $ x+\delta u $ for the original function $F$. So usually, we can 
extend a one-shot online algorithm to the bandit setting by replacing the 
observed stochastic gradients with the one-point gradient estimations.
%

In our setting, however, we cannot use the one-point gradient estimator 
directly. When the point $x$ is close to the boundary of the constraint set 
$\constraint$, the 
point $x + \delta u$ may fall outside of $\constraint$. To address this issue, 
we introduce the notion of \emph{$\delta$-interior}. A set is said to be a $ 
\delta $-interior of $\constraint$ if it is a \emph{subset} of
\begin{equation*}
\text{int}_\delta(\constraint) = \{x \in \constraint | \inf_{s \in \partial 
	\constraint} d(x,s) \ge \delta\}\,,    
\end{equation*}
where $ d(\cdot,\cdot) $ denotes the Euclidean distance.

In other words, $\constraint'$ is a $\delta$-interior of $\constraint$ if it 
holds for every $x \in \constraint'$ that  $B(x, \delta) \subseteq 
\constraint$ (\cref{subfig:example} in \cref{app:shrink_down_closed}). 
We note that there can be infinitely many $\delta$-interiors of $\constraint$.
In the sequel,  $\constraint'$ will denote 
the $\delta$-interior that we consider. We also define the discrepancy 
between $\constraint$ and $\constraint'$ by 
\begin{equation*}
d(\constraint,\constraint') = \sup_{x \in \constraint} d(x,\constraint'),   
\end{equation*}
which is the supremum of the distances between points in $\constraint$ and the 
set $\constraint'$. The distance $ d(x,\constraint') $ is given by $ \inf_{y\in 
\constraint'} d(x,y) $.

By 
definition, 
every point $x \in \constraint'$ satisfies
$x+ \delta u \in \constraint$, which enables us to 
use the one-point gradient 
estimator on $\constraint'$. Moreover, if 
every $F_t$ is Lipschitz and $d(\constraint,\constraint')$ is small, we can 
approximate the optimal total reward on $\constraint$ ($\max_{x\in \constraint} 
\sum_{t=1}^T 
F_t(x)$) by that on $\constraint'$ ($\max_{x\in \constraint'} \sum_{t=1}^T 
F_t(x)$),
and thereby 
obtain the regret bound subject to the original constraint set $\constraint$, 
by running bandit algorithms on $\constraint'$.

We also note that if the constraint set $\constraint$ satisfies 
\cref{assump_on_K} and 
is  down-closed (\emph{e.g.}, a matroid polytope), for sufficiently small 
$\delta$,
we can \emph{construct} $\constraint'$, a down-closed $\delta$-interior 
of $\constraint$, 
with $d(\constraint,\constraint')$ sufficiently small (actually it is a linear 
function of $\delta$). Recall that a set 
$\downclose$ is down-closed if it has a lower bound $\underline{u}$ such that 
(1) $\forall y \in \downclose, \underline{u} \leq y$; and (2) $\forall y \in 
\downclose, x \in \mathbb{R}^d, \underline{u} \leq x \leq y \implies x \in 
\downclose$~\citep{bian16guaranteed}. 

We first define $B_{\geq 0}^d = B^d \cap \mathbb{R}_{\geq 0}^d$ 
and make the following assumption\footnote{This assumption is an analogue of 
the 
		assumption $rB^d \subseteq \constraint \subseteq RB^d$ in 
		\citep{flaxman2005online}.}: 
\begin{assump}
	\label{assump_on_discrete_bandit_K}
	There exists a positive number $r$ such that $rB^d_{\geq 0} \subseteq 
	\constraint$. 
\end{assump}
To construct $\constraint'$, for sufficiently small $ \delta $ such that $ 
\delta<\frac{r}{\sqrt{d}+1} $, we first set 
$\alpha = \frac{(\sqrt{d}+1)\delta}{r} < 1$, and
shrink $\constraint$ by a factor of $(1-\alpha)$ to obtain $\constraint_\alpha 
= (1-\alpha)\constraint$. Then we translate the shrunk set $\constraint_\alpha$ 
by $\delta 
\one$ (\cref{subfig:construction} in \cref{app:shrink_down_closed}). In other 
words, the set that we finally obtain is 
\begin{equation*}
\constraint' = 
\constraint_\alpha + 
\delta \one = (1-\alpha)\constraint + \delta \one.  \end{equation*}
In \cref{lem:shrink_down_closed}, we establish that $ \constraint' $ is 
indeed a $\delta$-interior of $\constraint$ and deduce a linear bound for $ 
d(\constraint,\constraint') $.

\begin{lemma}[Proof in \cref{app:shrink_down_closed}]
	\label{lem:shrink_down_closed}
	We assume \cref{assump_on_K,assump_on_discrete_bandit_K} and also assume 
	that 
	$\constraint$ is down-closed and 
	that $\delta$ is sufficiently small such that $\alpha = 
	\frac{(\sqrt{d}+1)\delta}{r} < 1$. The set 
	$\constraint'=(1-\alpha)\constraint+\delta\one$  
	is convex and compact. Moreover, $\constraint'$ is a down-closed 
	$\delta$-interior of $\constraint$ and satisfies $d(\constraint, 
	\constraint') 
	\leq [\sqrt{d}(\frac{R}{r}+1) + \frac{R}{r}] \delta$.
\end{lemma}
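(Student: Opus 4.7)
The plan is to establish each of the four claims about $\constraint'=(1-\alpha)\constraint+\delta\one$ in turn, leveraging \cref{assump_on_K,assump_on_discrete_bandit_K} and a single recurring convex-combination trick. The key constant to keep at hand is $\delta/\alpha = r/(\sqrt{d}+1)$, so that the point $z_0 \triangleq (r/(\sqrt{d}+1))\one$ has $\|z_0\| = r\sqrt{d}/(\sqrt{d}+1) < r$ and hence $z_0 \in rB^d_{\geq 0} \subseteq \constraint$.

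\textbf{Convexity, compactness, and $\constraint'\subseteq \constraint$.} First I observe that $\constraint'$ is the image of $\constraint$ under the affine map $x \mapsto (1-\alpha)x + \delta\one$, so it inherits convexity and compactness from \cref{assump_on_K}. For containment, I write any $x' = (1-\alpha)x + \delta\one \in \constraint'$ as the convex combination $(1-\alpha)x + \alpha z_0$, which lies in $\constraint$ because $x,z_0 \in \constraint$ and $\constraint$ is convex.

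\textbf{$\delta$-interior property.} For $x'=(1-\alpha)y+\delta\one\in\constraint'$ and $v\in B^d$, I rewrite
\begin{equation*}
x' + \delta v = (1-\alpha)y + \delta(\one + v) = (1-\alpha)y + \alpha \cdot \frac{r}{\sqrt{d}+1}(\one+v).
\end{equation*}
Because $\|v\|\le 1$, each coordinate of $\one+v$ is nonnegative and $\|\one+v\|\le \sqrt{d}+1$, so the second summand lies in $rB^d_{\geq 0}\subseteq\constraint$. Convexity of $\constraint$ then gives $x'+\delta v\in\constraint$, i.e., $B(x',\delta)\subseteq\constraint$.

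\textbf{Down-closedness.} I claim $\constraint'$ is down-closed with lower bound $\delta\one$. Given $w\in\mathbb{R}^d$ with $\delta\one \le w \le x'$ for some $x'=(1-\alpha)y+\delta\one\in\constraint'$, I set $w' \triangleq (w-\delta\one)/(1-\alpha)$. Then $0 \le w' \le y$, and since $\constraint$ is down-closed (with lower bound $0$, as $0\in\constraint$), we get $w'\in\constraint$. Hence $w=(1-\alpha)w'+\delta\one\in\constraint'$.

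\textbf{Bounding $d(\constraint,\constraint')$.} For any $x\in\constraint$, I take the candidate $x'=(1-\alpha)x+\delta\one\in\constraint'$ and estimate
\begin{equation*}
\|x-x'\| = \|\alpha x - \delta \one\| \le \alpha\|x\| + \delta\|\one\| \le \alpha R + \delta\sqrt{d} = \Bigl[\sqrt{d}\Bigl(\tfrac{R}{r}+1\Bigr)+\tfrac{R}{r}\Bigr]\delta,
\end{equation*}
using $\alpha = (\sqrt{d}+1)\delta/r$ and $R=\sup_{x\in\constraint}\|x\|$. Taking the supremum over $x\in\constraint$ yields the claimed bound.

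No step is genuinely hard; the main thing to be careful about is the $\delta$-interior verification, where one must exhibit the right convex decomposition and check both the nonnegativity of $\one+v$ and the norm bound $\|\one+v\|\le\sqrt{d}+1$ simultaneously. This is the only place where the specific choice $\alpha = (\sqrt{d}+1)\delta/r$ is forced, and it explains the $\sqrt{d}+1$ factor in the assumption $\delta < r/(\sqrt{d}+1)$.
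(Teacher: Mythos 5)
Your proof is correct, and for the central claim — that $\constraint'$ is a $\delta$-interior of $\constraint$ — it takes a genuinely different and considerably shorter route than the paper. The paper decomposes the boundary into $\partial^0\constraint$ (points with a zero coordinate, handled by the $\delta\one$ translation) and $\partial^*\constraint$ (strictly positive boundary points), and then runs a contradiction argument supported by an auxiliary sphere lemma to show that the nearest point of $\partial^*\constraint$ to any $a'\in(1-\alpha)\constraint$ lies at distance at least $(\sqrt{d}+1)\delta$ in a coordinatewise non-negative direction; this argument leans heavily on the down-closedness of $\constraint$. You instead exhibit the direct convex-combination certificate $x'+\delta v=(1-\alpha)y+\alpha\cdot\frac{r}{\sqrt{d}+1}(\one+v)$ and observe that $\frac{r}{\sqrt{d}+1}(\one+v)\in rB^d_{\geq 0}\subseteq\constraint$ because $\one+v\ge 0$ and $\|\one+v\|\le\sqrt{d}+1$; the same trick with $v=0$ gives $\constraint'\subseteq\constraint$. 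This uses only convexity of $\constraint$ and \cref{assump_on_discrete_bandit_K} — down-closedness of $\constraint$ enters your proof only where it must, namely to show $\constraint'$ is down-closed with lower bound $\delta\one$ — so your argument is both more elementary and slightly more general (the $\delta$-interior property holds without assuming $\constraint$ is down-closed). The remaining pieces (convexity and compactness as an affine image, and the bound $\|x-x'\|\le\alpha R+\sqrt{d}\delta=[\sqrt{d}(\frac{R}{r}+1)+\frac{R}{r}]\delta$) coincide with the paper's. I see nothing the paper's longer geometric argument buys that your decomposition does not.
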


\subsection{No-\texorpdfstring{$(1-1/e)$}{(1-1/e)}-Regret Biphasic Bandit 
Algorithm}
\label{sec:ee_tradeoff}
Our proposed bandit algorithm is based on the online algorithm \Alg in 
\cref{sec:online}. Precisely, we want to replace the stochastic gradients in 
\cref{alg:one-shot} with the one-point gradient estimators, and run the 
modified 
algorithm on $\constraint'$, a proper $\delta$-interior of the constraint set 
$\constraint$. 
Note that the one-point estimator requires that the point at which we estimate 
the gradient (\emph{i.e.}, $x$) must be identical to the point that we 
play (\emph{i.e.}, $x + \delta u$), if we ignore the random $\delta u$. In 
\cref{alg:one-shot}, however, we 
play point $x_q$ 
but obtain estimated gradient at other points 
$x_q^{(k')}$ (\cref{ln:play}). 
This suggests that \cref{alg:one-shot} cannot be extended to the bandit setting 
via the 
one-point gradient estimator directly.

To circumvent this limitation, we propose a biphasic approach that categorizes 
the plays into the exploration and exploitation phases. To motivate this 
biphasic method, recall that in \cref{alg:one-shot}, we need to play $ x_q $  
to gain high rewards (exploitation), whilst we observe $\tnabla 
F_t(x_q^{(k')})$ to obtain gradient information (exploration). So in our 
biphasic approach, we expend a large portion of plays on 
exploitation (play $x_q$, so we can still get high rewards) and a small portion 
of plays on exploring the gradient (play $x_q^{(k')}$ to get one-point gradient 
estimators, so we can still obtain sufficient information). 

%

To be precise, we divide the $T$ objective functions into $Q$ equisized blocks 
of size $L$, where $L = T/Q$. Each block is subdivided into two phases.
As shown in \cref{alg:bandit},
we randomly choose $K \ll L$ functions for exploration (\cref{ln:exploration}) 
and use the remaining 
$(L-K)$ functions for exploitation (\cref{ln:exploitation}). 

We describe our algorithm formally in \cref{alg:bandit}. We also note that for 
a general constraint set $\constraint$ with a proper $\delta$-interior 
$\constraint'$ such that $d(\constraint,\constraint')\leq c_1\delta^\gamma$, 
\emph{\cref{thm:bandit}} (\cref{app:general_K}) shows a $(1-1/e)$-regret bound 
of
$O(T^{\frac{3+5\min \{1,\gamma\}}{3+6\min\{1,\gamma\}}})$. Moreover, with 
\cref{lem:shrink_down_closed}, this result can be 
extended to down-closed constraint sets $\constraint$, as shown in 
\cref{thm:downclose_bandit}.


\begin{algorithm}[htb]
	\caption{\AlgB}
	\begin{algorithmic}[1]
		\Require smoothing radius $\delta$, $\delta$-interior $\constraint'$ 
		with 
		lower bound $\lb$, horizon $T$, block size $L$,
		the number of exploration steps per block $K$, online linear 
		maximization oracles 
		on $\constraint'$: $\mathcal{E}^{(1)},\cdots,\mathcal{E}^{(K)}$, 
		step sizes $\rho_k \in (0,1), \eta_k \in (0,1)$, the number of blocks 
		$Q=T/L$
		\Ensure $y_1, y_2, \dots$
		\For{$q=1,2,\dots, Q$}
		\State $d_q^{(0)}\gets 0$, $x_q^{(1)}\gets \underline{u}$
		\State For $ k=1,2,\dots, K$, let $v_q^{(k)} \in \constraint'$ be the 
		output of $\mathcal{E}^{(k)}$ in round $q$, $x_q^{(k+1)}\gets x_q^{(k)} 
		+ \eta_k (v_q^{(k)} - \underline{u}).$ Set $ x_q\gets x_q^{(K+1)}$
		\State Let $(t_{q,1}, \dots, t_{q,L})$ be a random permutation of 
		$\{(q-1)L+1,\cdots, qL\}$
		\For{$t=(q-1)L+1,\cdots, qL$} 
		\State If $t \in \{ t_{q,1}, \cdots, t_{q,K} \}$, find the 
		corresponding $k'\in[K]$ 
		such that $t=t_{q,k'}$, play $y_t = y_{t_{q,k'}} = x_q^{(k')} + \delta 
		u_{q,k'}$ for 
		$F_t$ (\ie, $F_{t_{q,k'}}$), where $u_{q,k'}\sim S^{d-1}$ 
		\label{ln:exploration} \Comment{\emph{Exploration}}
		\State If $ t \in \{(q-1)L+1,\cdots, qL\}
		\setminus \{ t_{q,1}, \cdots, 
		t_{q,K} \} $, play $y_t = x_q$ for $F_t$ 
		\label{ln:exploitation} 
		\Comment{\emph{Exploitation}}
		\EndFor
		\State For $k=1,2,\dots, K$, $g_{q,k} \gets \frac{d}{\delta} 
		F_{t_{q,k}}(y_{t_{q,k}})u_{q,k}$, $d_q^{(k)}\gets 
		(1-\rho_k)d_q^{(k-1)}+\rho_k g_{q,k} $, compute $\langle v_q^{(k)}, 
		d_q^{(k)}\rangle$ as reward for 
		$\mathcal{E}^{(k)}$, and feed back $d_q^{(k)}$ to $\mathcal{E}^{(k)}$
		\EndFor
	\end{algorithmic}
	\label{alg:bandit}
\end{algorithm}


\begin{assump}
	\label{bandit_assump_on_f}
	Every objective function $F_t$ satisfies that 
	$\sup_{x \in \constraint} | F_t(x) 	|  \leq M_1$. 
	
\end{assump}

\begin{theorem}[Proof in \cref{app:donw-closed_K}] \label{thm:downclose_bandit}
	We assume 
	\cref{assump_on_oracle,assump_on_f,assump_on_K,bandit_assump_on_f,assump_on_discrete_bandit_K},
	 and also assume that $\constraint$ is down-closed. 
	If we generate $\constraint'$ as
	in \cref{lem:shrink_down_closed}, and set $\delta = 
	\frac{r}{\sqrt{d}+2}T^{-\frac{1}{9}}, 
	L = T^{\frac{7}{9}}, K = T^{\frac{2}{3}}, \eta_k = \frac{1}{K}, 
	\rho_k=\frac{2}{(k+2)^{2/3}}$, then $y_t \in \constraint, \forall t$, and 
	the expected $(1-1/e)$-regret of 
	\cref{alg:bandit} is at most
	\begin{equation*}
	\begin{split}
	\expect[\mathcal{R}_T] 
	\leq& N T^{\frac{8}{9}} + 
	\frac{3r[2L_1^2+(3L_2R+2L_1)^2]}{4^{1/3}(\sqrt{d}+2)}T^{\frac{2}{3}} + 
	\frac{L_2D^2}{2}T^{\frac{1}{3}},
	\end{split}    
	\end{equation*}
	where $N=\frac{(1-1/e)r}{\sqrt{d}+2}[\sqrt{d}(\frac{R}{r}+1)+\frac{R}{r}] 
	L_1 + \frac{(2-1/e)r}{\sqrt{d}+2}L_1+ 
	2M_1+\frac{3\cdot4^{1/6}(\sqrt{d}+2)d^2M_1^2}{r}+\frac{3(\sqrt{d}+2)D^2}{4r}+C$.
\end{theorem}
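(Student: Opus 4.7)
The plan is to reduce the analysis of \AlgB on the down-closed set $\constraint$ to a \Alg-style analysis of the $\delta$-smoothed objectives $\hat{F}_{t,\delta}$ restricted to the shrunk interior $\constraint'$ from \cref{lem:shrink_down_closed}, and then to balance four sources of regret by tuning $\delta, L, K$. Let $x^\star\in\argmax_{x\in\constraint}\sum_tF_t(x)$ and let $\tilde{x}^\star\in\constraint'$ be a nearest point of $\constraint'$ to $x^\star$. By \cref{lem:shrink_down_closed} and the $L_1$-Lipschitzness of $F_t$, $\sum_tF_t(x^\star)-\sum_tF_t(\tilde{x}^\star)\le L_1T[\sqrt{d}(R/r+1)+R/r]\delta$; by \cref{lem:smooth_property}, $F_t$ may be replaced by $\hat{F}_{t,\delta}$ at each endpoint at cost $L_1\delta T$; and the $QK$ exploration rounds play $x_q^{(k')}+\delta u_{q,k'}$ in place of $x_q$, losing at most $2M_1$ per round, for a total exploration cost of $2M_1QK$. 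After these reductions, the remaining piece to control is the ``virtual'' regret
\[
L\Bigl[(1-1/e)\max_{x\in\constraint'}\sum_{q=1}^Q\hat{\bar F}_{q,\delta}(x)-\sum_{q=1}^Q\hat{\bar F}_{q,\delta}(x_q)\Bigr],
\]
with $\hat{\bar F}_{q,\delta}=\frac{1}{L}\sum_{t\in\text{block }q}\hat{F}_{t,\delta}$.

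Inside each block I would run a Frank--Wolfe-type recursion on $\hat{\bar F}_{q,\delta}$, which inherits monotonicity, continuous DR-submodularity, and $L_2$-smoothness from the $F_t$'s. Starting at $x_q^{(1)}=\underline{u}$ with $\eta_k=1/K$ gives $x_q^{(K+1)}=\frac{1}{K}\sum_kv_q^{(k)}\in\constraint'$, and combining monotonicity (via the pointwise maximum $\tilde{x}^\star\vee x_q^{(k)}$), concavity along non-negative directions, and smoothness yields the standard bound
\[
(1-1/e)\hat{\bar F}_{q,\delta}(\tilde{x}^\star)-\hat{\bar F}_{q,\delta}(x_q)\le \frac{D}{K}\sum_{k=1}^K\bigl\|d_q^{(k)}-\nabla\hat{\bar F}_{q,\delta}(x_q^{(k)})\bigr\|+\frac{1}{K}\sum_{k=1}^Kr_q^{(k)}+\frac{L_2D^2}{2K},
\]
where $r_q^{(k)}$ is the instantaneous regret of oracle $\mathcal{E}^{(k)}$ at block $q$. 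Summing over $q\in[Q]$ and invoking \cref{assump_on_oracle}, the oracle contribution to the $T$-round regret is $LC\sqrt{Q}=CT^{8/9}$, while the last term contributes $L_2D^2T^{1/3}/2$.

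The technical heart is the expected squared gradient error $\expect\bigl\|d_q^{(k)}-\nabla\hat{\bar F}_{q,\delta}(x_q^{(k)})\bigr\|^2$. The one-point estimator $g_{q,k}=(d/\delta)F_{t_{q,k}}(y_{t_{q,k}})u_{q,k}$ is conditionally unbiased for $\nabla\hat{F}_{t_{q,k},\delta}(x_q^{(k)})$ but has second moment of order $d^2M_1^2/\delta^2$, much larger than in \cref{thm:one_shot}. Moreover, because the permutation progressively reveals indices, the conditional expectation of $g_{q,k}$ given the past is the gradient of a residual average $\hat{\bar F}_{q,k,\delta}$ which differs from $\hat{\bar F}_{q,\delta}$ by $O(L_1 k/L)$, while the displacement of $x_q^{(k)}$ from earlier iterates $x_q^{(k')}$ contributes an additional $O(L_2R\,k/K)$ drift through smoothness. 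Feeding these facts into the momentum recursion $d_q^{(k)}=(1-\rho_k)d_q^{(k-1)}+\rho_k g_{q,k}$ with $\rho_k=2/(k+2)^{2/3}$, an induction analogous to that behind \cref{lem:delta_final} yields
\[
\expect\bigl\|d_q^{(k)}-\nabla\hat{\bar F}_{q,\delta}(x_q^{(k)})\bigr\|^2=O\!\Bigl(\frac{d^2M_1^2}{\delta^2 k^{2/3}}+\Bigl(\frac{k}{L}\Bigr)^{2}G\Bigr),\quad G=(L_2R+2L_1)^2.
\]
Taking square roots, weighting by $D/K$, summing over $k$ and $q$, and multiplying by $L$ gives a gradient-error contribution of $O\bigl(dM_1\delta^{-1}K^{-1/3}T+(K/L)^{1/2}G^{1/2}T\bigr)=O(T^{8/9})$ at the announced $\delta=\Theta(T^{-1/9})$, $K=T^{2/3}$, $L=T^{7/9}$.

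The hard part will be this third step: the $1/\delta^2$ blow-up of the one-point estimator variance and the $k/L$ residual-average bias must be balanced simultaneously through the momentum schedule $\rho_k$, the exploration budget $K$, and the smoothing radius $\delta$, and it is this balance that forces the exponents $2/3$, $7/9$, $-1/9$ in the final parameter choices. Once the gradient-error bound is in place, combining it with the shrinking cost $O(\sqrt{d}L_1T^{8/9})$, smoothing cost $O(L_1T^{8/9})$, exploration cost $O(M_1T^{8/9})$, oracle cost $CT^{8/9}$, and smoothness correction $L_2D^2T^{1/3}/2$, and folding the constants into the $N$ of the statement, delivers the claimed $(1-1/e)$-regret bound.
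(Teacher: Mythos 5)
Your proposal follows essentially the same route as the paper: the paper proves this theorem by checking that \cref{lem:shrink_down_closed} supplies a $\delta$-interior with $c_1=\sqrt{d}(\tfrac{R}{r}+1)+\tfrac{R}{r}$ and $\gamma=1$ and then invoking the general bound of \cref{thm:bandit}, whose proof is exactly your decomposition -- shrinking cost $c_1L_1T\delta$, smoothing cost $O(L_1T\delta)$, exploration cost $2M_1QK$, oracle cost $L\,C\sqrt{Q}=CT^{8/9}$, and a momentum-based gradient-error recursion with the $d^2M_1^2/\delta^2$ one-point-estimator variance. One small caution: your intermediate expression $(K/L)^{1/2}G^{1/2}T$ is an arithmetic slip (as written it is $T^{17/18}$, not $O(T^{8/9})$); summing your own displayed bound $\tfrac{D}{K}\sum_k(k/L)\sqrt{G}$ over blocks actually gives $O(TK/L)=O(T^{8/9})$, and the paper's \cref{lem:bandit_delta_final} obtains the cleaner form $\expect[\Delta_q^{(k)}]\le N_0/(k+3)^{2/3}$ in which the drift contribution is damped to $O(G/k^{2/3})$ by the schedule $\rho_k=2/(k+2)^{2/3}$ together with $L\ge 2K$, rather than accumulating as $(k/L)^2G$.
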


\section{Bandit Submodular Set Maximization}
In this section we aim to solve the problem of bandit submodular set 
maximization by 
lifting it to the continuous domain.
Let objective functions $f_1,\cdots,f_T : 2^{\Omega} \to \bR_{\geq 0}$ be a 
sequence of  monotone 
submodular set 
functions defined on a common ground set $ \Omega = \{1,\dots,d\} $. We also 
let $ 
\cI $ denote the matroid constraint, and $ \constraint $ be the matroid 
polytope of $\matroid$, \emph{i.e.}, $\constraint = \conv\{\one_I: I \in 
\matroid\} \subseteq [0,1]^d$ \citep{calinescu2011maximizing}, where $ \conv $ 
denotes the convex hull. 

\subsection{An Impossibility Result}\label{sec:hardness}
A natural idea  is that at each round $t$, we apply \AlgB, the continuous 
algorithm in \cref{sec:ee_tradeoff}, on $F_t$ subject to $\constraint$, where 
$F_t$ is the multilinear extension 
of the discrete objective function $f_t$. Then we get a 
fractional 
solution $y_t\in \constraint$, round it to a set $Y_t \in \mathcal{I}$, and 
play $Y_t$ for 
$f_t$.

For the exploitation phase,
we will use a lossless rounding scheme such that $f_t(Y_t) \geq F_t(y_t)$, so 
we will not get lower rewards after the rounding. 
Instances of such a lossless rounding scheme include  
pipage rounding~\citep{ageev2004pipage, calinescu2011maximizing} and the 
contention resolution scheme~\citep{vondrak2011submodular}.

In the exploration phase, we need to
use the reward $f_t(Y_t)$ to obtain an unbiased gradient estimator of the 
smoothed version of 
$F_t$. As the one-point estimator $\frac{d}{\delta}F(x 
+ \delta u)u$ in \cref{alg:bandit} is unbiased, 
we require the (random) rounding scheme $ \round_\cI:[0,1]^d \to \cI $ 
to satisfy the 
following unbiasedness condition
\begin{equation}\label{eq:sampling_scheme}
\expect[f(\round_\cI(x))] = F(x),\quad \forall x\in [0,1]^d
\end{equation}
for any submodular set function $ f $ on the ground set $ \Omega $ and its 
multilinear extension $ F $.


Since we have no \emph{a priori} knowledge of the objective function $f_{t}$ 
before playing a subset for it, such a rounding scheme $\round_\cI$ should 
\emph{not} depend on the function choice 
$f$. In other words, we need to find an \emph{independent} $\round_\cI$ such 
that 
\cref{eq:sampling_scheme} holds for any submodular function $f$ defined on 
$\Omega$. 

We first review the random rounding scheme $\randomround:[0,1]^d \to \cI $
\begin{equation}
\label{eq:sample}
\begin{cases}
i \in \randomround(x) &  \text{with probability } x_i\,;\\
i \notin \randomround(x) &  \text{with probability } 1- x_i\,.
\end{cases}
\end{equation}
In other words, each element $ i\in \Omega $ is included with an independent 
probability $ x_i $, where $x_i$ is the $i$-th coordinate of $x$.
$ \randomround $ satisfies the unbiasedness requirement 
\cref{eq:sampling_scheme}.
However, its range is $ 2^\Omega $ in general, so the 
rounded set may fall outside of $\matroid$. In fact, 
as shown in \cref{lem:sampling_scheme}, there exists a  matroid $\matroid$ for 
which we cannot 
find a proper unbiased rounding scheme whose range is contained in $ \cI $.

\begin{lemma}[Proof in \cref{app:sampling_scheme}]\label{lem:sampling_scheme}
	There exists a matroid $\matroid$ for which there is no   
	rounding scheme 
	$\round: 
	[0,1]^d \to \matroid$ whose construction does not 
	depend on the function $ f $ and which
	satisfies \cref{eq:sampling_scheme} 
	for any submodular set function $ f $. 
\end{lemma}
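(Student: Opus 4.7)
The plan is to exhibit a very small matroid and two submodular test functions that jointly block any $f$-independent rounding scheme. Concretely, I will take the rank-$1$ uniform matroid on $\Omega = \{1,2\}$, so $\matroid = \{\emptyset,\{1\},\{2\}\}$, and evaluate \cref{eq:sampling_scheme} at the single point $x = (1/2,1/2)$. Assume for contradiction that a rounding scheme $\round:[0,1]^2 \to \matroid$ exists whose law $p_S(x) = \Pr[\round(x)=S]$ does not depend on $f$ and satisfies $\expect[f(\round(x))] = F(x)$ for every submodular $f$ on $\Omega$.

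The first step pins down the marginals. Feeding in the two modular (hence submodular) functions $f_i(S) = \mathbf{1}[i \in S]$ with multilinear extensions $F_i(x) = x_i$, and using that $\{1,2\}\notin \matroid$, \cref{eq:sampling_scheme} at $x=(1/2,1/2)$ forces $p_{\{i\}}(x) = x_i = 1/2$ for $i \in \{1,2\}$, and hence $p_\emptyset(x) = 0$. The second step breaks the unbiasedness using the monotone submodular rank function $f(S) = \mathbf{1}[S \neq \emptyset]$, whose multilinear extension is $F(x) = 1 - (1-x_1)(1-x_2)$, which equals $3/4$ at $x=(1/2,1/2)$. But by the first step $\expect[f(\round(x))] = p_{\{1\}} + p_{\{2\}} = 1$, contradicting $F(x) = 3/4$.

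I expect no real technical obstacle; the only point requiring care is choosing test functions that expose the conflict, and the two above do this transparently. The modular tests force the marginals of an independent Bernoulli product measure, while the matroid constraint forbids the atom $\{1,2\}$ on which independent sampling would have placed probability $1/4$. Any feasible rounding is therefore forced to redistribute that missing mass onto $\{1\}$ or $\{2\}$, which then overshoots the rank function's multilinear extension by exactly the amount needed to reach the contradiction.
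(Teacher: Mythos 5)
Your proof is correct. It uses the same matroid as the paper (the rank-one uniform matroid on $\{1,2\}$, with $\matroid=\{\varnothing,\{1\},\{2\}\}$) and the same high-level strategy of deriving incompatible constraints on the law of $\round(x)$ from several submodular test functions, but the execution differs. The paper takes two one-parameter families of functions (with values $a,b$ in different configurations on $\{1\},\{2\},\{1,2\}$), matches coefficients in $a$ and $b$ to solve for $p_{\{1\}}(x)$ and $p_{\{2\}}(x)$ in each case, and observes that the two solutions disagree whenever $x_1x_2\neq 0$; this yields the conclusion at essentially every interior point. You instead fix the single point $x=(1/2,1/2)$, use the two modular indicators $f_i(S)=\mathbf{1}[i\in S]$ (submodular and monotone, so admissible) to pin down $p_{\{1\}}=p_{\{2\}}=1/2$ and $p_\varnothing=0$, and then get a clean numerical contradiction $1\neq 3/4$ from the rank function $f(S)=\mathbf{1}[S\neq\varnothing]$. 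Your version is slightly more economical --- no coefficient matching over a parameter family, and only one evaluation point is needed --- and it transparently isolates the obstruction: the matroid forbids the atom $\{1,2\}$ carrying mass $1/4$ under any product measure with the forced marginals. The paper's version gives marginally more information (the explicit, mutually inconsistent formulas for the would-be rounding probabilities as functions of $x$), but both establish the lemma.
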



\subsection{Responsive Bandit Algorithm}
The impossibility result~\cref{lem:sampling_scheme} shows that the one-point 
estimator may be incapable of solving the general \BSM problem. As a result, 
we study a slightly relaxed setting termed  the responsive bandit submodular 
maximization problem (\RBSM). Let $ X_t $ be the subset that we play at the $ t 
$-th round. 
The only 
difference between the responsive bandit setting and the vanilla bandit setting 
is that in the responsive setting, if $X_t \notin \matroid$,
we can still observe the function value $f_t(X_t)$ 
as feedback, while the received reward at round $t$ is $ 0 $ 
(since the subset that we play violates the 
constraint $ \matroid $). In other words, the environment is 
always responsive to the player's decisions, no matter whether $ X_t $ is in $ 
\matroid $
or not. 

We note that the \RBSM problem has broad applications in both 
theory and practice. In theory, \RBSM can be regarded as a relaxation of \BSM, 
which helps us to better understand the nature of \BSM. In practice, the 
responsive model (not only for submodular maximization or bandit) has 
potentially many applications when a decision cannot be committed, while we can 
still get the potential outcome of the decision as feedback. 
For example, suppose that we have a replenishable inventory of items where 
customers arrive (in an online fashion) with a utility function unknown to us. 
We need to
allocate a collection of items to each customer, and the goal is to maximize the total 
utility (reward) of all the customers. We may use a partition matroid  to model 
diversity (in terms of category, time, \emph{etc}). In the \RBSM model, we 
cannot allocate the collection of items which violates the constraint to the 
customer, but we can use it as a questionnaire, and the customer will tell us 
the potential utility if she received those items. The feedback will help us to 
make better decisions in the future. Similar examples include portfolio 
selection when the investment choice is too risky, \emph{i.e.}, violates  the 
recommended constraint set, we may stop trading and thus get no reward on that 
trading period, but at the same time observe the potential reward if we 
invested in that way.

Now, we turn to propose our algorithm. As discussed in 
\cref{sec:hardness}, we want to solve the problem of bandit 
submodular set maximization by applying \cref{alg:bandit} on the multilinear 
extensions $F_t$ with different rounding schemes. Precisely, in the 
\emph{responsive} setting, we use the $ \randomround $ \cref{eq:sample} in the 
exploration phase to guarantee that we can always obtain unbiased gradient 
estimators, and use a lossless rounding scheme $ \losslessround $ in the 
exploitation phase to receive high rewards. 
We present \AlgD in \cref{alg:discrete_bandit}, and show that it achieves a 
$(1-1/e)$-regret bound of $O(T^{8/9})$.


\begin{algorithm}[htb]
	\caption{\AlgD}
	\begin{algorithmic}[1]
		\Require matroid constraint $\mathcal{I}$, matroid polytope 
		$\constraint$, smoothing radius $\delta$, $\delta$-interior 
		$\constraint'$ 
		with lower bound $\lb$, horizon $T$, block size $L$, 
		the number of exploration steps per block $K$, online linear 
		maximization oracles 
		on $\constraint'$: $\mathcal{E}^{(1)},\cdots,\mathcal{E}^{(K)}$, 
		steps sizes $\rho_k \in (0,1), \eta_k \in (0,1)$, the number of blocks 
		$Q=T/L$
		\Ensure $Y_1, Y_2, \dots$
		\For{$q=1,2,\dots, Q$}
		\State $d_q^{(0)}\gets 0$, $x_q^{(1)}\gets \lb $
		\State For $ k=1,2,\dots, K$, let $v_q^{(k)} \in \constraint'$ be the 
		output of $\mathcal{E}^{(k)}$ in round $q$, $x_q^{(k+1)}\gets x_q^{(k)} 
		+ \eta_k (v_q^{(k)} - \lb).$ Set $x_q\gets x_q^{(K+1)}$
		\State Let $(t_{q,1}, \dots, t_{q,L})$ be a random permutation of $ 
		\{(q-1)L+1,\cdots, qL\}  $
		\For{$t=(q-1)L+1,\cdots, qL$}
		\State If $t \in \{ t_{q,1}, \cdots, t_{q,K} \}$, find the 
		corresponding $k' \in [K]$ such that $t=t_{q,k'}$, play $Y_t = 
		Y_{t_{q,k'}} = \randomround(y_{t_{q,k'}})$ for $f_t$ (\ie, 
		$f_{t_{q,k'}}$), where $y_{t_{q,k'}} = x_q^{(k')} + \delta u_{q,k'}, 
		u_{q,k'} \sim S^{d-1}$. 
		If $Y_{t} \in 
		\matroid$, get reward $f_{t}(Y_{t})$; otherwise, get 
		reward 0. 
		\Comment{\emph{Exploration}}
		\State If $ t \in \{(q-1)L+1,\cdots, qL\} \setminus \{ t_{q,1}, 
		\cdots, 
		t_{q,K} \} $, play $Y_t = \losslessround(y_t)$ for $f_t$, where $y_{t} 
		= 
		x_q$ \Comment{\emph{Exploitation}}
		\EndFor
		\State For $ k=1,2,\dots, K$, $g_{q,k} \gets \frac{d}{\delta} 
		f_{t_{q,k}}(Y_{t_{q,k}})u_{q,k}$, $d_q^{(k)}\gets 
		(1-\rho_k)d_q^{(k-1)}+\rho_k g_{q,k} $, compute $\langle v_q^{(k)}, 
		d_q^{(k)}\rangle$ as reward for 
		$\mathcal{E}^{(k)}$, and feed back $d_q^{(k)}$ to $\mathcal{E}^{(k)}$  
		\EndFor
	\end{algorithmic}
	\label{alg:discrete_bandit}
\end{algorithm}

\begin{assump}
	\label{assump_on_discrete_bandit_f}
	Every objective function $f_t$ is monotone submodular with $\sup_{X 
	\subseteq \Omega} |f_t(X)| \le M_1$.
\end{assump}

\begin{theorem}[Proof in \cref{app:discrete_bandit}]
	\label{thm:discrete_bandit}
	Under 
	\cref{assump_on_oracle,assump_on_discrete_bandit_K,assump_on_discrete_bandit_f},
	if we generate 
	$\constraint'$ as in \cref{lem:shrink_down_closed}, and set $\delta = 
	\frac{r}{\sqrt{d}+2}T^{-\frac{1}{9}}, 
	L = T^{\frac{7}{9}}, 
	K = T^{\frac{2}{3}}, \eta_k = \frac{1}{K}, \rho_k=\frac{2}{(k+2)^{2/3}}$, 
	then 
	in the responsive setting, the expected $(1-1/e)$-regret of 
	\cref{alg:discrete_bandit} is at most
	\begin{equation*}
	\begin{split}
	\expect[\mathcal{R}_T] 
	\leq& N T^{\frac{8}{9}} + 
	\frac{3r[2L_1^2+(3\sqrt{d}L_2+2L_1)^2]}{4^{1/3}(\sqrt{d}+2)}T^{\frac{2}{3}} 
	+ \frac{L_2d}{2}T^{\frac{1}{3}},
	\end{split}    
	\end{equation*}
	where $N=\frac{(1-1/e)r}{\sqrt{d}+2}[\frac{d}{r}+\sqrt{d}(1+\frac{1}{r})] 
	L_1 + \frac{(2-1/e)r}{\sqrt{d}+2}L_1+ 
	3M_1+\frac{3\cdot4^{2/3}(\sqrt{d}+2)d^2M_1^2}{r}+\frac{3(\sqrt{d}+2)d}{4r}+C$,
	 $L_1 = 2M_1\sqrt{d}, L_2 = 4M_1\sqrt{d(d-1)}$.
\end{theorem}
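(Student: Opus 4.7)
The plan is to reduce the responsive discrete problem to the continuous bandit analysis of \cref{thm:downclose_bandit} applied to the multilinear extensions $F_t$ of the objective functions $f_t$, showing that the two discrete rounding schemes preserve the structural properties on which that analysis relies. First, I would verify the continuous prerequisites for each $F_t$: monotonicity and DR-submodularity are immediate from the definition; the partials $\partial_i F_t(x) = F_t(x\mid x_i{=}1) - F_t(x\mid x_i{=}0)$ are bounded by $2M_1$, giving $\|\nabla F_t\|\leq 2M_1\sqrt{d}=L_1$; the off-diagonal Hessian entries have absolute value at most $4M_1$ while diagonal entries vanish by multilinearity, giving $L_2 = 4M_1\sqrt{d(d-1)}$; and $\sup_{x\in\constraint}|F_t(x)|\leq M_1$ holds. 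I would also note that for the matroid polytope,
\begin{equation*}
\max_{X\in\matroid}\sum_{t=1}^T f_t(X) \;=\; \max_{x\in\constraint}\sum_{t=1}^T F_t(x),
\end{equation*}
with ``$\leq$'' following from $\one_X\in\constraint$ and $F_t(\one_X)=f_t(X)$, and ``$\geq$'' from applying the lossless rounding scheme to a maximizer of the continuous objective.

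Second, the distinguishing ingredient of the responsive model is that the estimator
\begin{equation*}
g_{q,k} = \tfrac{d}{\delta}\, f_{t_{q,k}}\bigl(\randomround(y_{t_{q,k}})\bigr)\, u_{q,k}
\end{equation*}
remains unbiased for $\nabla \hat{F}_{t_{q,k},\delta}(x_q^{(k)})$. The crucial point is that responsiveness lets us observe $f_t(Y_t)$ whether or not $Y_t\in\matroid$, which is exactly what removes the selection bias that \cref{lem:sampling_scheme} rules out in the vanilla bandit setting. Conditioning on $y_{t_{q,k}}$, the definition of the multilinear extension gives $\expect[f_{t_{q,k}}(\randomround(y_{t_{q,k}}))\mid y_{t_{q,k}}] = F_{t_{q,k}}(y_{t_{q,k}})$; averaging then over $u_{q,k}\sim S^{d-1}$ produces the one-point identity from \cref{sec:smoothing_trick}. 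The estimator satisfies $\|g_{q,k}\|\leq dM_1/\delta$ and an analogous variance bound, instantiating $M_0$ and $\sigma_0^2$ of \cref{assump_on_estimate} with explicit $\delta^{-1}$ factors.

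Third, I would decompose the expected regret. By the $\max$-equivalence above and \cref{lem:shrink_down_closed} (which costs an additive $(1-1/e) L_1 T \cdot d(\constraint,\constraint') = O(T\delta)$ when replacing the benchmark by $\constraint'$), the regret is at most
\begin{equation*}
(1-1/e)\max_{x\in\constraint'}\sum_{t=1}^T F_t(x) \;-\; \sum_{t=1}^T \expect[\textnormal{reward}_t] \;+\; O(T\delta).
\end{equation*}
Using $\expect[f_t(\losslessround(x_q))]\geq F_t(x_q)$ in exploitation and the trivial lower bound $\textnormal{reward}_t\geq 0$ in exploration, the total received reward is at least $\sum_{t\in\textnormal{expl}} F_t(x_{q(t)})$; peeling off the $KQ$ exploration rounds contributes an additional $M_1 KQ = M_1 T^{8/9}$. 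What remains is the idealized regret
\begin{equation*}
(1-1/e)\max_{x\in\constraint'}\sum_{t=1}^T F_t(x) \;-\; \sum_{t=1}^T \expect[F_t(x_{q(t)})],
\end{equation*}
which the proof of \cref{thm:downclose_bandit} bounds by $O(T^{8/9})$, since the Frank-Wolfe iterates $x_q$ are produced with identical step sizes and the same unbiased estimators $g_{q,k}$ as in \AlgB.

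The principal technical obstacle is propagating the $\delta^{-1}$ dependence of $\|g_{q,k}\|$ and $\expect\|g_{q,k}\|^2$ through the inverted bell-shaped gradient-error control of \cref{lem:delta_final}, and verifying that with $\delta=\Theta(T^{-1/9})$, $L=T^{7/9}$, $K=T^{2/3}$ every source of error---the one-point smoothing bias $L_1\delta$, the residual-average discrepancy between $\bar{F}_{q,k}$ and $\bar{F}_q$ arising from revealed permutation entries, the variance-reduction error in $d_q^{(k)}$, the aggregate online-oracle regret across $K$ oracles, the shrinking discrepancy $O(T\delta)$, and the $M_1 KQ$ exploration loss---simultaneously balances to the claimed $O(T^{8/9})$. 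The rounding steps themselves are comparatively routine: responsiveness decouples unbiasedness from feasibility, and lossless rounding is used only in one direction and only inside an expectation, so no new pointwise bound is required.
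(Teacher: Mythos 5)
Your proposal is correct and follows essentially the same route as the paper's proof: verify the continuous prerequisites via the multilinear extension (giving $L_1=2M_1\sqrt{d}$, $L_2=4M_1\sqrt{d(d-1)}$), use lossless rounding in exploitation plus the zero-reward floor on the $KQ$ exploration rounds to reduce the discrete regret to the continuous one at an additive cost of $M_1KQ = M_1T^{8/9}$, and rerun the analysis of \cref{thm:downclose_bandit} with the composite estimator $g_{q,k}$ made unbiased by responsiveness. The only quantitative detail your plan leaves implicit is that the random rounding contributes a second variance term on top of the sphere sampling, so $\sigma^2 = L_1^2 + 2d^2M_1^2/\delta^2$ rather than $L_1^2 + d^2M_1^2/\delta^2$, which is precisely what turns the constant $4^{1/6}$ of \cref{thm:downclose_bandit} into the $4^{2/3}$ appearing here.
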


\section{Conclusion}
In this paper, by proposing a series of novel methods including the 
\emph{blocking procedure} and the \emph{permutation methods}, we  
developed  \Alg  for the \OCSM problem, which requires 
only 
one stochastic gradient evaluation per function and still achieves a 
$(1-1/e)$-regret bound of $O(T^{4/5})$. We then introduced the 
\emph{biphasic method} and the notion of $\delta$-\emph{interior}, to extend 
\Alg to 
\AlgB for the \BCSM problem. Finally, we introduced the responsive model and 
the corresponding \AlgD Algorithm for the \RBSM problem. We proved  that both 
\AlgB and  \AlgD
attain a $(1-1/e)$-regret bound of $O(T^{8/9})$.


\subsubsection*{Acknowledgments}
This work is partially supported by the Google PhD Fellowship, NSF 
(IIS-1845032), ONR (N00014-19-1-2406) and AFOSR (FA9550-18-1-0160). 
We would   like to thank Marko~Mitrovic for his valuable 
comments 
and Zheng~Wei for help preparing some of the illustrations.

\bibliographystyle{plainnat}
\bibliography{main}

\begin{thebibliography}{50}
\providecommand{\natexlab}[1]{#1}
\providecommand{\url}[1]{\texttt{#1}}
\expandafter\ifx\csname urlstyle\endcsname\relax
  \providecommand{\doi}[1]{doi: #1}\else
  \providecommand{\doi}{doi: \begingroup \urlstyle{rm}\Url}\fi

\bibitem[Abernethy et~al.(2008)Abernethy, Hazan, and
  Rakhlin]{abernethy2008competing}
Jacob~D Abernethy, Elad Hazan, and Alexander Rakhlin.
\newblock Competing in the dark: An efficient algorithm for bandit linear
  optimization.
\newblock In \emph{COLT}, pages 263--274, 2008.

\bibitem[Agarwal et~al.(2010)Agarwal, Dekel, and Xiao]{agarwal2010optimal}
Alekh Agarwal, Ofer Dekel, and Lin Xiao.
\newblock Optimal algorithms for online convex optimization with multi-point
  bandit feedback.
\newblock In \emph{COLT}, pages 28--40. Citeseer, 2010.

\bibitem[Agarwal et~al.(2011)Agarwal, Foster, Hsu, Kakade, and
  Rakhlin]{agarwal2011stochastic}
Alekh Agarwal, Dean~P Foster, Daniel~J Hsu, Sham~M Kakade, and Alexander
  Rakhlin.
\newblock Stochastic convex optimization with bandit feedback.
\newblock In \emph{NIPS}, pages 1035--1043, 2011.

\bibitem[Ageev and Sviridenko(2004)]{ageev2004pipage}
Alexander~A Ageev and Maxim~I Sviridenko.
\newblock Pipage rounding: A new method of constructing algorithms with proven
  performance guarantee.
\newblock \emph{Journal of Combinatorial Optimization}, 8\penalty0
  (3):\penalty0 307--328, 2004.

\bibitem[Awerbuch and Kleinberg(2008)]{awerbuch2008online}
Baruch Awerbuch and Robert Kleinberg.
\newblock Online linear optimization and adaptive routing.
\newblock \emph{Journal of Computer and System Sciences}, 74\penalty0
  (1):\penalty0 97--114, 2008.

\bibitem[Bach(2015)]{bach2015submodular}
Francis Bach.
\newblock Submodular functions: from discrete to continous domains.
\newblock \emph{arXiv preprint arXiv:1511.00394}, 2015.

\bibitem[Bach et~al.(2012)Bach, Jenatton, Mairal, Obozinski,
  et~al.]{bach2012optimization}
Francis Bach, Rodolphe Jenatton, Julien Mairal, Guillaume Obozinski, et~al.
\newblock Optimization with sparsity-inducing penalties.
\newblock \emph{Foundations and Trends{\textregistered} in Machine Learning},
  4\penalty0 (1):\penalty0 1--106, 2012.

\bibitem[Balkanski and Singer(2015)]{balkanski2015mechanisms}
Eric Balkanski and Yaron Singer.
\newblock Mechanisms for fair attribution.
\newblock In \emph{Proceedings of the Sixteenth ACM Conference on Economics and
  Computation}, pages 529--546. ACM, 2015.

\bibitem[Bian et~al.(2017)Bian, Mirzasoleiman, Buhmann, and
  Krause]{bian16guaranteed}
An~Bian, Baharan Mirzasoleiman, Joachim~M. Buhmann, and Andreas Krause.
\newblock Guaranteed non-convex optimization: Submodular maximization over
  continuous domains.
\newblock In \emph{AISTATS}, February 2017.

\bibitem[Bian et~al.(2018)Bian, Buhmann, and Krause]{bian2018optimal}
An~Bian, Joachim~M Buhmann, and Andreas Krause.
\newblock Optimal dr-submodular maximization and applications to provable mean
  field inference.
\newblock \emph{arXiv preprint arXiv:1805.07482}, 2018.

\bibitem[Bubeck and Eldan(2016)]{bubeck2016multi}
S{\'e}bastien Bubeck and Ronen Eldan.
\newblock Multi-scale exploration of convex functions and bandit convex
  optimization.
\newblock In \emph{COLT}, pages 583--589, 2016.

\bibitem[Bubeck et~al.(2012{\natexlab{a}})Bubeck, Cesa-Bianchi, and
  Kakade]{bubeck2012towards}
S{\'e}bastien Bubeck, Nicolo Cesa-Bianchi, and Sham Kakade.
\newblock Towards minimax policies for online linear optimization with bandit
  feedback.
\newblock In \emph{COLT}, volume~23, pages 41.1--41.14, 2012{\natexlab{a}}.

\bibitem[Bubeck et~al.(2012{\natexlab{b}})Bubeck, Cesa-Bianchi,
  et~al.]{bubeck2012regret}
S{\'e}bastien Bubeck, Nicolo Cesa-Bianchi, et~al.
\newblock Regret analysis of stochastic and nonstochastic multi-armed bandit
  problems.
\newblock \emph{Foundations and Trends{\textregistered} in Machine Learning},
  5\penalty0 (1):\penalty0 1--122, 2012{\natexlab{b}}.

\bibitem[Bubeck et~al.(2015)Bubeck, Dekel, Koren, and Peres]{bubeck2015bandit}
S{\'e}bastien Bubeck, Ofer Dekel, Tomer Koren, and Yuval Peres.
\newblock Bandit convex optimization:$\sqrt{T}$ regret in one dimension.
\newblock In \emph{COLT}, pages 266--278, 2015.

\bibitem[Bubeck et~al.(2017)Bubeck, Lee, and Eldan]{bubeck2017kernel}
S{\'e}bastien Bubeck, Yin~Tat Lee, and Ronen Eldan.
\newblock Kernel-based methods for bandit convex optimization.
\newblock In \emph{STOC}, pages 72--85. ACM, 2017.

\bibitem[Calinescu et~al.(2011)Calinescu, Chekuri, P{\'a}l, and
  Vondr{\'a}k]{calinescu2011maximizing}
Gruia Calinescu, Chandra Chekuri, Martin P{\'a}l, and Jan Vondr{\'a}k.
\newblock Maximizing a monotone submodular function subject to a matroid
  constraint.
\newblock \emph{SIAM Journal on Computing}, 40\penalty0 (6):\penalty0
  1740--1766, 2011.

\bibitem[Chen et~al.(2018{\natexlab{a}})Chen, Harshaw, Hassani, and
  Karbasi]{chen2018projection}
Lin Chen, Christopher Harshaw, Hamed Hassani, and Amin Karbasi.
\newblock Projection-free online optimization with stochastic gradient: From
  convexity to submodularity.
\newblock In \emph{ICML}, page to appear, 2018{\natexlab{a}}.

\bibitem[Chen et~al.(2018{\natexlab{b}})Chen, Hassani, and
  Karbasi]{Chen2018Online}
Lin Chen, Hamed Hassani, and Amin Karbasi.
\newblock Online continuous submodular maximization.
\newblock In \emph{AISTATS}, pages 1896--1905, 2018{\natexlab{b}}.

\bibitem[Chen et~al.(2019{\natexlab{a}})Chen, Zhang, Hassani, and
  Karbasi]{chen2019black}
Lin Chen, Mingrui Zhang, Hamed Hassani, and Amin Karbasi.
\newblock Black box submodular maximization: Discrete and continuous settings.
\newblock \emph{arXiv preprint arXiv:1901.09515}, 2019{\natexlab{a}}.

\bibitem[Chen et~al.(2019{\natexlab{b}})Chen, Zhang, and
  Karbasi]{chen2018projection2}
Lin Chen, Mingrui Zhang, and Amin Karbasi.
\newblock Projection-free bandit convex optimization.
\newblock In \emph{AISTATS}, pages 2047--2056, 2019{\natexlab{b}}.

\bibitem[Dani et~al.(2008)Dani, Kakade, and Hayes]{dani2008price}
Varsha Dani, Sham~M Kakade, and Thomas~P Hayes.
\newblock The price of bandit information for online optimization.
\newblock In \emph{Advances in Neural Information Processing Systems}, pages
  345--352, 2008.

\bibitem[Dekel et~al.(2015)Dekel, Eldan, and Koren]{dekel2015bandit}
Ofer Dekel, Ronen Eldan, and Tomer Koren.
\newblock Bandit smooth convex optimization: Improving the bias-variance
  tradeoff.
\newblock In \emph{NIPS}, pages 2926--2934, 2015.

\bibitem[Flaxman et~al.(2005)Flaxman, Kalai, and McMahan]{flaxman2005online}
Abraham~D Flaxman, Adam~Tauman Kalai, and H~Brendan McMahan.
\newblock Online convex optimization in the bandit setting: gradient descent
  without a gradient.
\newblock In \emph{SODA}, pages 385--394, 2005.

\bibitem[Frank and Wolfe(1956)]{Frank1956algorithm}
Marguerite Frank and Philip Wolfe.
\newblock An algorithm for quadratic programming.
\newblock \emph{Naval Research Logistics (NRL)}, 3\penalty0 (1-2):\penalty0
  95--110, 1956.

\bibitem[Gabillon et~al.(2013)Gabillon, Kveton, Wen, Eriksson, and
  Muthukrishnan]{gabillon2013adaptive}
Victor Gabillon, Branislav Kveton, Zheng Wen, Brian Eriksson, and
  S~Muthukrishnan.
\newblock Adaptive submodular maximization in bandit setting.
\newblock In \emph{Advances in Neural Information Processing Systems}, pages
  2697--2705, 2013.

\bibitem[Golovin and Krause(2011)]{golovin2011adaptive}
Daniel Golovin and Andreas Krause.
\newblock Adaptive submodularity: Theory and applications in active learning
  and stochastic optimization.
\newblock \emph{JAIR}, 42:\penalty0 427--486, 2011.

\bibitem[Golovin et~al.(2014)Golovin, Krause, and Streeter]{golovin14online}
Daniel Golovin, Andreas Krause, and Matthew Streeter.
\newblock Online submodular maximization under a matroid constraint with
  application to learning assignments.
\newblock Technical report, arXiv, 2014.

\bibitem[Hazan and Kale(2012)]{hazan2012projection}
Elad Hazan and Satyen Kale.
\newblock Projection-free online learning.
\newblock In \emph{ICML}, pages 1843--1850, 2012.

\bibitem[Hazan and Levy(2014)]{hazan2014bandit}
Elad Hazan and Kfir Levy.
\newblock Bandit convex optimization: Towards tight bounds.
\newblock In \emph{NIPS}, pages 784--792, 2014.

\bibitem[Hazan and Li(2016)]{hazan2016optimal}
Elad Hazan and Yuanzhi Li.
\newblock An optimal algorithm for bandit convex optimization.
\newblock \emph{arXiv preprint arXiv:1603.04350}, 2016.

\bibitem[Hazan et~al.(2007)Hazan, Agarwal, and Kale]{hazan2007logarithmic}
Elad Hazan, Amit Agarwal, and Satyen Kale.
\newblock Logarithmic regret algorithms for online convex optimization.
\newblock \emph{Machine Learning}, 69\penalty0 (2):\penalty0 169--192, 2007.

\bibitem[Hazan et~al.(2016)]{hazan2016introduction}
Elad Hazan et~al.
\newblock Introduction to online convex optimization.
\newblock \emph{Foundations and Trends{\textregistered} in Optimization},
  2\penalty0 (3-4):\penalty0 157--325, 2016.

\bibitem[Jaggi(2013)]{jaggi2013revisiting}
Martin Jaggi.
\newblock Revisiting frank-wolfe: Projection-free sparse convex optimization.
\newblock In \emph{ICML}, pages 427--435, 2013.

\bibitem[Kakade et~al.(2009)Kakade, Kalai, and Ligett]{kakade2009playing}
Sham~M Kakade, Adam~Tauman Kalai, and Katrina Ligett.
\newblock Playing games with approximation algorithms.
\newblock \emph{SIAM Journal on Computing}, 39\penalty0 (3):\penalty0
  1088--1106, 2009.

\bibitem[Kleinberg(2005)]{kleinberg2005nearly}
Robert~D Kleinberg.
\newblock Nearly tight bounds for the continuum-armed bandit problem.
\newblock In \emph{NIPS}, pages 697--704, 2005.

\bibitem[Kulesza et~al.(2012)Kulesza, Taskar, et~al.]{kulesza2012determinantal}
Alex Kulesza, Ben Taskar, et~al.
\newblock Determinantal point processes for machine learning.
\newblock \emph{Foundations and Trends{\textregistered} in Machine Learning},
  5\penalty0 (2--3):\penalty0 123--286, 2012.

\bibitem[Mokhtari et~al.(2018{\natexlab{a}})Mokhtari, Hassani, and
  Karbasi]{mokhtari2017conditional}
Aryan Mokhtari, Hamed Hassani, and Amin Karbasi.
\newblock Conditional gradient method for stochastic submodular maximization:
  Closing the gap.
\newblock In \emph{AISTATS}, pages 1886--1895, 2018{\natexlab{a}}.

\bibitem[Mokhtari et~al.(2018{\natexlab{b}})Mokhtari, Hassani, and
  Karbasi]{mokhtari2018stochastic}
Aryan Mokhtari, Hamed Hassani, and Amin Karbasi.
\newblock Stochastic conditional gradient methods: From convex minimization to
  submodular maximization.
\newblock \emph{arXiv preprint arXiv:1804.09554}, 2018{\natexlab{b}}.

\bibitem[Nemhauser et~al.(1978)Nemhauser, Wolsey, and
  Fisher]{nemhauser1978analysis}
George~L Nemhauser, Laurence~A Wolsey, and Marshall~L Fisher.
\newblock An analysis of approximations for maximizing submodular set functions
  i.
\newblock \emph{Mathematical Programming}, 14\penalty0 (1):\penalty0 265--294,
  1978.

\bibitem[Saha and Tewari(2011)]{saha2011improved}
Ankan Saha and Ambuj Tewari.
\newblock Improved regret guarantees for online smooth convex optimization with
  bandit feedback.
\newblock In \emph{AISTATS}, pages 636--642, 2011.

\bibitem[Shalev-Shwartz(2007)]{shalev2007online}
Shai Shalev-Shwartz.
\newblock \emph{Online learning: Theory, algorithms, and applications}.
\newblock PhD thesis, The Hebrew University of Jerusalem, 2007.

\bibitem[Shalev-Shwartz and Singer(2007)]{shalev2007primal}
Shai Shalev-Shwartz and Yoram Singer.
\newblock A primal-dual perspective of online learning algorithms.
\newblock \emph{Machine Learning}, 69\penalty0 (2-3):\penalty0 115--142, 2007.

\bibitem[Shamir(2013)]{shamir2013complexity}
Ohad Shamir.
\newblock On the complexity of bandit and derivative-free stochastic convex
  optimization.
\newblock In \emph{COLT}, pages 3--24, 2013.

\bibitem[Streeter and Golovin(2009)]{streeter2009online}
Matthew Streeter and Daniel Golovin.
\newblock An online algorithm for maximizing submodular functions.
\newblock In \emph{NIPS}, pages 1577--1584, 2009.

\bibitem[Tschiatschek et~al.(2014)Tschiatschek, Iyer, Wei, and
  Bilmes]{tschiatschek2014learning}
Sebastian Tschiatschek, Rishabh~K Iyer, Haochen Wei, and Jeff~A Bilmes.
\newblock Learning mixtures of submodular functions for image collection
  summarization.
\newblock In \emph{Advances in neural information processing systems}, pages
  1413--1421, 2014.

\bibitem[Vondr{\'a}k et~al.(2011)Vondr{\'a}k, Chekuri, and
  Zenklusen]{vondrak2011submodular}
Jan Vondr{\'a}k, Chandra Chekuri, and Rico Zenklusen.
\newblock Submodular function maximization via the multilinear relaxation and
  contention resolution schemes.
\newblock In \emph{STOC}, pages 783--792. ACM, 2011.

\bibitem[Wei et~al.(2015)Wei, Iyer, and Bilmes]{wei2015submodularity}
Kai Wei, Rishabh Iyer, and Jeff Bilmes.
\newblock Submodularity in data subset selection and active learning.
\newblock In \emph{International Conference on Machine Learning}, pages
  1954--1963, 2015.

\bibitem[Yu et~al.(2016)Yu, Fang, and Tao]{yu2016linear}
Baosheng Yu, Meng Fang, and Dacheng Tao.
\newblock Linear submodular bandits with a knapsack constraint.
\newblock In \emph{Thirtieth AAAI Conference on Artificial Intelligence}, 2016.

\bibitem[Yue and Guestrin(2011)]{yue2011linear}
Yisong Yue and Carlos Guestrin.
\newblock Linear submodular bandits and their application to diversified
  retrieval.
\newblock In \emph{NIPS}, pages 2483--2491, 2011.

\bibitem[Zinkevich(2003)]{zinkevich2003online}
Martin Zinkevich.
\newblock Online convex programming and generalized infinitesimal gradient
  ascent.
\newblock In \emph{ICML}, pages 928--936, 2003.

\end{thebibliography}

\newpage
\appendix


\section{Further Related Work}\label{app:relatedwork}

The framework of online convex optimization (OCO) dates back to 
\citep{zinkevich2003online}, where 
a regret bound of $O(\sqrt{T})$ was attained. 
The regret bound was improved to $\log(T)$ for strongly convex losses in 
\citep{hazan2007logarithmic}. 
The RFTL algorithm was proposed independently in ~\citep{shalev2007online, 
shalev2007primal}. 
The projection-free algorithm Online Conditional Gradient was proposed in 
~\citep{hazan2012projection, hazan2016introduction}. 
The model of Bandit Convex Optimization (BCO) was 
introduced in \citep{flaxman2005online}, and followed by plenty of works
~\citep{dani2008price,agarwal2011stochastic,bubeck2012regret, 
bubeck2016multi}. 
Various regret bounds were achieved by adding extra assumptions (\emph{e.g.}, 
strong convexity) in 
\citep{kleinberg2005nearly,agarwal2010optimal,saha2011improved,hazan2014bandit,bubeck2015bandit,dekel2015bandit,hazan2016optimal,bubeck2017kernel}.
The first computationally efficient projection-free BCO algorithm was 
proposed in \citep{chen2018projection2}. For 
strongly convex and smooth losses, a lower bound of 
$\Omega(\sqrt{T})$ for regret was proved in \citep{shamir2013complexity}. 
Bandit linear optimization 
was studied in ~\citep{abernethy2008competing,awerbuch2008online, 
bubeck2012towards}.
Interested readers are referred to \cite{bubeck2012regret} for a survey on BCO.

\citet{bach2015submodular} derived connections between continuous 
submodularity and convexity. \citet{bian16guaranteed} studied the offline 
continuous DR-submodular maximization and proposed a variant of the 
Frank-Wolfe 
algorithm to achieve the tight $(1-1/e)$ approximation ratio. 
In the online setting, maximization of submodular set functions was studied 
in~\citep{streeter2009online,golovin14online}. Adaptive submodular bandit 
maximization was analyzed in \citep{gabillon2013adaptive}. The
linear submodular bandit problems were studied in 
\citep{yue2011linear,yu2016linear}. 


\section{Proof of \cref{thm:one_shot}}\label{app:one_shot}
\begin{proof}
	Since $y_t = x_q = x_q^{(K+1)}$, which is a convex combination of 
	$v_q^{(1)}, v_q^{(2)}, \cdots, v_q^{(K)}$, and $v_q^{(k)} \in \constraint, 
	\forall k \in [K]$, we have $y_t \in \constraint$. 
	Then we proceed to prove the theorem.
	
	The key idea of \cref{alg:one-shot} is to use the average function of a 
	bunch of functions in certain group (\emph{e.g.}, the block) to represent 
	the functions. Note the regret is calculated by the sum of all the reward 
	functions, and the sum of average functions is exactly the sum of all the 
	functions divided by the block size, so we can use the average function to 
	analyze the regret.
	
	Let 
	$$\bar{F}_{q,k}(x) = \frac{\sum_{i=k+1}^K F_{t_{q,i}}(x)}{K-k}, k \in \{0, 
	1, \cdots, K-1 \}$$
	denotes the average function of the remaining $(K-k)$ functions after round 
	$k$ in the $q$-th block. Recall that $(t_{q,1}, \dots, t_{q,K})$ is a 
	random permutation of $ ((q-1)K, qK]\cap \mathbb{Z}$, thus 
	$\bar{F}_{q,k}(x)$ is a random function. Also, by definition, we have the 
	expected regret
	\begin{equation}
	\label{eq:regret_for_average}
	\expect [\sum_{t=1}^T (1-1/e)F_t(x^*) - F_t(x_q)] = \expect [ \sum_{q=1}^Q 
	K[(1-1/e)\bar{F}_{q,0}(x^*) - \bar{F}_{q,0}(x_q)]], 
	\end{equation}
	where $x^* = \argmax_{x \in \constraint} \sum_{t=1}^T F_t(x).$ We also note 
	that on the left hand side of \cref{eq:regret_for_average}, $q$ is actually 
	a 
	function of $t$. Specifically , $q$ is the index of the block which 
	contains 
	$F_t$.
	
	\begin{lemma}[Eq.(9) in \citep{chen2018projection}]
		\label{lem:icml}
		If $F_t$ is monotone continuous DR-submodular and $L_2$-smooth, 
		$x_t^{(k+1)} = x_t^{(k)}+ 1/K \cdot v_t^{(k)}$ for $k \in [K]$, then 
		\begin{equation*}
		\begin{split}
		F_t(x^*) -F_t(x_t^{(k+1)}) \leq & (1 - 1/K)[F_t(x^*) - F_t(x_t^{(k)})] 
		\\
		&-\frac{1}{K}[-\frac{1}{2\beta^{(k)}} \| \nabla F_t(x_t^{(k)}) - 
		d_t^{(k)} \|^2 - \frac{\beta^{(k)}D^2}{2} + \langle d_{t}^{(k)}, 
		v_t^{(k)} - x^* \rangle] + \frac{L_2D^2}{2K^2},   
		\end{split}
		\end{equation*}
		where $\{\beta^{(k)}\}$ is a sequence of positive parameters to be 
		determined.
	\end{lemma}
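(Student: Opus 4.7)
The plan is to derive this one-step recursion by combining three classical ingredients: the $L_2$-smoothness descent inequality, the gradient characterization of monotone continuous DR-submodular functions, and Young's inequality to absorb the gradient-estimation error $\nabla F_t(x_t^{(k)}) - d_t^{(k)}$.

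First, I would apply $L_2$-smoothness of $F_t$ along the segment from $x_t^{(k)}$ to $x_t^{(k+1)} = x_t^{(k)} + \frac{1}{K}v_t^{(k)}$, obtaining $F_t(x_t^{(k+1)}) \geq F_t(x_t^{(k)}) + \frac{1}{K}\langle \nabla F_t(x_t^{(k)}), v_t^{(k)}\rangle - \frac{L_2}{2K^2}\|v_t^{(k)}\|^2$. Since $0 \in \constraint$ and $v_t^{(k)} \in \constraint$, we have $\|v_t^{(k)}\| \leq D$. Rearranging and adding $F_t(x^*)$ to both sides gives $F_t(x^*) - F_t(x_t^{(k+1)}) \leq [F_t(x^*) - F_t(x_t^{(k)})] - \frac{1}{K}\langle \nabla F_t(x_t^{(k)}), v_t^{(k)}\rangle + \frac{L_2 D^2}{2K^2}$. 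I would then split the inner product by inserting $x^*$: $\langle \nabla F_t(x_t^{(k)}), v_t^{(k)}\rangle = \langle \nabla F_t(x_t^{(k)}), x^*\rangle + \langle \nabla F_t(x_t^{(k)}), v_t^{(k)} - x^*\rangle$. The key fact for monotone continuous DR-submodular $F$ on $\mathbb{R}^d_{\geq 0}$ is that $\langle \nabla F(x), y\rangle \geq F(y) - F(x)$ for any $x, y \geq 0$; this follows from concavity along non-negative directions applied between $x$ and $x \vee y$ (giving $F(x \vee y) - F(x) \leq \langle \nabla F(x), (y-x)_+\rangle$), monotonicity $F(y) \leq F(x \vee y)$, the componentwise inequality $(y-x)_+ \leq y$, and non-negativity of $\nabla F(x)$. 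Applying this with $y = x^*$ and $x = x_t^{(k)}$ gives $\langle \nabla F_t(x_t^{(k)}), x^*\rangle \geq F_t(x^*) - F_t(x_t^{(k)})$, which after substitution collapses the leading coefficient to $(1 - 1/K)$.

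Finally, I would decompose the residual as $\langle \nabla F_t(x_t^{(k)}), v_t^{(k)} - x^*\rangle = \langle d_t^{(k)}, v_t^{(k)} - x^*\rangle + \langle \nabla F_t(x_t^{(k)}) - d_t^{(k)}, v_t^{(k)} - x^*\rangle$, apply Young's inequality $|ab| \leq \frac{a^2}{2\beta^{(k)}} + \frac{\beta^{(k)} b^2}{2}$ to the error term with $\|v_t^{(k)} - x^*\| \leq D$, and lower bound it by $-\frac{1}{2\beta^{(k)}}\|\nabla F_t(x_t^{(k)}) - d_t^{(k)}\|^2 - \frac{\beta^{(k)} D^2}{2}$. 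Negating and plugging back into the bound from the smoothness step collects precisely the bracketed expression $-\frac{1}{2\beta^{(k)}}\|\nabla F_t(x_t^{(k)}) - d_t^{(k)}\|^2 - \frac{\beta^{(k)} D^2}{2} + \langle d_t^{(k)}, v_t^{(k)} - x^*\rangle$ scaled by $-1/K$, reproducing the claimed inequality.

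The main obstacle I expect is sign bookkeeping: the passage through the negation $-\frac{1}{K}\langle \nabla F_t(x_t^{(k)}), v_t^{(k)}\rangle$ flips the direction of every lower bound, and the Young-type step must end with the correct signs inside the bracket. A secondary check is that the iterates $x_t^{(k)}$ actually lie in $\mathbb{R}^d_{\geq 0}$ so the gradient inequality applies; this is immediate from $x_t^{(1)} = 0$, $v_t^{(j)} \in \constraint \subseteq \mathbb{R}^d_{\geq 0}$, and the convex-combination structure of the updates. No single step is deep---the proof is essentially an assembly of standard Frank--Wolfe ingredients, with the stochastic direction $d_t^{(k)}$ introduced through the Young split so that both the variance term and the oracle term $\langle d_t^{(k)}, v_t^{(k)} - x^*\rangle$ appear cleanly for the subsequent regret analysis.
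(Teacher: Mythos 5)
Your proof is correct and matches the argument the paper itself gives for the generalized version of this statement, \cref{lem:icml_extend_K'} (which replaces the origin by a lower bound $\lb$ and reduces to \cref{lem:icml} when $\lb=0$): smoothness along the Frank--Wolfe step, the inequality $F_t(x^*)-F_t(x_t^{(k)})\le\langle\nabla F_t(x_t^{(k)}),x^*\rangle$ via monotonicity, concavity along non-negative directions at $x_t^{(k)}\vee x^*$, and non-negativity of the gradient, followed by Young's inequality on the $\nabla F_t-d_t^{(k)}$ error term. The only cosmetic difference is the order in which you split the inner product, and your side remarks (that $\|v_t^{(k)}\|\le D$ since $0\in\constraint$, and that the iterates stay in the non-negative orthant) are exactly the facts the paper uses implicitly.
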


	\begin{lemma}
		\label{lem:regret_decomp}
		If $F_t$ is monotone continuous DR-submodular and $L_2$-smooth for all 
		$t$, $x_q^{(k+1)} = x_q^{(k)}+ 1/K \cdot v_q^{(k)}$ for $k \in [K]$, 
		and $x_q = x_q^{(K+1)}$, then we have
		\begin{equation*}
		\begin{split}
		\expect[(1-1/e)\bar{F}_{q,0}(x^*) - \bar{F}_{q,0}(x_q)] \leq & \expect[ 
		\frac{1}{K}\sum_{k=1}^K[\frac{1}{2\beta^{(k)}} \Delta_q^{(k)} + 
		\frac{\beta^{(k)}D^2}{2}]] + \frac{L_2D^2}{2K} \\
		& + 1/K \sum_{k=1}^K (1-1/K)^{K-k} \expect [\langle d_{q}^{(k)}, x^* -  
		v_q^{(k)} \rangle ],
		\end{split}    
		\end{equation*}
		where $\Delta_q^{(k)} = \|\nabla \bar{F}_{q,k-1}(x_q^{(k)}) - d_q^{(k)} 
		\|^2$.
	\end{lemma}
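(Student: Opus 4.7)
The plan is to apply \cref{lem:icml} at each inner iteration $k$ with the ``instantaneous'' objective taken to be the residual average function $\bar{F}_{q,k-1}$ rather than a single $F_t$. Since $\bar{F}_{q,k-1}$ is a non-negative convex combination of monotone, continuous DR-submodular, $L_2$-smooth functions, it inherits these properties, so the hypotheses of \cref{lem:icml} are satisfied. The conclusion is a pointwise (in the random permutation $\sigma$ of the current block) inequality; choosing $\nabla F_t(x_t^{(k)}) - d_t^{(k)}$ to be $\nabla \bar{F}_{q,k-1}(x_q^{(k)}) - d_q^{(k)}$, whose squared norm is exactly $\Delta_q^{(k)}$, I get
\begin{align*}
\bar{F}_{q,k-1}(x^*) - \bar{F}_{q,k-1}(x_q^{(k+1)}) \leq\, & (1-1/K)\bigl[\bar{F}_{q,k-1}(x^*) - \bar{F}_{q,k-1}(x_q^{(k)})\bigr] \\
& + \frac{1}{K}\left[\frac{\Delta_q^{(k)}}{2\beta^{(k)}} + \frac{\beta^{(k)}D^2}{2} + \langle d_q^{(k)}, x^* - v_q^{(k)}\rangle\right] + \frac{L_2 D^2}{2K^2}.
\end{align*}

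The subtlety, and what I expect to be the main obstacle, is that the function $\bar{F}_{q,k-1}$ itself drifts with $k$, so a na\"ive telescoping fails. The resolution is to take expectations and exploit a permutation-exchangeability argument. Inside block $q$, the iterates $x_q^{(k)}$, the oracle outputs $v_q^{(k)}$, and the optimum $x^*$ are all determined before $\sigma$ is drawn (lines 3--4 of \cref{alg:one-shot}), hence independent of $\sigma$. Since $\{t_{q,k},\dots,t_{q,K}\}$ is a uniformly random subset of the block indices of size $K-k+1$, for every $\sigma$-independent point $y$ we have $\expect[\bar{F}_{q,k-1}(y)] = \expect[\bar{F}_{q,0}(y)]$. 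Applying this identity to $\bar{F}_{q,k-1}(x^*)$, $\bar{F}_{q,k-1}(x_q^{(k)})$ and $\bar{F}_{q,k-1}(x_q^{(k+1)})$ turns the above drifting inequality into a clean recursion for the fixed-function gap in expectation.

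Setting $a_k \triangleq \expect[\bar{F}_{q,0}(x^*) - \bar{F}_{q,0}(x_q^{(k)})]$ and writing $b_k$ for the bracketed right-hand side divided by $K$ plus $L_2D^2/(2K^2)$, the resulting recursion $a_{k+1} \leq (1-1/K)a_k + b_k$ unrolls to $a_{K+1} \leq (1-1/K)^K a_1 + \sum_{k=1}^K (1-1/K)^{K-k} b_k$. Because $x_q^{(1)}=0$ and every $F_t$ is non-negative we have $a_1 \leq \expect[\bar{F}_{q,0}(x^*)]$, and $(1-1/K)^K \leq 1/e$ converts the leading term into the $(1-1/e)\bar{F}_{q,0}(x^*)$ seen in the lemma's left-hand side. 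To finish, I bound the geometric weights $(1-1/K)^{K-k}\leq 1$ on the $\Delta_q^{(k)}$ and $\beta^{(k)}$ contributions (giving the unweighted average in the lemma), keep the weights on the inner-product term where they are required by later bookkeeping, and collect the $K$ copies of $L_2D^2/(2K^2)$ into the single $L_2D^2/(2K)$ additive term, matching the claimed bound.
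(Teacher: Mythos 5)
Your proof is correct and follows essentially the same route as the paper's: apply \cref{lem:icml} with the residual average $\bar{F}_{q,k-1}$ as the objective, use the permutation-exchangeability identity to replace the drifting gaps by the fixed gap $\expect[\bar{F}_{q,0}(x^*)-\bar{F}_{q,0}(x_q^{(k)})]$, unroll the recursion, and finish with $\bar{F}_{q,0}(0)\geq 0$ and $(1-1/K)^K\leq 1/e$, bounding the geometric weights by $1$ on the $\Delta_q^{(k)}$ and $\beta^{(k)}$ terms. Your explicit justification that $x_q^{(k)}$, $v_q^{(k)}$, and $x^*$ are determined before the permutation is drawn is a welcome elaboration of a step the paper asserts without comment.
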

	
	\begin{proof}[Proof of \cref{lem:regret_decomp}]
		Since $F_t$ is monotone continuous DR-Submodular and $L_2$-smooth, then 
		so is $\bar{F}_{q,k-1}$. By \cref{lem:icml}, we have
		\begin{equation}
		\label{eq:iteration}
		\begin{split}
		\expect [ \bar{F}_{q,0}(x^*) -\bar{F}_{q,0}(x_q^{(k+1)}) ] 
		=& \expect [ \bar{F}_{q,k-1}(x^*) -\bar{F}_{q,k-1}(x_q^{(k+1)}) ] \\
		\leq& \expect [ (1 - 1/K)[\bar{F}_{q,k-1}(x^*) - 
		\bar{F}_{q,k-1}(x_q^{(k)})] + \frac{L_2D^2}{2K^2}\\
		& -\frac{1}{K}[-\frac{1}{2\beta^{(k)}} \| \nabla 
		\bar{F}_{q,k-1}(x_q^{(k)}) - d_q^{(k)} \|^2 - \frac{\beta^{(k)}D^2}{2} 
		+ \langle d_{q}^{(k)}, v_q^{(k)} - x^* \rangle]].
		\end{split}
		\end{equation}
		
		Note that $\expect \left[ \bar{F}_{q,k-1}(x^*) 
		-\bar{F}_{q,k-1}(x_q^{(k)})\right] = \expect[\bar{F}_{q,k-2}(x^*) 
		-\bar{F}_{q,k-2}(x_q^{(k)})]$, so we can apply \cref{eq:iteration} 
		recursively 
		for $k \in \{1, 2, \cdots, K\}$, and get
		\begin{equation*}
		\begin{split}
		\expect[\bar{F}_{q,0}(x^*) -\bar{F}_{q,0}(x_q) ] \leq & \expect 
		[(1-1/K)^K[\bar{F}_{q,0}(x^*) -\bar{F}_{q,0}(x_q^{(1)})] + 
		\frac{1}{K}\sum_{k=1}^K[\frac{1}{2\beta^{(k)}} \Delta_q^{(k)} + 
		\frac{\beta^{(k)}D^2}{2}]] \\
		& + \frac{L_2D^2}{2K} + 1/K \sum_{k=1}^K (1-1/K)^{K-k} \expect [\langle 
		d_{q}^{(k)}, x^* -  v_q^{(k)} \rangle ],
		\end{split}    
		\end{equation*}
		where $\Delta_q^{(k)} = \|\nabla \bar{F}_{q,k-1}(x_q^{(k)}) - d_q^{(k)} 
		\|^2$.
		
		Recall that $\bar{F}_{q,0}(x_q^{(1)}) = \bar{F}_{q,0}(0) \geq 0$ and 
		$(1-1/K)^K \leq 1/e, \forall K \geq 1$, so we have
		\begin{equation*}
		\label{eq:single_func} 
		\begin{split}
		\expect[(1-1/e)\bar{F}_{q,0}(x^*) - \bar{F}_{q,0}(x_q)] \leq & \expect[ 
		\frac{1}{K}\sum_{k=1}^K[\frac{1}{2\beta^{(k)}} \Delta_q^{(k)} + 
		\frac{\beta^{(k)}D^2}{2}]] + \frac{L_2D^2}{2K} \\
		& + 1/K \sum_{k=1}^K (1-1/K)^{K-k} \expect [\langle d_{q}^{(k)}, x^* -  
		v_q^{(k)} \rangle ].
		\end{split}
		\end{equation*}
	\end{proof}
	
	Combine \cref{eq:regret_for_average} and \cref{lem:regret_decomp}, we have 
	that the expected regret of \cref{alg:one-shot} satisfies:
	\begin{equation*}
	\begin{split}
	\expect[\mathcal{R}_T] &= \expect [\sum_{t=1}^T (1-1/e)F_t(x^*) - F_t(x_q)] 
	\\
	&= \expect [ \sum_{q=1}^Q K[(1-1/e)\bar{F}_{q,0}(x^*) - 
	\bar{F}_{q,0}(x_q)]]  \\
	&\leq \expect[ \sum_{q=1}^Q [\sum_{k=1}^K[\frac{1}{2\beta^{(k)}} 
	\Delta_q^{(k)} + \frac{\beta^{(k)}D^2}{2}]+ \frac{L_2D^2}{2}]] + 
	\sum_{q=1}^{Q} \sum_{k=1}^K (1-1/K)^{K-k} \expect [\langle d_{q}^{(k)}, x^* 
	-  v_q^{(k)} \rangle ]   \\
	& = \expect[\sum_{q=1}^Q\sum_{k=1}^K \frac{1}{2\beta^{(k)}} \Delta_q^{(k)} 
	+ \frac{D^2}{2}Q\sum_{k=1}^K \beta^{(k)}] + \frac{L_2D^2}{2}Q \\
	& \qquad + \sum_{k=1}^K  (1-1/K)^{K-k} \expect[ \sum_{q=1}^Q  \langle 
	d_{q}^{(k)}, x^* -  v_q^{(k)} \rangle ].
	\end{split}    
	\end{equation*}
	
	Since $v_q^{(k)}$ is the output of the online linear maximization oracle 
	$\mathcal{E}^{(k)}$ at round $q$, we have
	$$ \sum_{q=1}^Q \langle d_{q}^{(k)}, x^* -  v_q^{(k)} \rangle \leq 
	\mathcal{R}_Q^{\mathcal{E}}, $$ 
	and thus we have 
	\begin{equation*}
	\sum_{k=1}^K  (1-1/K)^{K-k} \expect[ \sum_{q=1}^Q \langle d_{q}^{(k)}, x^* 
	-  v_q^{(k)} \rangle ] \leq  \sum_{k=1}^K 1 \cdot 
	\mathcal{R}_Q^{\mathcal{E}} = K \mathcal{R}_Q^{\mathcal{E}}.  
	\end{equation*}
	
	Therefore, 
	\begin{equation}
	\label{eq:regret}
	\expect[\mathcal{R}_T] \leq \expect[\sum_{q=1}^Q\sum_{k=1}^K 
	\frac{1}{2\beta^{(k)}} \Delta_q^{(k)}] + \frac{D^2}{2}Q\sum_{k=1}^K 
	\beta^{(k)} + K\mathcal{R}_Q^{\mathcal{E}} + \frac{L_2D^2}{2}Q.
	\end{equation}
	
	Note $\mathcal{R}_Q^{\mathcal{E}}$ is the regret of oracle $\mathcal{E}$ at 
	horizon $Q$, which is of order $O(\sqrt{Q})$,
	so in order to get an upper bound for the expected regret of 
	\cref{alg:one-shot}, the key is to bound $\expect[\Delta_q^{(k)}]$. 
	
	\begin{lemma}
		\label{lem:delta_decomp} 
		Under the setting of \cref{thm:one_shot}, we have 
		\begin{equation*}
		\begin{split}
		\expect[\Delta_q^{(k)}] &\leq \rho_k^2 \sigma^2 + (1-\rho_k)^2 
		\expect[\Delta_q^{(k-1)}] + (1-\rho_k)^2 \frac{G}{(K-k+2)^2} \\
		& \qquad + (1-\rho_k)^2\left[ \frac{G}{\alpha_k (K-k+2)^2} + 
		\alpha_k\expect[\Delta _q^{(k-1)}] \right]
		\end{split}   
		\end{equation*}
		where $\{\alpha_k\}$ is a sequence of positive parameters to be 
		determined, $\sigma^2 = L_1^2+\sigma_0^2$, and $G=(L_2R+2L_1)^2$.
	\end{lemma}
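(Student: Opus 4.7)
The plan is a variance-reduced estimator analysis, with the added twist that the ``target'' average function shifts along the permutation. Using the definition $d_q^{(k)} = (1-\rho_k) d_q^{(k-1)} + \rho_k \tnabla F_{t_{q,k}}(x_q^{(k)})$, I will rewrite
\begin{align*}
\nabla \bar{F}_{q,k-1}(x_q^{(k)}) - d_q^{(k)}
&= \rho_k\bigl[\nabla \bar{F}_{q,k-1}(x_q^{(k)}) - \tnabla F_{t_{q,k}}(x_q^{(k)})\bigr] \\
&\qquad + (1-\rho_k)\bigl[\nabla \bar{F}_{q,k-1}(x_q^{(k)}) - d_q^{(k-1)}\bigr],
\end{align*}
square, and take conditional expectation with respect to the filtration $\mathcal{F}_{k-1}$ generated by $t_{q,1},\dots,t_{q,k-1}$, the prior stochastic gradients, and all history before block $q$.

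The cross term vanishes as follows. Conditioned on $\mathcal{F}_{k-1}$, the iterate $x_q^{(k)}$, the previous estimator $d_q^{(k-1)}$, and the residual average $\bar{F}_{q,k-1}$ (which depends only on the unused indices) are all measurable, while $t_{q,k}$ is uniform over the unused indices; hence $\expect[F_{t_{q,k}} \mid \mathcal{F}_{k-1}] = \bar{F}_{q,k-1}$, and combined with unbiasedness of $\tnabla$, the inner random gradient has conditional mean exactly $\nabla \bar{F}_{q,k-1}(x_q^{(k)})$. For the $\rho_k^2$ term, a bias--variance split yields (i) the variance of a uniform draw $\nabla F_{t_{q,k}}(x_q^{(k)})$ from the remaining gradient set, which is at most $L_1^2$ using $\|\nabla F_i\| \leq L_1$, plus (ii) the stochastic-gradient variance $\sigma_0^2$, for a total $\sigma^2 = L_1^2 + \sigma_0^2$.

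For the memory term $\|\nabla \bar{F}_{q,k-1}(x_q^{(k)}) - d_q^{(k-1)}\|^2$, I will insert and subtract the previous target $\nabla \bar{F}_{q,k-2}(x_q^{(k-1)})$ and apply the Young-type inequality $\|a+b\|^2 \leq (1+\alpha_k)\|a\|^2 + (1+1/\alpha_k)\|b\|^2$ with $a = \nabla \bar{F}_{q,k-2}(x_q^{(k-1)}) - d_q^{(k-1)}$, producing $(1+\alpha_k)\Delta_q^{(k-1)}$ plus $(1+1/\alpha_k)\|\text{drift}\|^2$, where $\text{drift} = \nabla \bar{F}_{q,k-1}(x_q^{(k)}) - \nabla \bar{F}_{q,k-2}(x_q^{(k-1)})$. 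Multiplying through by $(1-\rho_k)^2$ then reproduces the lemma's form as soon as the squared drift is at most $G/(K-k+2)^2$.

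The hard part will be bounding the drift, which requires separating the iterate shift from the target shift. By the triangle inequality, I split it as $\nabla \bar{F}_{q,k-1}(x_q^{(k)}) - \nabla \bar{F}_{q,k-1}(x_q^{(k-1)})$, controlled by $L_2\|x_q^{(k)}-x_q^{(k-1)}\| \leq L_2 R/K$ via $L_2$-smoothness and the Frank--Wolfe step size $\eta_{k-1}=1/K$ with $\|v_q^{(k-1)}\|\leq R$, plus $\nabla \bar{F}_{q,k-1}(x_q^{(k-1)}) - \nabla \bar{F}_{q,k-2}(x_q^{(k-1)})$. For the latter I will exploit the algebraic identity
\begin{align*}
\bar{F}_{q,k-2} = \tfrac{1}{K-k+2}F_{t_{q,k-1}} + \tfrac{K-k+1}{K-k+2}\bar{F}_{q,k-1},
\end{align*}
which rearranges to $\bar{F}_{q,k-1} - \bar{F}_{q,k-2} = \tfrac{1}{K-k+2}(\bar{F}_{q,k-1} - F_{t_{q,k-1}})$; taking gradients and using $\|\nabla F_i\|\leq L_1$ bounds this second piece by $2L_1/(K-k+2)$. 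Absorbing the $L_2 R/K$ term into the common denominator $K-k+2$ (valid in the relevant range of $k$) and squaring yields the drift bound $(L_2 R + 2L_1)^2/(K-k+2)^2 = G/(K-k+2)^2$. Taking total expectation and assembling the pieces then gives the claimed recursion.
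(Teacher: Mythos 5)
Your proposal is correct and follows essentially the same route as the paper's proof: the same noise/drift/memory decomposition with cross terms vanishing by conditional unbiasedness over the residual permutation, the same variance split giving $\sigma^2 = L_1^2+\sigma_0^2$, the same Young-type inequality with parameter $\alpha_k$, and the same drift bound $G/(K-k+2)^2$ from $L_2$-smoothness plus the Frank--Wolfe step. Your leave-one-out identity $\bar{F}_{q,k-1}-\bar{F}_{q,k-2}=\frac{1}{K-k+2}(\bar{F}_{q,k-1}-F_{t_{q,k-1}})$ is a slightly cleaner way to obtain the target-shift term than the paper's direct regrouping of the sums, but it yields the identical estimate.
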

	
	\begin{proof}[Proof of \cref{lem:delta_decomp}]
		By the definition of $d_q^{(k)}$, we have
		\begin{equation}
		\label{eq:delta}
		\begin{split}
		\Delta_q^{(k)} &= \| \nabla \bar{F}_{q,k-1}(x_q^{(k)}) - (1 - 
		\rho_k)d_q^{(k-1)} - \rho_k \tnabla F_{t_{q,k}}(x_q^{(k)}) \|^2 \\
		&= \| \rho_k[\nabla \bar{F}_{q,k-1}(x_q^{(k)}) - \tnabla 
		F_{t_{q,k}}(x_q^{(k)})] + (1 - \rho_k)[\nabla 
		\bar{F}_{q,k-1}(x_q^{(k)}) - \nabla \bar{F}_{q,k-2}(x_q^{(k-1)})] \\
		& \qquad + (1 - \rho_k)[\nabla \bar{F}_{q,k-2}(x_q^{(k-1)}) - 
		d_q^{(k-1)}]  \|^2 \\
		&= \rho_k^2 \| \nabla \bar{F}_{q,k-1}(x_q^{(k)}) - \tnabla 
		F_{t_{q,k}}(x_q^{(k)}) \|^2 + (1 - \rho_k)^2 \Delta_q^{(k-1)} \\
		& \qquad + (1 - \rho_k)^2 \| \nabla \bar{F}_{q,k-1}(x_q^{(k)}) - \nabla 
		\bar{F}_{q,k-2}(x_q^{(k-1)}) \|^2 \\
		& \qquad + 2\rho_k(1-\rho_k) \langle \nabla \bar{F}_{q,k-1}(x_q^{(k)}) 
		- \tnabla F_{t_{q,k}}(x_q^{(k)}), \nabla \bar{F}_{q,k-1}(x_q^{(k)}) - 
		\nabla \bar{F}_{q,k-2}(x_q^{(k-1)}) \rangle \\
		& \qquad + 2\rho_k(1-\rho_k) \langle \nabla \bar{F}_{q,k-1}(x_q^{(k)}) 
		- \tnabla F_{t_{q,k}}(x_q^{(k)}), \nabla \bar{F}_{q,k-2}(x_q^{(k-1)}) - 
		d_q^{(k-1)} \rangle \\
		& \qquad + 2(1-\rho_k)^2 \langle \nabla \bar{F}_{q,k-1}(x_q^{(k)}) - 
		\nabla \bar{F}_{q,k-2}(x_q^{(k-1)}) , \nabla 
		\bar{F}_{q,k-2}(x_q^{(k-1)}) - d_q^{(k-1)} \rangle.
		\end{split}
		\end{equation}
		
		For further analysis, we first denote $\mathcal{F}_{q,k}$ to be the 
		$\sigma$-field generated by $t_{q,1}, t_{q,2}, \cdots, t_{q,k}$. Then 
		by law of iterated expectations, 
		\begin{equation}
		\label{eq:decomp_11}
		\begin{split}
		&\expect [\| \nabla \bar{F}_{q,k-1}(x_q^{(k)}) - \tnabla 
		F_{t_{q,k}}(x_q^{(k)}) \|^2] \\
		=& \expect[\expect[\| \nabla \bar{F}_{q,k-1}(x_q^{(k)}) - \tnabla 
		F_{t_{q,k}}(x_q^{(k)}) \|^2  |\mathcal{F}_{q,k-1}]]  \\
		=& \expect[\expect[\| \nabla \bar{F}_{q,k-1}(x_q^{(k)}) - \nabla 
		F_{t_{q,k}}(x_q^{(k)}) \|^2  + \|\nabla F_{t_{q,k}}(x_q^{(k)}) - 
		\tnabla F_{t_{q,k}}(x_q^{(k)}) \|^2 \\
		& \qquad + 2 \langle \nabla \bar{F}_{q,k-1}(x_q^{(k)}) - \nabla 
		F_{t_{q,k}}(x_q^{(k)}),\nabla F_{t_{q,k}}(x_q^{(k)}) - \tnabla 
		F_{t_{q,k}}(x_q^{(k)}) \rangle|\mathcal{F}_{q,k-1}]].
		\end{split}    
		\end{equation}
		
		By \cref{assump_on_f}, and $F_t$ is $L_1$-Lipschitz implies that 
		$\sup_{x \in \constraint} \|\nabla F_t(x) \| \leq L_1$, we have 
		\begin{equation}
		\label{eq:decomp_12}
		\begin{split}
		\expect[\expect[\| \nabla \bar{F}_{q,k-1}(x_q^{(k)}) - \nabla 
		F_{t_{q,k}}(x_q^{(k)}) \|^2 |\mathcal{F}_{q,k-1}]] =& \expect 
		[\text{Var}(\nabla F_{t_{q,k}}(x_q^{(k)})|\mathcal{F}_{q,k-1})] \\
		\leq& \expect [\|\nabla F_{t_{q,k}}(x_q^{(k)}) \|^2] \\
		\leq& L_1^2.
		\end{split}    
		\end{equation}
		
		By \cref{assump_on_estimate}, we have
		\begin{equation}
		\label{eq:decomp_13}
		\begin{split}
		\expect[\expect[ \|\nabla F_{t_{q,k}}(x_q^{(k)}) - \tnabla 
		F_{t_{q,k}}(x_q^{(k)}) \|^2 |\mathcal{F}_{q,k-1}]] =& \expect[ \|\nabla 
		F_{t_{q,k}}(x_q^{(k)}) - \tnabla F_{t_{q,k}}(x_q^{(k)}) \|^2 ] \\
		=& \expect[\expect[ \|\nabla F_{t_{q,k}}(x_q^{(k)}) - \tnabla 
		F_{t_{q,k}}(x_q^{(k)}) \|^2 |\mathcal{F}_{q,k}]] \\
		\leq& \sigma_0^2.    
		\end{split}
		\end{equation}
		
		Moreover, we have 
		\begin{equation}
		\label{eq:decomp_14}
		\begin{split}
		&\expect[\expect[\langle \nabla \bar{F}_{q,k-1}(x_q^{(k)}) - \nabla 
		F_{t_{q,k}}(x_q^{(k)}),\nabla F_{t_{q,k}}(x_q^{(k)}) - \tnabla 
		F_{t_{q,k}}(x_q^{(k)}) \rangle|\mathcal{F}_{q,k-1}]] \\
		=& \expect[\langle \nabla \bar{F}_{q,k-1}(x_q^{(k)}) - \nabla 
		F_{t_{q,k}}(x_q^{(k)}),\nabla F_{t_{q,k}}(x_q^{(k)}) - \tnabla 
		F_{t_{q,k}}(x_q^{(k)}) \rangle] \\
		=& \expect[\expect[\langle \nabla \bar{F}_{q,k-1}(x_q^{(k)}) - \nabla 
		F_{t_{q,k}}(x_q^{(k)}),\nabla F_{t_{q,k}}(x_q^{(k)}) - \tnabla 
		F_{t_{q,k}}(x_q^{(k)}) \rangle|\mathcal{F}_{q,k}]]\\
		=& \expect[\langle \nabla \bar{F}_{q,k-1}(x_q^{(k)}) - \nabla 
		F_{t_{q,k}}(x_q^{(k)}),\expect[\nabla F_{t_{q,k}}(x_q^{(k)}) - \tnabla 
		F_{t_{q,k}}(x_q^{(k)})|\mathcal{F}_{q,k}] \rangle]\\
		=&0
		\end{split}
		\end{equation}
		where the last equation holds because $\tnabla F_{t}$ is an unbiased 
		estimator of $\nabla F_{t}$ for all $t$.
		
		By \cref{eq:decomp_11,eq:decomp_12,eq:decomp_13,eq:decomp_14}, we have
		\begin{equation}
		\label{eq:decomp_21}
		\expect [\| \nabla \bar{F}_{q,k-1}(x_q^{(k)}) - \tnabla 
		F_{t_{q,k}}(x_q^{(k)}) \|^2] \leq L_1^2 + \sigma_0^2 \triangleq 
		\sigma^2.
		\end{equation}
		
		Similarly, by law of iterated expectations and the unbiasedness of 
		$\tnabla F_t$, we have 
		\begin{equation}
		\label{eq:decomp_22}
		\begin{split}
		&\expect[\langle \nabla \bar{F}_{q,k-1}(x_q^{(k)}) - \tnabla 
		F_{t_{q,k}}(x_q^{(k)}), \nabla \bar{F}_{q,k-1}(x_q^{(k)}) - \nabla 
		\bar{F}_{q,k-2}(x_q^{(k-1)}) \rangle] \\
		=& \expect[ \expect[\langle \nabla \bar{F}_{q,k-1}(x_q^{(k)}) - \tnabla 
		F_{t_{q,k}}(x_q^{(k)}), \nabla \bar{F}_{q,k-1}(x_q^{(k)}) - \nabla 
		\bar{F}_{q,k-2}(x_q^{(k-1)}) \rangle | \mathcal{F}_{q,k-1}]] \\
		=& \expect[ \langle \expect[\nabla \bar{F}_{q,k-1}(x_q^{(k)}) - \tnabla 
		F_{t_{q,k}}(x_q^{(k)})| \mathcal{F}_{q,k-1}], \nabla 
		\bar{F}_{q,k-1}(x_q^{(k)}) - \nabla \bar{F}_{q,k-2}(x_q^{(k-1)}) 
		\rangle ] \\
		=& 0    
		\end{split}    
		\end{equation}
		and 
		\begin{equation}
		\label{eq:decomp_23}
		\begin{split}
		&\expect[\langle \nabla \bar{F}_{q,k-1}(x_q^{(k)}) - \tnabla 
		F_{t_{q,k}}(x_q^{(k)}), \nabla \bar{F}_{q,k-2}(x_q^{(k-1)}) - 
		d_q^{(k-1)} \rangle] \\
		=& \expect[\expect[\langle \nabla \bar{F}_{q,k-1}(x_q^{(k)}) - \tnabla 
		F_{t_{q,k}}(x_q^{(k)}), \nabla \bar{F}_{q,k-2}(x_q^{(k-1)}) - 
		d_q^{(k-1)} \rangle | \mathcal{F}_{q,k-1}, d_q^{(k-1)}  ]] \\
		=& \expect[\langle \expect[\nabla \bar{F}_{q,k-1}(x_q^{(k)}) - \tnabla 
		F_{t_{q,k}}(x_q^{(k)})|\mathcal{F}_{q,k-1}, d_q^{(k-1)}], \nabla 
		\bar{F}_{q,k-2}(x_q^{(k-1)}) - d_q^{(k-1)} \rangle ]\\
		=& 0.    
		\end{split}    
		\end{equation}
		
		Also, by Young's Inequality, we have 
		\begin{equation}
		\label{eq:decomp_24}
		\begin{split}
		\langle \nabla \bar{F}_{q,k-1}(x_q^{(k)}) - &\nabla 
		\bar{F}_{q,k-2}(x_q^{(k-1)}) , \nabla \bar{F}_{q,k-2}(x_q^{(k-1)}) - 
		d_q^{(k-1)} \rangle \\
		&\leq \frac{1}{2\alpha_k} \|\nabla \bar{F}_{q,k-1}(x_q^{(k)}) - \nabla 
		\bar{F}_{q,k-2}(x_q^{(k-1)}) \|^2 + 
		\frac{\alpha_k}{2}\Delta_q^{(k-1)}.  
		\end{split}
		\end{equation}
		
		Now we turn to bound $\| \nabla \bar{F}_{q,k-1}(x_q^{(k)}) - \nabla 
		\bar{F}_{q,k-2}(x_q^{(k-1)})\|^2 \triangleq z_{q,k}^2.$ In fact, we have
		
		\begin{equation*}
		\begin{split}
		\expect[z_{q,k}^2] &= \expect[\expect[\| \nabla 
		\bar{F}_{q,k-1}(x_q^{(k)}) - \nabla \bar{F}_{q,k-2}(x_q^{(k-1)})\|^2 
		|\mathcal{F}_{q,k-2}]]  \\
		&= \expect[\expect[\| \frac{\sum_{i=k}^K \nabla 
		F_{t_{q,i}}(x_q^{(k)})}{K-k+1} - \frac{\sum_{i=k-1}^K \nabla 
		F_{t_{q,i}}(x_q^{(k-1)})}{K-k+2}\|^2 |\mathcal{F}_{q,k-2}]] \\
		&= \expect[ \expect[ \| \frac{\sum_{i=k}^K \nabla 
		F_{t_{q,i}}(x_q^{(k)})-\nabla F_{t_{q,i}}(x_q^{(k-1)})}{K-k+2} + 
		\frac{\sum_{i=k}^K \nabla F_{t_{q,i}}(x_q^{(k)})}{(K-k+1)(K-k+2)} \\
		&\qquad - \frac{\nabla F_{t_{q,k-1}}(x_q^{(k-1)})}{K-k+2} \|^2 
		|\mathcal{F}_{q,k-2}]] \\
		&\leq \expect[ \expect[ (\sum_{i=k}^K \| \frac{\nabla 
		F_{t_{q,i}}(x_q^{(k)})-\nabla F_{t_{q,i}}(x_q^{(k-1)})}{K-k+2}\| + 
		\sum_{i=k}^K \| \frac{\nabla F_{t_{q,i}}(x_q^{(k)})}{(K-k+1)(K-k+2)} \| 
		\\
		& \qquad + \| \frac{\nabla F_{t_{q,k-1}}(x_q^{(k-1)})}{K-k+2} \|)^2 
		|\mathcal{F}_{q,k-2}]].
		\end{split}
		\end{equation*}
		where the inequality comes from the Triangle Inequality of norms.
		
		Recall the update rule where $x_q^{(k)} = x_q^{(k-1)} + 
		\frac{1}{K}v_q^{(k-1)}$ and the assumption that $F_t$ is $L_2$-smooth, 
		we have 
		\begin{equation*}
		\|\nabla F_{t_{q,i}}(x_q^{(k)})-\nabla F_{t_{q,i}}(x_q^{(k-1)}) \| \leq 
		L_2\frac{\| v_q^{(k)}\|}{K} = \frac{L_2R}{K}.    
		\end{equation*}
		
		Also by \cref{assump_on_f}, $\| \nabla F_{t_{q,i}}(x_q^{(k-1)}) \| \leq 
		L_1$. Therefore, we have
		\begin{equation}
		\label{eq:decomp_25}
		\begin{split}
		\expect[z_{q,k}^2] & \leq  [ (K-k+1) \frac{L_2R}{K}\frac{1}{K-k+2} + 
		\frac{L_1}{K-k+2} +(K-k+1)\frac{L_1}{(K-k+1)(K-k+2)}  ]^2\\
		&\leq \left( \frac{L_2R+2L_1}{K-k+2} \right)^2 \\
		&\triangleq \frac{G}{(K-k+2)^2}.
		\end{split}
		\end{equation}
		
		Combining 
		\cref{eq:delta,eq:decomp_21,eq:decomp_22,,eq:decomp_23,,eq:decomp_24,,eq:decomp_25},
		we have
		\begin{equation*}
		\begin{split}
		\expect[\Delta_q^{(k)}] &\leq \rho_k^2 \sigma^2 + (1-\rho_k)^2 
		\expect[\Delta_q^{(k-1)}] + (1-\rho_k)^2 \frac{G}{(K-k+2)^2} \\
		& \qquad + (1-\rho_k)^2\left[ \frac{G}{\alpha_k (K-k+2)^2} + 
		\alpha_k\expect[\Delta _q^{(k-1)}] \right].
		\end{split}    
		\end{equation*}
	\end{proof}
	
	Applying \cref{lem:delta_decomp} and setting $\alpha_k = \frac{\rho_k}{2}, 
	\forall k \in {1, 2, \cdots, K}$, we have
	\begin{equation*}
	\begin{split}
	\expect[\Delta_q^{(k)}] \leq \rho_k^2 \sigma^2 + 
	\frac{G}{(K-k+2)^2}(1-\rho_k)^2 \left( 1+ \frac{2}{\rho_k} \right) + 
	\expect[\Delta_q^{(k-1)}](1-\rho_k)^2\left(1+\frac{\rho_k}{2}\right). 
	\end{split}    
	\end{equation*}
	
	Note that if $0 < \rho_k \leq 1$, then we have 
	$$(1-\rho_k)^2\left(1+\frac{2}{\rho_k} \right) \leq 
	\left(1+\frac{2}{\rho_k} \right)$$
	and
	$$(1-\rho_k)^2\left(1+\frac{\rho_k}{2} \right) \leq (1 - \rho_k).$$
	
	So in this case, we have 
	\begin{equation}
	\label{eq:delta_iter}
	\expect[\Delta_q^{(k)}] \leq \rho_k^2 \sigma^2 + \frac{G}{(K-k+2)^2}\left( 
	1+ \frac{2}{\rho_k} \right) + \expect[\Delta_q^{(k-1)}](1-\rho_k).
	\end{equation}
	
	\begin{lemma}
		\label{lem:delta_final}
		Under the setting of \cref{thm:one_shot}, 
		we have
		\begin{equation*}
		\expect[\Delta_q^{(k)}] \leq
		\begin{cases}
		\frac{N}{(k+4)^{2/3}}, & \quad \text{when } 1 \leq k \leq \frac{K}{2}.\\
		\frac{N}{(K-k+1)^{2/3}},  & \quad \text{when } \frac{K}{2}+1 \leq k 
		\leq K.
		\end{cases}
		\end{equation*}
		where $N = \max\{ 5^{2/3}(L_1+M_0)^2, 4\sigma^2+32G,2.25\sigma^2+7G/3 
		\}$.
	\end{lemma}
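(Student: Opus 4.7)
The plan is to prove the two piecewise bounds by induction on $k$, treating the two phases of the step-size schedule separately and handling the transition at $k = K/2+1$ carefully.

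\textbf{Phase 1 ($1 \le k \le K/2$).} I would show $\expect[\Delta_q^{(k)}] \le N/(k+4)^{2/3}$ by induction. For the base case $k=1$, since $d_q^{(0)}=0$ we have $d_q^{(1)} = \rho_1 \tnabla F_{t_{q,1}}(x_q^{(1)})$, and the triangle inequality together with \cref{assump_on_f,assump_on_estimate} yields $\|\nabla \bar F_{q,0}(x_q^{(1)}) - d_q^{(1)}\| \le L_1 + M_0$; the defining inequality $N \ge 5^{2/3}(L_1+M_0)^2$ then gives the claim. For the inductive step, substitute $\rho_k = 2/(k+3)^{2/3}$ into \cref{eq:delta_iter} and use $\expect[\Delta_q^{(k-1)}] \le N/(k+3)^{2/3}$ to reduce the task to the scalar inequality
\begin{equation*}
\frac{4\sigma^2}{(k+3)^{4/3}} + \frac{G\bigl(1+(k+3)^{2/3}\bigr)}{(K-k+2)^2} + \Bigl(1-\frac{2}{(k+3)^{2/3}}\Bigr)\frac{N}{(k+3)^{2/3}} \le \frac{N}{(k+4)^{2/3}}.
\end{equation*}
The gain $\rho_k \cdot N/(k+3)^{2/3} = 2N/(k+3)^{4/3}$ must dominate both the natural gap $N/(k+3)^{2/3} - N/(k+4)^{2/3}$ (of order $N/(k+3)^{5/3}$) and the noise terms. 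Because $k \le K/2$ implies $K-k+2 \ge K/2+2$, the residual-error term is controlled by $1/(k+3)^{4/3}$, and the bound $N \ge 4\sigma^2 + 32G$ suffices.

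\textbf{Transition and Phase 2.} Observe that $\rho_{K/2+1} = 2/(K/2+4)^{2/3}$ still uses the phase-1 formula per the theorem's prescription. I would apply \cref{eq:delta_iter} once more at $k = K/2+1$, feeding in $\expect[\Delta_q^{(K/2)}] \le N/(K/2+4)^{2/3}$, to establish the phase-2 base case $\expect[\Delta_q^{(K/2+1)}] \le N/(K/2)^{2/3}$. Then I would induct forward for $K/2+2 \le k \le K$ to show $\expect[\Delta_q^{(k)}] \le N/(K-k+1)^{2/3}$, now using $\rho_k = 1.5/(K-k+2)^{2/3}$. The scalar inequality to verify becomes
\begin{equation*}
\frac{2.25\sigma^2}{(K-k+2)^{4/3}} + \frac{G\bigl(1 + (4/3)(K-k+2)^{2/3}\bigr)}{(K-k+2)^2} + \Bigl(1 - \frac{1.5}{(K-k+2)^{2/3}}\Bigr)\frac{N}{(K-k+2)^{2/3}} \le \frac{N}{(K-k+1)^{2/3}},
\end{equation*}
where the residual-error term now contributes at order $1/(K-k+2)^{4/3}$, of the same magnitude as the noise. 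The remaining condition $N \ge 2.25\sigma^2 + 7G/3$ is exactly what is needed to close the recursion.

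\textbf{Main obstacle.} The delicate point is the phase transition: the target bound flips from decreasing to increasing in $k$, and the step-size formula is discontinuous there. One must verify that the phase-1 bound at $k = K/2$ is strong enough to seed phase 2 after one further iteration that still uses the phase-1 $\rho_k$. A secondary subtlety is that the residual-error term $G/(K-k+2)^2$ is negligible in phase 1 but is of the same order as $\rho_k^2 \sigma^2$ in phase 2, which is precisely why $N$ is defined as the maximum of three separate expressions—the base-case constant, a phase-1 constant with a large $32G$ coefficient, and a tighter phase-2 constant with a smaller $7G/3$ coefficient.
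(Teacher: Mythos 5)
Your proposal is correct and follows essentially the same route as the paper's proof: the same recursion from \cref{eq:delta_iter}, the same two-phase induction with base case $\Delta_q^{(1)} \le (L_1+M_0)^2$, the same handling of the transition at $k=K/2+1$, and the same three constants whose maximum defines $N$. The only cosmetic difference is in bounding the residual term $G(1+2/\rho_k)/(K-k+2)^2$ in phase 1 — you use $K-k+2 \ge K/2+2$ whereas the paper uses $K-k+2 \ge k$ together with $(k+3)/k \le 4$ — and both yield the $32G$ coefficient.
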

	
	\begin{proof}[Proof of \cref{lem:delta_final}]
		When $1 \leq k \leq \frac{K}{2} + 1$, since $\rho_k = 
		\frac{2}{(k+3)^{2/3}}$, 
		we have $0 < \rho_k \leq 1$, and by \cref{eq:delta_iter}
		\begin{equation*}
		\label{eq:delta_iteration1}
		\begin{split}
		\expect[\Delta_q^{(k)}] &\leq \frac{4\sigma^2}{(k+3)^{4/3}} + 
		\frac{G}{k^2}[1+(k+3)^{2/3}] + \expect[\Delta_q^{(k-1)}]\left( 1 - 
		\frac{2}{(k+3)^{2/3}} \right)  \\
		&= \frac{4\sigma^2}{(k+3)^{4/3}} + 
		\frac{G}{(k+3)^2}\left(\frac{k+3}{k}\right)^2[1+(k+3)^{2/3}] + 
		\expect[\Delta_q^{(k-1)}]\left( 1 - \frac{2}{(k+3)^{2/3}} \right)  \\
		&\leq \frac{4\sigma^2}{(k+3)^{4/3}} + 
		\frac{G(1+3)^2}{(k+3)^2}[1+(k+3)^{2/3}] + 
		\expect[\Delta_q^{(k-1)}]\left( 1 - \frac{2}{(k+3)^{2/3}} \right) \\
		&\leq \frac{4\sigma^2}{(k+3)^{4/3}} + \frac{16G}{(k+3)^{4/3}} + 
		\frac{16G}{(k+3)^{4/3}} + \expect[\Delta_q^{(k-1)}]\left( 1 - 
		\frac{2}{(k+3)^{2/3}} \right) \\
		&= \frac{4\sigma^2+32G}{(k+3)^{4/3}} + \expect[\Delta_q^{(k-1)}]\left( 
		1 - \frac{2}{(k+3)^{2/3}} \right) \\
		&\triangleq \frac{N_0}{(k+3)^{4/3}} + \expect[\Delta_q^{(k-1)}]\left( 1 
		- \frac{2}{(k+3)^{2/3}} \right).
		\end{split}
		\end{equation*}
		
		Recall that $\Delta_q^{(k)} = \|\nabla \bar{F}_{q,k-1}(x_q^{(k)}) - 
		d_q^{(k)} \|^2$, and thus 
		\begin{equation*}
		\begin{split}
		\Delta_q^{(1)} &= \|\nabla \bar{F}_{q,0}(0) - d_q^{(1)}  \|^2 \\
		&= \| \frac{\sum_{i=1}^K \nabla F_{t_{q,i}}(0)}{K} - 
		\frac{2}{(1+3)^{2/3}} \tnabla F_{q,1}(0) \|^2 \\
		&\leq \left( \sum_{i=1}^{K} \|  \frac{\nabla F_{t_{q,i}}(0)}{K}\| + \| 
		\frac{2}{4^{2/3}} \tnabla F_{q,1}(0) 
		\| \right)^2 \\
		&\leq \left(K\frac{L_1}{K} + M_0 \right)^2 \\
		&= (L_1+M_0)^2. 
		\end{split}
		\end{equation*}
		
		Set $N_1 = \max \{ 5^{2/3}(L_1+M_0)^2, N_0 \}$, then we claim that 
		$\expect[\Delta_q^{(k)}] \leq \frac{N_1}{(k+4)^{2/3}}$ for any $k$ 
		satisfying $1 \leq k \leq \frac{K}{2} + 1$. We prove it by induction. 
		It holds for $k = 1$ because of the definition of $N_1$. Assume it 
		holds for $k - 1$, \emph{i.e.}, $\expect[\Delta_q^{(k-1)}] \leq 
		\frac{N_1}{(k+3)^{2/3}}$, then
		\begin{equation*}
		\begin{split}
		\expect[\Delta_q^{(k)}] &\leq \frac{N_1}{(k+3)^{4/3}} + 
		\expect[\Delta_q^{(k-1)}]\left( 1 - \frac{2}{(k+3)^{2/3}} \right) \\
		&\leq \frac{N_1}{(k+3)^{4/3}} + \frac{N_1}{(k+3)^{2/3}}\left( 1 - 
		\frac{2}{(k+3)^{2/3}} \right) \\
		&= \frac{N_1[(k+3)^{2/3} - 1]}{(k+3)^{4/3}}. 
		\end{split}    
		\end{equation*}
		
		Since $(k+4)^2 = k^2 + 8k+ 16 \leq k^2 + 6k +9 + 1 + 3(k+3) \leq k^2 + 
		6k +9 + 1 + 3(k+3)^{4/3} + 3(k+3)^{2/3} = [(k+3)^{2/3}+1]^3$, by taking 
		the cube roots of both sides, we have $(k+4)^{2/3} \leq (k+3)^{2/3}+1$,
		which implies that $[(k+3)^{2/3}-1](k+4)^{2/3} \leq [(k+3)^{2/3}-1] 
		[(k+3)^{2/3}+1] \leq (k+3)^{4/3}$, \emph{i.e.}, 
		$\frac{(k+3)^{2/3}-1}{(k+3)^{4/3}} \leq \frac{1}{(k+4)^{2/3}}$. So we 
		have $\expect[\Delta_q^{(k)}] \leq \frac{N_1}{(k+4)^{2/3}}$. By 
		induction, we have 
		\begin{equation}
		\label{eq:delta_bound1}
		\expect[\Delta_q^{(k)}] \leq \frac{N_1}{(k+4)^{2/3}}, \forall k \in 
		[\frac{K}{2}+1].    
		\end{equation}
		
		Now we turn to consider the case where $\frac{K}{2}+2 \leq k \leq K$. 
		Here we set $\rho_k = \frac{1.5}{(K-k+2)^{2/3}}$, note that $0 < \rho_k 
		\leq \frac{1.5}{2^{2/3}} < 1$, then we have
		\begin{equation*}
		\label{eq:delta_iteration2}
		\begin{split}
		\expect[\Delta_q^{(k)}] &\leq \frac{2.25\sigma^2}{(K-k+2)^{4/3}} + 
		\frac{G}{(K-k+2)^{2}} \left[1+\frac{4}{3}(K-k+2)^{2/3} \right] \\
		& \qquad + \expect[\Delta_q^{(k-1)}]\left[ 1- 
		\frac{1.5}{(K-k+2)^{2/3}}\right] \\
		&\leq \frac{2.25\sigma^2}{(K-k+2)^{4/3}} + \frac{G}{(K-k+2)^{4/3}} + 
		\frac{4}{3}\frac{G}{(K-k+2)^{4/3}}\\
		& \qquad + \expect[\Delta_q^{(k-1)}]\left[ 1- 
		\frac{1.5}{(K-k+2)^{2/3}}\right]\\
		&= \frac{2.25\sigma^2+7G/3}{(K-k+2)^{4/3}}+ 
		\expect[\Delta_q^{(k-1)}]\left[ 1- \frac{1.5}{(K-k+2)^{2/3}}\right] \\
		&\triangleq \frac{N_2}{(K-k+2)^{4/3}}+ \expect[\Delta_q^{(k-1)}]\left[ 
		1- \frac{1.5}{(K-k+2)^{2/3}}\right].
		\end{split}    
		\end{equation*}
		
		Define $N = \max\{N_1, N_2\}$, then we claim that 
		$\expect[\Delta_q^{(k)}] \leq 
		\frac{N}{(K-k+1)^{2/3}}$, for any $k$ satisfying $\frac{K}{2}+1 \leq k 
		\leq K$, 
		we will prove it by induction. When $k = \frac{K}{2}+1$, by 
		\cref{eq:delta_bound1}, we have
		$$\expect[\Delta_q^{(K/2+1)}] \leq \frac{N_1}{(K/2+1+4)^{2/3}} \leq 
		\frac{N}{(K/2)^{2/3}} = \frac{N}{(K-(K/2+1)+1)^{2/3}}.$$
		
		When it holds for $k-1$, \emph{i.e.}, $\expect[\Delta_q^{(k-1)}] \leq 
		\frac{N}{(K-k+2)^{2/3}}$, we have
		\begin{equation*}
		\begin{split}
		\expect[\Delta_q^{(k)}] &\leq \frac{N}{(K-k+2)^{4/3}}+ 
		\frac{N}{(K-k+2)^{2/3}}\frac{(K-k+2)^{2/3}-1.5}{(K-k+2)^{2/3}} \\
		&= \frac{N[(K-k+2)^{2/3}-0.5]}{(K-k+2)^{4/3}}. 
		\end{split}    
		\end{equation*}
		
		Since $[(K-k+2)^{2/3}-0.5](K-k+1)^{2/3} \leq 
		[(K-k+2)^{2/3}-0.5][(K-k+2)^{2/3}+0.5] \leq (K-k+2)^{4/3}$, 
		\emph{i.e.}, $\frac{(K-k+2)^{2/3}-0.5}{(K-k+2)^{4/3}} \leq 
		\frac{1}{(K-k+1)^{2/3}}$, so we have $\expect[\Delta_q^{(k)}] \leq 
		\frac{N}{(K-k+1)^{2/3}}$. By induction, we have
		\begin{equation*}
		\expect[\Delta_q^{(k)}] 
		\leq \frac{N}{(K-k+1)^{2/3}}, \forall k \in \{K/2+1, K/2+2, \cdots, K\}.
		\end{equation*}
		
		Since $N_1 \leq N$, by \cref{eq:delta_bound1}, we also have
		\begin{equation*}
		\expect[\Delta_q^{(k)}] \leq \frac{N}{(k+4)^{2/3}}, \forall k \in 
		[\frac{K}{2}+1].    
		\end{equation*}
	\end{proof}
	
	
	Recall that in \cref{eq:regret}, we have
	\begin{equation*}
	\expect[\mathcal{R}_T] \leq \sum_{q=1}^Q\sum_{k=1}^K \frac{1}{2\beta^{(k)}} 
	\expect[\Delta_q^{(k)}] + \frac{D^2}{2}Q\sum_{k=1}^K \beta^{(k)} + 
	K\mathcal{R}_Q^{\mathcal{E}} + \frac{L_2D^2}{2}Q.
	\end{equation*}
	
	So if we set 
	\begin{equation*}
	\label{eq:beta}
	\beta^{(k)} = 
	\begin{cases}
	(k+4)^{-1/3},& \quad \text{when } 1 \leq k \leq \frac{K}{2};\\
	(K-k+1)^{-1/3},& \quad \text{when } \frac{K}{2}+1 \leq k \leq K;
	\end{cases}
	\end{equation*}
	then by \cref{lem:delta_final}, we have
	\begin{equation*}
	\sum_{k=1}^{K/2}\frac{\expect[\Delta_q^{(k)}]}{\beta^{(k)}} \leq 
	\sum_{k=1}^{K/2} \frac{N}{(k+4)^{1/3}} 
	\leq \sum_{k=1}^{K/2}\frac{N}{k^{1/3}}  \leq 
	\int_0^{K/2}\frac{N}{x^{1/3}}\mathrm{d}x 
	=\frac{3N}{2}\left(\frac{K}{2}\right)^{2/3} \leq N K^{2/3},    
	\end{equation*}
	and
	$$\sum_{k=K/2+1}^{K}\frac{\expect[\Delta_q^{(k)}]}{\beta^{(k)}} \leq 
	\sum_{k=K/2+1}^{K}\frac{N}{(K-k+1)^{1/3}} = 
	\sum_{i=1}^{K/2}\frac{N}{i^{1/3}} \leq N K^{2/3}.$$
	
	Similarly, we have 
	$$\sum_{k=1}^{K/2}\beta^{(k)} = \sum_{k=1}^{K/2} \frac{1}{(k+4)^{1/3}} \leq 
	K^{2/3}$$
	and
	$$\sum_{k=K/2+1}^{K}\beta^{(k)} = \sum_{k=K/2+1}^{K}\frac{1}{(K-k+1)^{1/3}} 
	\leq K^{2/3}.$$
	
	Therefore, we have
	\begin{equation*}
	\begin{split}
	\expect[\mathcal{R}_T] &\leq \sum_{q=1}^Q NK^{2/3} + \frac{D^2}{2}Q\cdot 
	2K^{2/3} + K\mathcal{R}_Q^{\mathcal{E}} + \frac{L_2D^2}{2}Q \\
	&= (N+D^2)QK^{2/3} + K\mathcal{R}_Q^{\mathcal{E}} + \frac{L_2D^2}{2}Q.
	\end{split}
	\end{equation*}
	
	Set $Q = T^{2/5}, K=T^{3/5}$, and recall that $\mathcal{R}_Q^{\mathcal{E}} 
	\leq C\sqrt{Q} = CT^{1/5}$, we have 
	\begin{equation*}
	\label{eq:regret_bound}
	\expect[\mathcal{R}_T] \leq (N+C+D^2)T^{4/5} + \frac{L_2D^2}{2}T^{2/5}.  
	\end{equation*}
\end{proof}

\section{Properties of Smoothed Functions}\label{app:smooth_property}
\begin{lemma}\label{lem:smooth_property}
	If $F$ is monotone, continuous DR-submodular, $L_1$-Lipschitz, and 
	$L_2$-smooth, then so is $\hat{F}_\delta$, and for all $x$ we have $| 
	\hat{F}_\delta(x) - F(x) | \leq L_1 \delta$.
\end{lemma}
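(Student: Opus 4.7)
The plan is to push every property through the expectation by linearity, monotonicity, and Jensen's inequality, using the fact that $\hat{F}_\delta(x) = \mathbb{E}_{v\sim B^d}[F(x+\delta v)]$ is just an average of shifted copies of $F$.

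First I would handle monotonicity. If $x \le y$, then $x + \delta v \le y+\delta v$ for every fixed $v$, so $F(x+\delta v) \le F(y+\delta v)$ by monotonicity of $F$; taking expectations over $v\sim B^d$ preserves the inequality. For continuous DR-submodularity, I would differentiate under the integral sign (justified because $\nabla F$ is uniformly bounded by $L_1$, so dominated convergence applies) to obtain
\[
\frac{\partial \hat{F}_\delta}{\partial x_i}(x) \;=\; \mathbb{E}_{v\sim B^d}\!\left[\frac{\partial F}{\partial x_i}(x+\delta v)\right].
\]
For $x \le y$, the DR-submodularity of $F$ gives $\frac{\partial F}{\partial x_i}(x+\delta v) \ge \frac{\partial F}{\partial x_i}(y+\delta v)$ pointwise in $v$, and taking expectation yields DR-submodularity of $\hat{F}_\delta$.

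Next I would verify the Lipschitz and smoothness bounds. For Lipschitzness, Jensen's inequality gives
\[
|\hat{F}_\delta(x) - \hat{F}_\delta(y)| \;\le\; \mathbb{E}_{v}\bigl[|F(x+\delta v) - F(y+\delta v)|\bigr] \;\le\; L_1\|x-y\|.
\]
For smoothness, using the same exchange-of-derivative-and-expectation identity together with Jensen applied to the norm,
\[
\|\nabla\hat{F}_\delta(x) - \nabla\hat{F}_\delta(y)\| \;\le\; \mathbb{E}_v\bigl[\|\nabla F(x+\delta v) - \nabla F(y+\delta v)\|\bigr] \;\le\; L_2\|x-y\|.
\]
Finally, the approximation bound comes from the same style of calculation with the shift playing the role of the perturbation:
\[
|\hat{F}_\delta(x) - F(x)| \;\le\; \mathbb{E}_v|F(x+\delta v) - F(x)| \;\le\; \mathbb{E}_v L_1 \|\delta v\| \;\le\; L_1\delta,
\]
since $\|v\|\le 1$ for $v\in B^d$.

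The only step that is not a one-line application of linearity or Jensen is the interchange $\nabla \hat{F}_\delta(x) = \mathbb{E}_v[\nabla F(x+\delta v)]$, which I expect to be the main subtlety. The cleanest justification is dominated convergence: the difference quotients $\bigl(F(x+he_i+\delta v)-F(x+\delta v)\bigr)/h$ converge pointwise in $v$ to $\partial F/\partial x_i(x+\delta v)$, and by $L_1$-Lipschitzness of $F$ they are dominated uniformly by $L_1$, which is integrable against the uniform distribution on the bounded set $B^d$. Everything else then reduces to pushing the listed inequalities through $\mathbb{E}_{v\sim B^d}[\,\cdot\,]$.
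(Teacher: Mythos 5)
Your proof is correct and follows essentially the same approach as the paper: the paper's only explicitly written argument is the $L_2$-smoothness bound, which it obtains by the identical interchange $\nabla\expect[F(\cdot+\delta v)]=\expect[\nabla F(\cdot+\delta v)]$ followed by Jensen, and it delegates monotonicity, DR-submodularity, Lipschitzness, and the $L_1\delta$ approximation bound to Lemmas 1 and 2 of \citet{chen2019black}. Your version is simply more self-contained, supplying the elementary pointwise-then-average arguments for the cited properties and a dominated-convergence justification for the derivative--expectation interchange that the paper takes for granted.
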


\begin{proof}
	By Lemmas 1 and 2 of 
	\citep{chen2019black}, we conclude that $\hat{F}_\delta$ is also monotone 
	continuous DR-submodular, $L_1$-Lipschitz and it holds that
	\begin{equation*}
	|\hat{F}_\delta(x) - F(x)| \leq L_1 \delta.     
	\end{equation*}
	
	For any $x, y$ in the domain of $\hat{F}_\delta$, we have
	\begin{equation*}
	\begin{split}
	\|\nabla \hat{F}_\delta(x) - \nabla \hat{F}_\delta(y)\| &= \|\nabla 
	\expect[F(x+\delta v)] - \nabla \expect[F(y+\delta v)]\| \\
	&= \|\expect [\nabla F(x+\delta v)] - \expect[\nabla F(y+\delta v)]\| \\
	&= \|\expect [\nabla F(x+\delta v) - \nabla F(y+\delta v)]\|  \\
	&\leq \expect[\|\nabla F(x+\delta v) - \nabla F(y+\delta v)\|] \\
	&\leq \expect[L_2 \|x-y \|] \\
	&= L_2 \|x-y\|.
	\end{split}    
	\end{equation*}
	
	So $\hat{F}_\delta$ is also $L_2$-smooth.   
\end{proof}

\section{Construction of $\delta$-Interior}\label{app:shrink_down_closed}
\cref{fig:delta} is the illustrations of $\delta$-interior and the construction 
method as discussed in \cref{lem:shrink_down_closed}.
\begin{figure}[ht]
	\begin{subfigure}[b]{0.45\textwidth}
		\hbox{\hspace{7.5em}\includegraphics[height=0.3\textwidth]{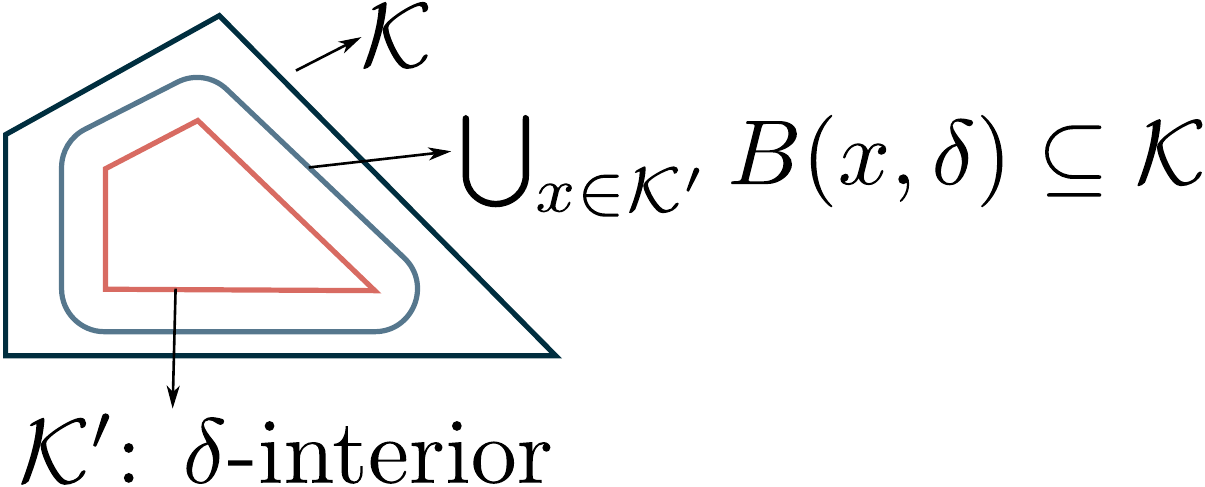}}
		\caption{Example of $\delta$-interior}
		\label{subfig:example}
	\end{subfigure}
	\hfill
	\begin{subfigure}[b]{0.45\textwidth}
		\hbox{\hspace{6em}\includegraphics[height=0.3\textwidth]{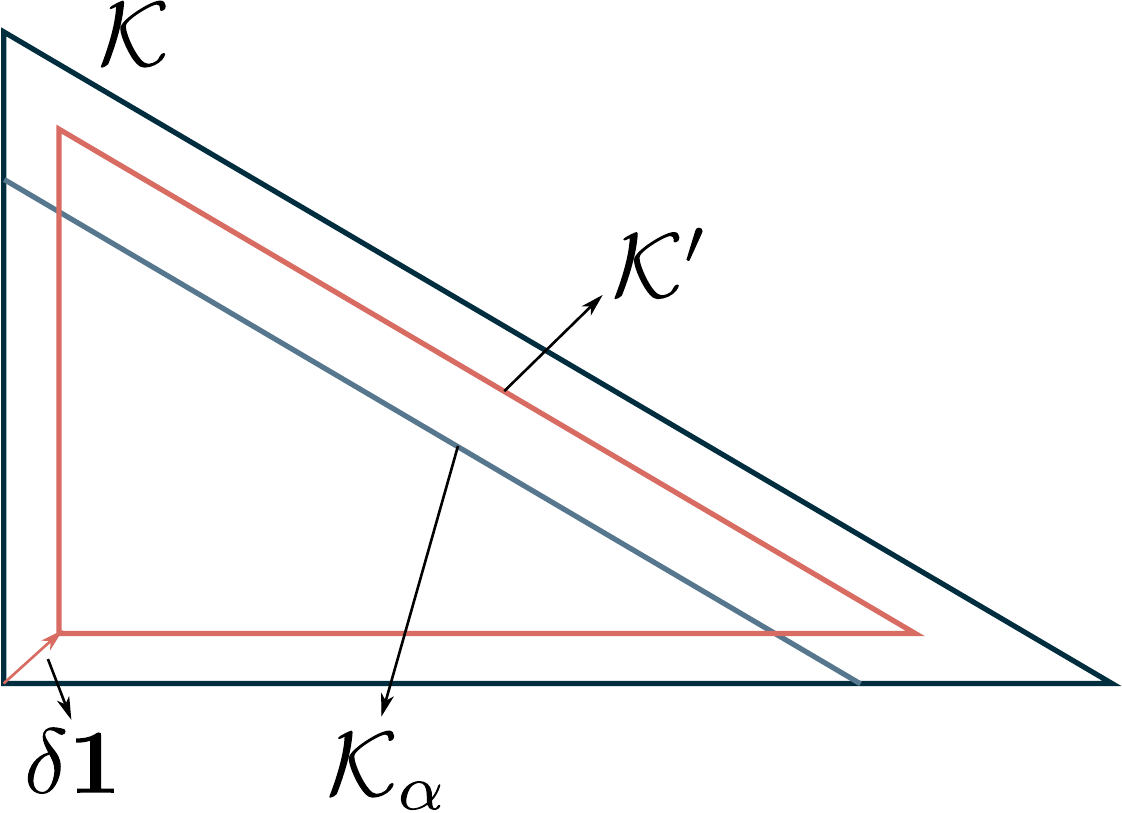}}
		\caption{Construction of $\delta$-interior}\label{subfig:construction}
	\end{subfigure}
	\caption{$\delta$-interior}
	\label{fig:delta}
\end{figure}

Now we turn to prove \cref{lem:shrink_down_closed}. We first show the following 
auxiliary lemma.

\begin{lemma}
	\label{lem:sphere}
	Consider a ball centered at the origin $o$. If point $a$ resides on the 
	sphere 
	but not in the non-negative orthant, there must exist a point $b$ on the 
	sphere  such that all the components 
	of $\overrightarrow{ab}$ are positive and all the components of 
	$\overrightarrow{ob}$ are non-negative. 
\end{lemma}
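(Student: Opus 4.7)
The plan is to construct $b$ explicitly as a properly rescaled, slightly perturbed version of the positive part of $a$.

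First, I would define $a^+ \in \mathbb{R}^d$ coordinatewise by $a^+_i = \max(a_i, 0)$. The hypothesis that $a$ lies on the sphere but not in the non-negative orthant means at least one coordinate of $a$ is strictly negative, so
\[
\|a^+\|^2 = \sum_{i:\, a_i > 0} a_i^2 < \sum_i a_i^2 = r^2,
\]
giving the strict bound $\|a^+\| < r$. Next, fix the all-ones vector $w = (1, \ldots, 1)$ and a small parameter $\epsilon > 0$ to be chosen, and set $b = \lambda (a^+ + \epsilon w)$ with $\lambda = r/\|a^+ + \epsilon w\|$. By construction $\|b\| = r$ and $b$ sits on the sphere, and since every coordinate of $a^+ + \epsilon w$ is strictly positive, $b_i \ge 0$ for every $i$, handling the $\overrightarrow{ob}$ part of the claim.

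It remains to verify $b_i > a_i$ for every $i$. For coordinates with $a_i \le 0$, the right-hand side is non-positive while $b_i > 0$, so the inequality is immediate. For coordinates with $a_i > 0$ we have $a^+_i = a_i$, so the required inequality becomes $\lambda(a_i + \epsilon) > a_i$; this holds whenever $\lambda \ge 1$. Since $\|a^+\| < r$ by the first step, for all sufficiently small $\epsilon > 0$ we have $\|a^+ + \epsilon w\| < r$, hence $\lambda > 1$, and the inequality holds. Choosing any such $\epsilon$ then finishes the construction.

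The only real subtlety is the coordinates where $a_i = 0$: without the perturbation $\epsilon w$, merely rescaling $a^+$ to norm $r$ would leave those coordinates of $b$ equal to zero and fail the \emph{strict} positivity of $\overrightarrow{ab}$. The $\epsilon w$ term is introduced precisely to break this degeneracy, while keeping $b$ in the non-negative orthant and, for small enough $\epsilon$, preserving $\|a^+ + \epsilon w\| < r$ so that $\lambda > 1$.
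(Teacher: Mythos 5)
Your proof is correct, and it takes a genuinely different route from the paper's. The paper reflects the negative coordinates of $a$ to get a symmetric point $b'$ on the sphere, then (in the case where some coordinates of $a$ are non-negative, so that $b'_j - a_j = 0$ there) repairs the failure of strict positivity by subtracting a small $\epsilon$ from the reflected coordinates and redistributing the freed-up norm $A = \frac{2\epsilon\sum_{i=1}^k \epsilon_i - k\epsilon^2}{d-k}$ onto the remaining coordinates under a square root, with an exact norm computation keeping $b$ on the sphere. Your construction instead takes the positive part $a^+$, observes that the presence of a strictly negative coordinate forces $\|a^+\| < r$, adds a uniform perturbation $\epsilon\mathbf{1}$, and rescales back up to the sphere; the key point that the scaling factor $\lambda$ exceeds $1$ (for $\epsilon$ small) is exactly what makes $b_i = \lambda(a_i+\epsilon) > a_i$ on the positive coordinates, while positivity of $b$ handles the rest. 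Your argument avoids the paper's case split ($k=d$ versus $k<d$) and its explicit norm-balancing algebra, at the mild cost of a non-constructive ``sufficiently small $\epsilon$'' (which could be made explicit via $\|a^+ + \epsilon\mathbf{1}\| \le \|a^+\| + \epsilon\sqrt{d}$). Both proofs are elementary and both establish exactly what the lemma claims.
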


\begin{proof}[Proof of \cref{lem:sphere}]
	Without loss of generality, we assume the Cartesian coordinates of $a$ are 
	$(-\epsilon_1, -\epsilon_2, \cdots, -\epsilon_k, \epsilon_{k+1}, \cdots, 
	\epsilon_d)$, where $\epsilon_i >0, \forall i \in [k]$, $\epsilon_j \geq 0, 
	\forall j \in \{k+1, \cdots, d \}$, and $k \in [d]$. In order to find a 
	point $b$, we first define the symmetric point $b' = (\epsilon_1, 
	\epsilon_2, \cdots, \epsilon_k, \epsilon_{k+1}, \cdots, \epsilon_d)$.
	
	If $k=d$, we can set $b=b'$, then $b$ is on the sphere, 
	$b_i-a_i=2\epsilon_i >0$, and $b_i = \epsilon_i > 0, \forall i \in [d]$. 
	
	If $k < d$, we can add some perturbations on $b'$. Let $\epsilon = 
	\min\{\epsilon_1, \epsilon_2, \cdots, \epsilon_k\} > 0, A = 
	\frac{2\epsilon\sum_{i=1}^k \epsilon_i -k\epsilon^2}{d-k} > 0$, and set $b 
	= b' + (-\epsilon, -\epsilon,\cdots, -\epsilon, 
	\sqrt{A+\epsilon_{k+1}^2}-\epsilon_{k+1}, 
	\cdots,\sqrt{A+\epsilon_{d}^2}-\epsilon_{d}) = (\epsilon_1 -\epsilon, 
	\epsilon_2-\epsilon, \cdots, \epsilon_k - \epsilon, 
	\sqrt{A+\epsilon_{k+1}^2}, \cdots, \sqrt{A+\epsilon_{d}^2})$. Note that 
	$|ob|^2=\sum_{i=1}^k(\epsilon_i -\epsilon)^2 + \sum_{j=k+1}^d 
	(A+\epsilon_j^2) = \sum_{i=1}^k \epsilon_i^2 
	-2\epsilon\sum_{i=1}^k\epsilon_i +k \epsilon^2 + 2\epsilon\sum_{i=1}^k 
	\epsilon_i -k\epsilon^2 + \sum_{j=k+1}^d\epsilon_j^2 = \sum_{l=1}^d 
	\epsilon_l^2 = |oa|^2$, so $b$ is also on the sphere. Moreover, $b_i - a_i= 
	2\epsilon_i - \epsilon > 0, \forall i \in [k]$, $b_j - a_j = 
	\sqrt{A+\epsilon_j^2} - \epsilon_j >0, \forall j \in \{k+1, \cdots,d\}$, 
	and $b_i = \epsilon_i - \epsilon \geq 0, \forall i \in [k], b_j = 
	\sqrt{A+\epsilon_j^2} > 0, \forall j \in \{k+1, \cdots,d\}$. 
	
	Therefore, all the scalar components of $\overrightarrow{ab}$ are positive, 
	and all the scalar components of $\overrightarrow{ob}$ are non-negative.
\end{proof}

\begin{figure}[ht]
	\begin{subfigure}[b]{0.5\textwidth}
		\hbox{\hspace{5.7em}\includegraphics[height=0.3\textwidth]{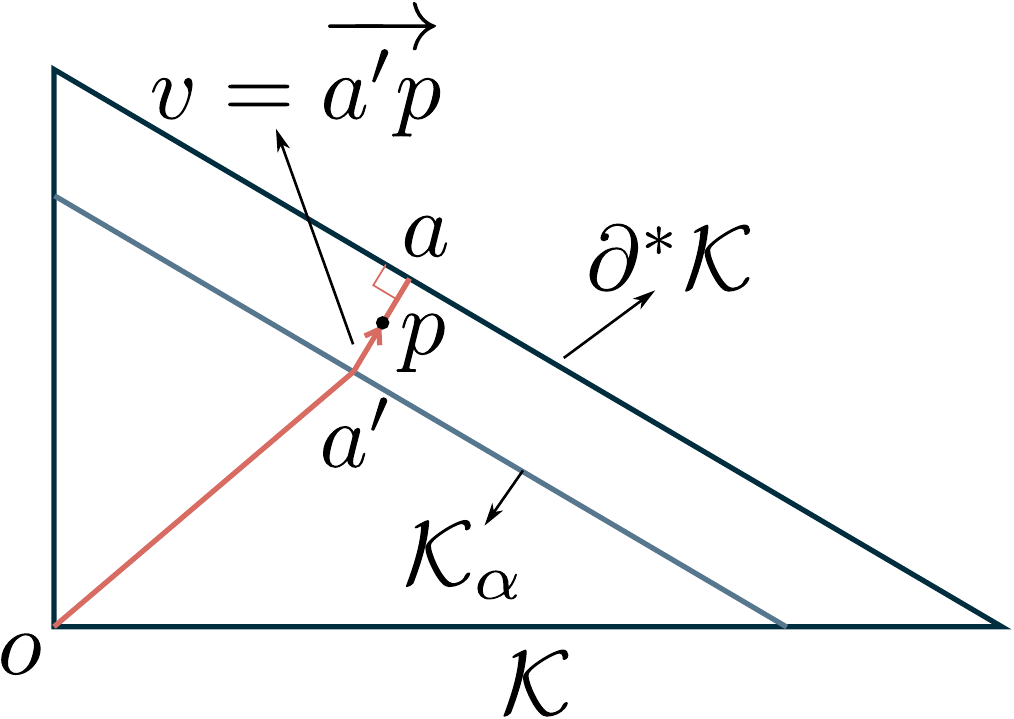}}
		\caption{}
		\label{subfig:illu_a}
	\end{subfigure}
	\hfill
	\begin{subfigure}[b]{0.5\textwidth}
		\hbox{\hspace{6.4em}\includegraphics[height=0.42\textwidth]{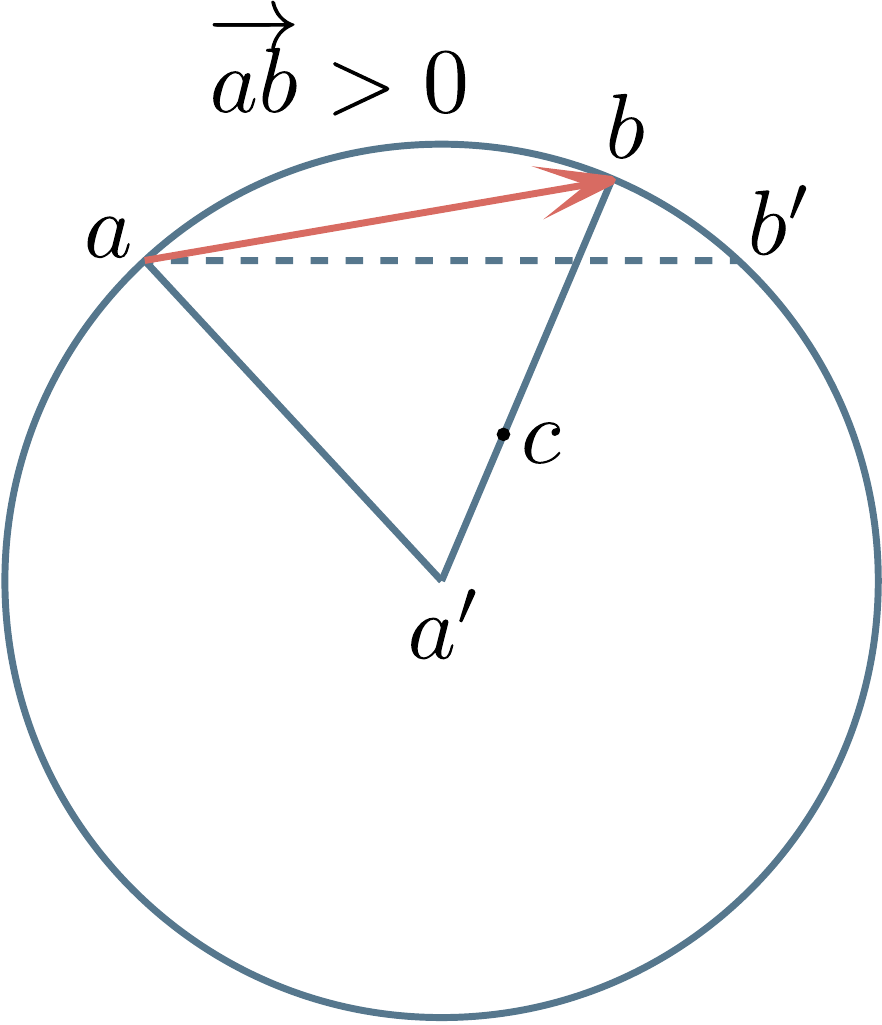}}
		\caption{}
		\label{subfig:illu_b}
	\end{subfigure}
	
	\begin{subfigure}[b]{0.5\textwidth}
		\hbox{\hspace{6.5em}\includegraphics[height=0.385\textwidth]{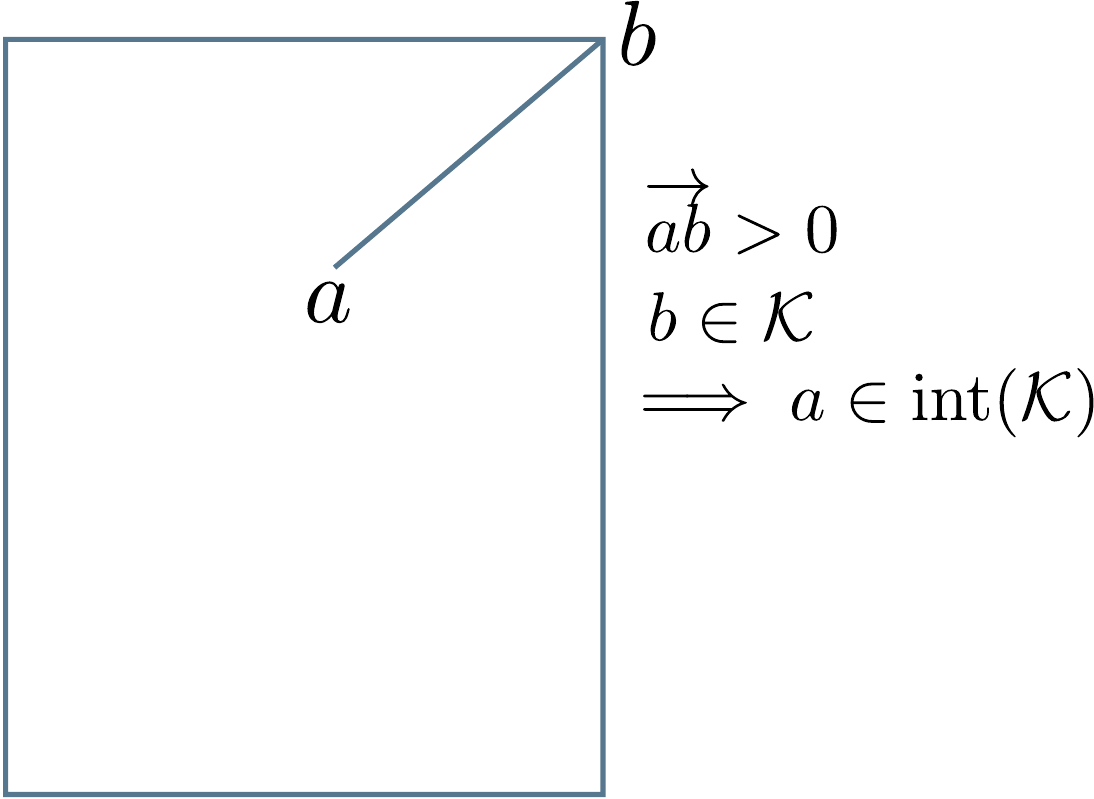}}
		\caption{}
		\label{subfig:illu_c}
	\end{subfigure}
	\hfill
	\begin{subfigure}[b]{0.5\textwidth}
		\hbox{\hspace{4.5em}\includegraphics[height=0.32\textwidth]{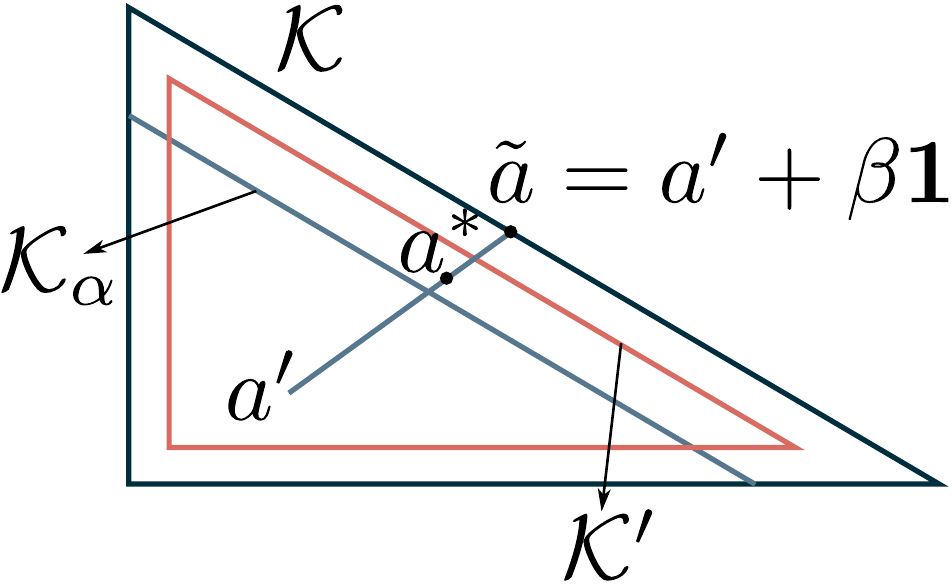}}
		\caption{}
		\label{subfig:illu_d}
	\end{subfigure}
	\label{fig:illu}
	\caption{Illustrations for Proof of \cref{lem:shrink_down_closed}}
\end{figure}

\begin{proof}[Proof of \cref{lem:shrink_down_closed}]
	Since $\constraint$ is convex, compact, and down-closed, and only shrinkage 
	and translation are involved, so $\constraint'$ is also convex, compact, 
	and down-closed. In order to prove that $\constraint'$ is a 
	$\delta$-interior of $\constraint$, note that thanks to the $\delta \one$ 
	translation, the distance between $\constraint'$ and the face which 
	contains 0 (\emph{i.e.}, the set $\partial^0 \constraint = \{ x \in 
	\partial \constraint | \exists i \in [d] \text{ such that } x_i =0 \}$), is 
	no less than $\delta$. In other words, for every $a^* \in \constraint'$, we 
	have $\inf_{x \in \partial^0 \constraint}d(x, a^*) \geq \delta$.
	
	So we only need to consider the remaining points on $\partial \constraint$, 
	which we denote as $\partial^* \constraint = \partial \constraint \setminus 
	\partial^0 \constraint = \{x \in \partial \constraint | \forall i \in [d], 
	x_i >0 \}$. We also denote the closure of $\partial^* \constraint$ as 
	$\text{cl}(\partial^* \constraint)$, which is a subset of $\partial 
	\constraint$. 
	Since for every point $a^* \in \constraint'$, there is a point $a' = a^* 
	-\delta \one \in \constraint_\alpha$, and $|a'a^*|=\sqrt{d}\delta$, we can 
	first analyze $\inf_{s \in \partial^* \constraint}d(s,a')$, and then upper 
	bound $\inf_{s \in \partial^* \constraint}d(s,a^*)$ by triangle inequality. 
	
	For any point $a' \in \constraint_\alpha$, suppose the point $a \in 
	\text{cl}(\partial^* \constraint)$ satisfies $|aa'| = \inf_{x \in 
	\partial^* \constraint} d(x, a')$ (\cref{subfig:illu_a}).
	We claim that all the scalar components of the vector 
	$\overrightarrow{a'a}$ are non-negative. We will prove it by contradiction. 
	Consider a ball with $a'$ as the center and $|a'a|$ as the radius.
	If we regard $a'$ as the origin $o$, then the assumption that 
	$\overrightarrow{a'a}$ has negative scalar component is equivalent to that 
	$a$ is not in the non-negative orthant.
	
	By \cref{lem:sphere}, there exists a point $b$, such that $|a'b| = |a'a|$, 
	all the scalar components of $\overrightarrow{ab}$ are positive, and all 
	the scalar components of $\overrightarrow{a'b}$ are non-negative 
	(\cref{subfig:illu_b}). Then we claim $b \in \constraint$, which will be 
	also proved by contradiction. If $b \notin \constraint$, since $a \in 
	\text{cl}(\partial^* \constraint)$ implies $a_i \geq 0, \forall i$, the 
	fact that all the scalar components of $\overrightarrow{ab}$ are positive 
	implies $b_i > 0, \forall i$. 
	
	Since $a' \in \constraint_\alpha$, there must be a point $c \neq a'$ in the 
	line segment $\overline{a'b}$ such that $c \in \partial \constraint$. 
	To prove it, note that $a' \in \constraint_\alpha \implies a' \in 
	(1-\alpha)\constraint$, and $(\sqrt{d}+1)\delta B^d_{\geq0} = \alpha 
	rB^d_{\geq0} \subseteq \alpha \constraint$. So, $a' + (\sqrt{d}+1)\delta 
	B^d_{\geq0} \subseteq (1-\alpha)\constraint + \alpha \constraint = 
	\constraint$ by the convexity of $\constraint$. On the other hand, since 
	all the scalar components of $\overrightarrow{a'b}$ are non-negative, the 
	intersection between the line segment $\overline{a'b}$ and the set $a' + 
	(\sqrt{d}+1)\delta B^d_{\geq0}$ must contains point other than $a'$. We 
	denote this point as $c'$, then $c' \in a' + (\sqrt{d}+1)\delta B^d_{\geq0} 
	\subseteq \constraint$. By the convexity of $\constraint$, the continuity 
	of the line segment $\overline{a'b}$, and the assumption that $b \notin 
	\constraint$, there must be a point $c \neq a'$ in $\overline{a'b}$ such 
	that $c \in \partial \constraint$.
	
	Then $c \neq a', a'_i \geq 0, b_i >0, c\in \overline{a'b}$ imply that $c_i 
	> 0, \forall i$, thus $c \in \partial^* \constraint$. Moreover, since we 
	assume $b \notin \constraint$, we have $|a'c| < |a'b| = |a'a|$, which is 
	contradictory with the assumption that $|a'a| = \inf_{x \in \partial^* 
	\constraint} d(x, a')$.
	
	So we must have $b \in \constraint$. Since the scalar components of 
	$\overrightarrow{ab}$ all all positive, and $\constraint$ is down-closed 
	($0 \leq x \leq y, y\in \constraint \implies x \in \constraint$), we 
	conclude that $a$ is an interior point of $\constraint$ 
	(\cref{subfig:illu_c}), which is contradictory to the assumption that $a 
	\in \text{cl}(\partial^* \constraint)$. So we have proved that all the 
	scalar components of the vector $\overrightarrow{a'a}$ are non-negative.
	
	Then we proceed to show $|a'a| \geq (\sqrt{d}+1)\delta$. Let $v$ be the 
	vector $\frac{(\sqrt{d}+1)\delta}{|a'a|} \overrightarrow{a'a}$, and $p$ be 
	the point such that $\overrightarrow{a'p} = v$ (\cref{subfig:illu_a}). Then 
	$|v| = (\sqrt{d}+1)\delta$ and all the scalar components of $v$ are 
	non-negative, \emph{i.e.}, $v \in (\sqrt{d}+1)\delta B^d_{\geq0} = \alpha 
	rB^d_{\geq0} \subseteq \alpha \constraint$. We also have $a' \in 
	\constraint_\alpha = (1-\alpha)\constraint$, thus $p \in 
	(1-\alpha)\constraint + \alpha \constraint = \constraint$ by the convexity 
	of $\constraint$. Since $a \in \text{cl}(\partial^* \constraint)$, we have 
	$|a'a| \geq |a'p| = |v| = (\sqrt{d}+1)\delta.$
	
	Let $a^* = a' + \delta \one$ be the translated point of $a'$. Then for any 
	point $s \in \partial^* \constraint$, by triangle inequality, we have 
	$|a^*s| \geq |a's| - |a'a^*| \geq |a'a| - |a'a^*| \geq (\sqrt{d}+1)\delta - 
	\sqrt{d}\delta = \delta$. So $\inf_{x \in \partial^* \constraint} d(x, a^*) 
	\geq \delta$. Since $a'$ can be arbitrary point in $\constraint_\alpha$, 
	the inequality holds for every point $a^* \in \constraint'$. Recall that we 
	have proved that for every $a^* \in \constraint', \inf_{x \in \partial^0 
	\constraint}d(x, a^*) \geq \delta$, where $\partial^0 \constraint = \{ x 
	\in \partial \constraint | \exists i \in [d] \text{ such that } x_i =0 \} = 
	\partial \constraint \setminus \partial^* \constraint$. Therefore, we 
	conclude that for every point $a^* \in \constraint', \inf_{x \in \partial 
	\constraint}d(x, a^*) \geq \delta$.
	
	So we only need to prove $\constraint' \subseteq \constraint$. For every 
	$a^* \in \constraint'$, since $a' = a^* -\delta \one \in 
	\constraint_\alpha$, there must be a positive $\beta$, such that $\tilde{a} 
	= a' + \beta \one \in \partial^* \constraint$ (\cref{subfig:illu_d}). We 
	have shown that $\inf_{x \in \partial^* \constraint}d(x,a') \geq 
	(\sqrt{d}+1)\delta$, so $\beta \geq \frac{\sqrt{d}+1}{\sqrt{d}}\delta > 
	\delta$. So $a^* = a' + \delta \one$ must be in the segment of 
	$\overline{a'\tilde{a}}$. Then we have $a^* \in \constraint$, by the fact 
	that $a', \tilde{a} \in \constraint$, and the convexity of $\constraint$. 
	Therefore, $\constraint' \subseteq \constraint$, and thus $\constraint'$ is 
	a $\delta$-interior of $\constraint$.
	
	Now we turn to analyze $d(\constraint,\constraint')$. For any point $x \in 
	\constraint$, we define $x' = (1-\alpha)x \in \constraint_\alpha$, and have 
	$|xx'| = \alpha |ox| \leq \alpha R$. Let $x^* = x' + \delta \one \in 
	\constraint'$, then $|xx^*| \leq |xx'| + |x'x^*| \leq \alpha R + \sqrt{d} 
	\delta = [\sqrt{d}(\frac{R}{r}+1) + \frac{R}{r}]\delta$. Thus 
	$d(\constraint,\constraint') \leq [\sqrt{d}(\frac{R}{r}+1) + 
	\frac{R}{r}]\delta$.
\end{proof}

\section{Analysis of \cref{alg:bandit}}
\subsection{General Constraint Set}\label{app:general_K}
We first state a necessary assumption on the $ \delta $-interior $\constraint'$.

\begin{assump}
	\label{assump_on_constraint'}
	For sufficiently small $\delta >0$, the $\delta$-interior $\constraint'$ is 
	convex and compact, and has lower bound $\underline{u}$ such that 
	$\forall x \in \constraint', x \geq \underline{u}$. 
	We also assume that the discrepancy 
	satisfies 
	$d(\constraint, \constraint') \leq c_1 \delta^\gamma$, where $c_1, 
	\gamma > 0$.
\end{assump}

Note that 
we have $\sup_{x,y \in \constraint'}\|x-y \| \leq D, \sup_{x \in 
	\constraint'}\|x-\lb \| \leq R$, 
where $D, R$ are the diameter and radius of $\constraint$. In other words, the 
bounds for $\constraint$ also hold for $\constraint'$. 

Also, if the constraint set $\constraint$ satisfies \cref{assump_on_K} and 
is down-closed, \cref{lem:shrink_down_closed} shows that one can construct a $ 
\delta $-interior $\constraint'$ that obeys \cref{assump_on_constraint'}.

Now with the assumption on the reward functions $F_t$ 
(\cref{assump_on_f,bandit_assump_on_f}), and those on $\constraint$ and 
$\constraint'$ 
(\cref{assump_on_K,assump_on_constraint'}), we show \cref{alg:bandit} achieves 
a sublinear $(1-1/e)$-regret bound of 
$O(T^{\frac{3+5\min\{1,\gamma\}}{3+6\min\{1,\gamma\}}})$. 



\begin{theorem}\label{thm:bandit}
	Under 
	\cref{assump_on_oracle,assump_on_f,assump_on_K,assump_on_constraint',bandit_assump_on_f},
	if we set $\delta = c_2T^{-\frac{1}{3+6\min\{1,\gamma\}}}, Q = 
	T^{\frac{2\min\{1,\gamma\}}{3+6\min\{1,\gamma\}}}, L = 
	T^{\frac{3+4\min\{1,\gamma\}}{3+6\min\{1,\gamma\}}}, K = 
	T^{\frac{1+\min\{1,\gamma\}}{1+2\min\{1,\gamma\}}}, \eta_k = \frac{1}{K}, 
	\rho_k=\frac{2}{(k+2)^{2/3}}$, where $c_2 >0$ is a constant such that 
	$\delta$ is sufficiently small as required by \cref{assump_on_constraint'},
	then the expected $(1-1/e)$-regret of \cref{alg:bandit} is at most
	\begin{equation*}
	\begin{split}
	\expect[\mathcal{R}_T] 
	\leq& \left[(1-1/e)c_1c_2^\gamma L_1 + (2-1/e)c_2L_1+ 
	2M_1+\frac{3\cdot4^{1/6}d^2M_1^2}{c_2}+\frac{3D^2}{4c_2}+C 
	\right]T^{\frac{3+5\min\{1,\gamma\}}{3+6\min\{1,\gamma\}}} \\
	& \quad + 
	\frac{3c_2[2L_1^2+(3L_2R+2L_1)^2]}{4^{1/3}}T^{\frac{1+5\min\{1,\gamma\}}{3+6\min\{1,\gamma\}}}
	 + \frac{L_2D^2}{2}T^{\frac{\min\{1,\gamma\}}{1+2\min\{1,\gamma\}}}.
	\end{split}    
	\end{equation*}
\end{theorem}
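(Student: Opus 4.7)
The plan is to lift the Frank-Wolfe analysis of Theorem \ref{thm:one_shot} to the bandit setting by accounting for three new sources of regret: (i) moving the benchmark from $\constraint$ onto the $\delta$-interior $\constraint'$, (ii) the smoothing bias $|F_t-\hat F_{t,\delta}|\le L_1\delta$ from Lemma \ref{lem:smooth_property}, and (iii) the cost of dedicating $K$ out of every $L$ rounds in a block to exploration plays $y_t=x_q^{(k')}+\delta u_{q,k'}$. Choosing $x^{*\prime}\in\constraint'$ with $\|x^{*\prime}-x^*\|\le d(\constraint,\constraint')\le c_1\delta^\gamma$ for $x^*=\argmax_{x\in\constraint}\sum_t F_t(x)$, and letting $\bar H_q=(1/L)\sum_{t\in\text{block }q}\hat F_{t,\delta}$ be the smoothed block-average, I would establish
\begin{equation*}
\mathcal R_T \le (1-1/e)c_1L_1\delta^\gamma T + (2-1/e)L_1\delta T + 2M_1 KQ + L\sum_q \expect\!\left[(1-1/e)\bar H_q(x^{*\prime}) - \bar H_q(x_q)\right],
\end{equation*}
where the first three terms come from Lipschitzness of $F_t$, Lemma \ref{lem:smooth_property}, and $|F_t|\le M_1$ (used both to drop exploration rewards and to bound $F_t(x_q)-\bar H_q(x_q)$ on the missing $K$ indices per block).

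For the algorithmic term I would mirror Theorem \ref{thm:one_shot} verbatim with $F_t$ replaced by $\hat F_{t,\delta}$ and $\tnabla F_t$ replaced by the one-point estimator $g_{q,k}=(d/\delta)F_{t_{q,k}}(y_{t_{q,k}})u_{q,k}$. The estimator is unbiased for $\nabla\hat F_{t_{q,k},\delta}(x_q^{(k)})$ by the identity $\nabla\hat F_\delta(x)=\expect_{u\sim S^{d-1}}[(d/\delta)F(x+\delta u)u]$, and has second moment at most $d^2M_1^2/\delta^2$. Because $\hat F_{t,\delta}$ inherits monotonicity, DR-submodularity, $L_1$-Lipschitzness and $L_2$-smoothness, Lemma \ref{lem:icml} applies on $\constraint'$ with translated origin $\underline u$; exchangeability of the random permutation $t_{q,1},\ldots,t_{q,L}$ yields that $\expect[\nabla\hat F_{t_{q,k},\delta}(x_q^{(k)})\mid\mathcal F_{q,k-1}]$ equals the gradient of the residual smoothed average over the remaining $L-k+1$ indices, which is all that the virtual-average argument of Lemma \ref{lem:regret_decomp} needs. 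The gradient-tracker recursion of Lemma \ref{lem:delta_decomp} then goes through after conditional expectations, with $\sigma^2$ upgraded to $O(d^2M_1^2/\delta^2)$ and the drift constant to $(3L_2R+2L_1)^2$; importantly, since $L-k+2\ge L-K$ remains large uniformly in $k\le K$, the drift term $G/(L-k+2)^2$ stays uniformly small and the monotone schedule $\rho_k=2/(k+2)^{2/3}$ alone suffices (no descending phase is needed, unlike in Theorem \ref{thm:one_shot}).

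Running the induction of Lemma \ref{lem:delta_final} with these upgraded constants gives $\expect[\Delta_q^{(k)}]\le N'/(k+3)^{2/3}$ with $N'=O(d^2M_1^2/\delta^2+G)$; tuning $\beta^{(k)}$ as in Theorem \ref{thm:one_shot} and invoking the no-regret property $\mathcal R_Q^{\mathcal E}\le C\sqrt Q$ produces a Frank-Wolfe regret of order $T(N'+D^2)/K^{1/3}+L_2D^2T/K+L\mathcal R_Q^{\mathcal E}$. Substituting $\delta=c_2T^{-1/(3+6\min\{1,\gamma\})}$, $Q=T^{2\min\{1,\gamma\}/(3+6\min\{1,\gamma\})}$, $K=T^{(1+\min\{1,\gamma\})/(1+2\min\{1,\gamma\})}$ and $L=T/Q$ balances every piece at the announced leading order $T^{(3+5\min\{1,\gamma\})/(3+6\min\{1,\gamma\})}$, with the drift contributing the $T^{(1+5\min\{1,\gamma\})/(3+6\min\{1,\gamma\})}$ middle term and the $L_2D^2T/K$ slack contributing the $T^{\min\{1,\gamma\}/(1+2\min\{1,\gamma\})}$ tail. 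The main obstacle is the bookkeeping required to extract the precise announced constants (especially the coefficient $3\cdot 4^{1/6}d^2M_1^2/c_2$ on the leading term, which demands a careful balance between the variance blow-up $1/\delta^2$ and the exploration ratio $K/L$), together with the measure-theoretic care in justifying the exchangeability step above, which is where the bandit analysis departs most sharply from the one-shot analysis of \cref{sec:online}.
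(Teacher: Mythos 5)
Your proposal follows essentially the same route as the paper's proof: the identical three-part regret decomposition (benchmark shift to $\constraint'$ costing $(1-1/e)c_1L_1\delta^\gamma T$, smoothing bias $(2-1/e)L_1\delta T$, exploration cost $2M_1KQ$), the same replacement of stochastic gradients by the one-point estimator with variance $d^2M_1^2/\delta^2$, the same $\underline{u}$-anchored Frank--Wolfe step lemma on $\constraint'$, and the same observation that the drift term is uniformly controlled (via $L\ge 2K$) so that a single monotone $\rho_k$ schedule suffices, followed by the same parameter balancing. The only deviations are cosmetic (running the Frank--Wolfe comparison against the projection of $x^*$ rather than against $\argmax_{x\in\constraint'}\sum_t F_t(x)$, and leaving the $\delta$-dependent tuning of $\beta^{(k)}$ implicit), so the proposal is correct and matches the paper's argument.
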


\begin{proof}[Proof of \cref{thm:bandit}]
	Since $x_q^{(1)} = \underline{u}$ and $\eta_k = 1/K$, $x_q^{(k)}$ is 
	actually a convex combination of $\underline{u}, v_q^{(1)}, v_q^{(2)}, 
	\cdots, v_q^{(k-1)}$. Then $\underline{u} \in \constraint', v_q^{(i)} \in 
	\constraint', \forall i \in [K]$ implies $x_q^{(k)} \in \constraint', 
	\forall k \in[K+1]$. So for $k \in [K], y_{t_{q,k}} = x_q^{(k)} + \delta 
	u_{q,k} \in \constraint$; for $t \in \{(q-1)L+1,\cdots, qL\} \setminus \{ 
	t_{q,1}, \cdots, t_{q,K} \} $, $y_t = x_q = x_q^{(K+1)} \in \constraint' 
	\subseteq \constraint$. In other words, all the points that we play fall on 
	the constraint set $\constraint$. 
	
	We also note that as discussed before, 
	the regret bound for online linear oracle, $\mathcal{R}_t^{\mathcal{E}} 
	\leq C\sqrt{t}$ can be achieved by algorithms such as Online Gradient 
	Descent. 
	
	Then we define
	\begin{equation*}
	\hat{F}_{t,\delta}(x) = \expect_{v \sim B^d}[F_t(x + \delta v)]  
	\end{equation*}
	as the $\delta$-smoothed version of $F_t$. We omit the $\delta$ in the 
	subscript for simplicity in the rest of the proof. Since $F_t$ is 
	$L_1$-Lipschitz, by \cref{lem:smooth_property} in 
	\cref{app:smooth_property}, we have
	\begin{equation*}
	| \hat{F}_t(x) - F_t(x) | \leq L_1\delta.  
	\end{equation*}
	
	Therefore, if we define $x^* = \argmax_{x \in \constraint}\sum_{t=1}^T 
	F_t(x), x^*_\delta = \argmax_{x \in \constraint'}\sum_{t=1}^T F_t(x)$, the 
	$(1-1/e)$-regret with horizon $T$ is
	\begin{equation*}
	\begin{split}
	\mathcal{R}_T &= \sum_{t=1}^T [(1-1/e)F_t(x^*) - F_t(y_t)] \\
	&= \sum_{t=1}^T [(1-1/e)F_t(x^*) - (1-1/e)F_t(x^*_\delta) + 
	(1-1/e)F_t(x^*_\delta) - F_t(y_t)] \\
	&= (1-1/e)\sum_{t=1}^T[F_t(x^*) - F_t(x^*_\delta)]+ 
	\sum_{t=1}^T[(1-1/e)\hat{F}_t(x^*_\delta)-\hat{F}_t(y_t)] \\
	&\quad + \sum_{t=1}^T(1-1/e)[F_t(x^*_\delta)-\hat{F}_t(x^*_\delta)] - 
	\sum_{t=1}^T[F_t(y_t)-\hat{F}_t(y_t)] \\
	& \leq (1-1/e)\sum_{t=1}^T[F_t(x^*) - F_t(x^*_\delta)]+ 
	\sum_{t=1}^T[(1-1/e)\hat{F}_t(x^*_\delta)-\hat{F}_t(y_t)] + 
	T(1-1/e)L_1\delta + TL_1\delta \\
	& = (1-1/e)\sum_{t=1}^T[F_t(x^*) - F_t(x^*_\delta)]+ 
	\sum_{t=1}^T[(1-1/e)\hat{F}_t(x^*_\delta)-\hat{F}_t(y_t)] + 
	(2-1/e)L_1T\delta.
	\end{split}]    
	\end{equation*}
	
	Suppose $x' \in \constraint'$ such that $\|x^* - x' \| = d(x^*,x') = d(x^*, 
	\constraint') \leq d(\constraint,\constraint') \leq c_1\delta^\gamma$, then 
	we have
	\begin{equation*}
	\begin{split}
	\sum_{t=1}^T[F_t(x^*) - F_t(x^*_\delta)] &=  \sum_{t=1}^T[F_t(x^*) - 
	F_t(x') + F_t(x') - F_t(x^*_\delta)] \\
	&= \sum_{t=1}^T[F_t(x^*) - F_t(x')] + [\sum_{t=1}^TF_t(x') - \sum_{t=1}^T 
	F_t(x^*_\delta)] \\
	&\leq \sum_{t=1}^T[L_1 \|x^* - x' \|] + 0 \\
	&\leq c_1L_1T\delta^\gamma,
	\end{split}    
	\end{equation*}
	where the first inequality holds thanks to the optimality of $x^*_\delta$ 
	and the assumption that $F_t$ is $L_1$-Lipschitz.
	
	Moreover, we have
	\begin{equation*}
	\begin{split}
	\hat{\mathcal{R}}_T &\triangleq 
	\sum_{t=1}^T[(1-1/e)\hat{F}_t(x^*_\delta)-\hat{F}_t(y_t)]   \\
	&= \sum_{q=1}^Q \sum_{i=1}^L 
	[(1-1/e)\hat{F}_{t_{q,i}}(x^*_\delta)-\hat{F}_{t_{q,i}}(x_q)] + 
	\sum_{q=1}^Q \sum_{k=1}^K 
	[\hat{F}_{t_{q,k}}(x_q)-\hat{F}_{t_{q,k}}(y_{t_{q,k}})] \\
	& \leq \sum_{q=1}^Q \sum_{i=1}^L 
	[(1-1/e)\hat{F}_{t_{q,i}}(x^*_\delta)-\hat{F}_{t_{q,i}}(x_q)] + 
	\sum_{q=1}^Q \sum_{k=1}^K[2M_1] \\
	& = \sum_{q=1}^Q \sum_{i=1}^L 
	[(1-1/e)\hat{F}_{t_{q,i}}(x^*_\delta)-\hat{F}_{t_{q,i}}(x_q)] + 2M_1QK
	\end{split}
	\end{equation*}
	where the inequality holds since 
	\begin{equation*}
	|\hat{F}_{q,t_k}(x)| = |\expect_{v \sim B^n}[F_{q,t_k}(x + \delta v)]| \leq 
	\expect[|F_{q,t_k}(x + \delta v)|] \leq M_1.    
	\end{equation*}
	
	So by now, we have
	\begin{equation*}
	\mathcal{R}_T \leq (1-1/e)c_1L_1T\delta^\gamma + (2-1/e)L_1T\delta + 2M_1QK 
	+  \sum_{q=1}^Q \sum_{i=1}^L 
	[(1-1/e)\hat{F}_{t_{q,i}}(x^*_\delta)-\hat{F}_{t_{q,i}}(x_q)].   
	\end{equation*}
	
	In order to upper bound $\sum_{q=1}^Q \sum_{i=1}^L 
	[(1-1/e)\hat{F}_{t_{q,i}}(x^*_\delta)-\hat{F}_{t_{q,i}}(x_q)]$, we first 
	define the average function:
	\begin{equation*}
	\bar{F}_{q,k}(x) = \frac{\sum_{i=k+1}^L\hat{F}_{t_{q,i}}(x)}{L-k}.  
	\end{equation*}
	Recall that $(t_{q,1}, \cdots, t_{q,K})$ is a random sub-sequence of 
	$\{(q-1)L+1, \cdots, qL \}$, and is used for ``exploration''.
	
	We first claim that similar result to \cref{lem:icml} in 
	\cref{app:one_shot} still holds for \cref{alg:bandit}.
	\begin{lemma}
		\label{lem:icml_extend_K'}
		If $F_t$ is monotone continuous DR-submodular and $L_2$-smooth, 
		$x_t^{(k+1)} = x_t^{(k)} + \frac{1}{K}(v_t^{(k)}-\underline{u})$ for $k 
		\in [K]$, where $v_t^{(k)}, x_t^{(k)} \in \constraint', \lb$ is the 
		lower bound of $\constraint'$, then
		\begin{equation*}
		\begin{split}
		F_t(x^*_\delta) -F_t(x_t^{(k+1)}) \leq & (1 - 1/K)[F_t(x^*_\delta) - 
		F_t(x_t^{(k)})] \\
		&-\frac{1}{K}[-\frac{1}{2\beta^{(k)}} \| \nabla F_t(x_t^{(k)}) - 
		d_t^{(k)} \|^2 - \frac{\beta^{(k)}D^2}{2} + \langle d_{t}^{(k)}, 
		v_t^{(k)} - x^*_\delta \rangle] + \frac{L_2D^2}{2K^2},   
		\end{split}
		\end{equation*}
		where $\{\beta^{(k)}\}$ is a sequence of positive parameters to be 
		determined.
	\end{lemma}
	
	\begin{proof}[Proof of \cref{lem:icml_extend_K'}]
		Since $F_t$ is $L_2$-smooth and $x_t^{(k+1)} = x_t^{(k)} + 
		\frac{1}{K}(v_t^{(k)}-\underline{u})$, we have
		\begin{equation}
		\label{eq:icml_extend_aux0}
		\begin{split}
		F_t(x_t^{(k+1)}) &\geq F_t(x_t^{(k)}) + \langle \nabla F_t(x_t^{(k)}), 
		x_t^{(k+1)} - x_t^{(k)} \rangle -\frac{L_2}{2} \| x_t^{(k+1)} 
		-x_t^{(k)} \|^2 \\
		& = F_t(x_t^{(k)}) + \langle \frac{1}{K} \nabla F_t(x_t^{(k)}), 
		v_t^{(k)} - \lb \rangle - \frac{L_2}{2K^2} \| v_t^{(k)} - \lb \|^2 \\
		&\geq F_t(x_t^{(k)}) + \frac{1}{K} \langle \nabla F_t(x_t^{(k)}), 
		v_t^{(k)} - \lb \rangle -\frac{L_2D^2}{2K^2}.
		\end{split}    
		\end{equation}
		
		We can rewrite the term $\langle \nabla F_t(x_t^{(k)}), v_t^{(k)} - \lb 
		\rangle$ as
		\begin{equation}
		\label{eq:icml_extend_aux1}
		\begin{split}
		\langle \nabla F_t(x_t^{(k)}), v_t^{(k)} - \lb \rangle &= \langle 
		\nabla F_t(x_t^{(k)}) - d_t^{(k)}, v_t^{(k)} \rangle + \langle 
		d_t^{(k)}, v_t^{(k)} \rangle - \langle \nabla F_t(x_t^{(k)}) , \lb 
		\rangle  \\
		&= \langle \nabla F_t(x_t^{(k)}) - d_t^{(k)}, v_t^{(k)} - x^*_\delta 
		\rangle + \langle \nabla F_t(x_t^{(k)}) - d_t^{(k)}, x^*_\delta \rangle 
		\\
		&\quad + \langle d_t^{(k)}, v_t^{(k)} \rangle - \langle \nabla 
		F_t(x_t^{(k)}) , \lb \rangle  \\
		&= \langle \nabla F_t(x_t^{(k)}) - d_t^{(k)}, v_t^{(k)} - x^*_\delta 
		\rangle + \langle \nabla F_t(x_t^{(k)}), x^*_\delta - \lb \rangle + 
		\langle d_t^{(k)}, v_t^{(k)} - x^*_\delta \rangle.
		\end{split}
		\end{equation}
		
		Denote $y^*_\delta = x^*_\delta - \lb, y_t^{(k)} = x_t^{(k)} - \lb$, 
		then $y^*_\delta \geq 0, y_t^{(k)} \geq 0$, by the definition of lower 
		bound $\lb$, and the fact $x^*_\delta, x_t^{(k)} \in \constraint'$.
		Since $F_t$ is monotone and is concave along non-negative directions, 
		we have
		\begin{equation}
		\label{eq:icml_extend_aux2}
		\begin{split}
		F_t(x^*_\delta) - F_t(x_t^{(k)}) &= F_t(y^*_\delta + \lb) - 
		F_t(y_t^{(k)} + \lb) \\
		&\leq F_t[ (y^*_\delta + \lb)\lor(y_t^{(k)}+\lb)] - F_t(y_t^{(k)}+\lb) 
		\\
		&\leq \langle \nabla F_t(y_t^{(k)}+\lb), [(y^*_\delta + 
		\lb)\lor(y_t^{(k)}+\lb)] - (y_t^{(k)}+\lb) \rangle \\
		&= \langle \nabla F_t(y_t^{(k)}+\lb), [(y^*_\delta + \lb) - 
		(y_t^{(k)}+\lb)] \lor 0 \rangle \\
		&= \langle \nabla F_t(y_t^{(k)}+\lb), (y^*_\delta  - y_t^{(k)}) \lor 0 
		\rangle \\
		&\leq \langle \nabla F_t(y_t^{(k)}+\lb), y^*_\delta \rangle \\
		&= \langle \nabla F_t(x_t^{(k)}), x^*_\delta - \lb \rangle.
		\end{split}
		\end{equation}
		
		Combine \cref{eq:icml_extend_aux1,eq:icml_extend_aux2}, we have
		\begin{equation}
		\label{eq:icml_extend_aux3}
		\langle \nabla F_t(x_t^{(k)}), v_t^{(k)} - \lb \rangle \geq \langle 
		\nabla F_t(x_t^{(k)}) - d_t^{(k)}, v_t^{(k)} - x^*_\delta \rangle + 
		[F_t(x^*_\delta) - F_t(x_t^{(k)})] + \langle d_t^{(k)}, v_t^{(k)} - 
		x^*_\delta \rangle.
		\end{equation}
		
		By Young's ineqaulity, we have
		\begin{equation}
		\label{eq:icml_extend_aux4}
		\begin{split}
		\langle \nabla F_t(x_t^{(k)}) - d_t^{(k)}, v_t^{(k)} - x^*_\delta 
		\rangle &\geq -\frac{1}{2\beta^{(k)}} \| \nabla F_t(x_t^{(k)}) - 
		d_t^{(k)}\|^2 - \frac{\beta^{(k)}}{2} \|v_t^{(k)} - x^*_\delta \|^2 \\
		&\geq -\frac{1}{2\beta^{(k)}} \| \nabla F_t(x_t^{(k)}) - d_t^{(k)}\|^2 
		- \frac{\beta^{(k)}D^2}{2}.    
		\end{split}
		\end{equation}
		
		Now combine 
		\cref{eq:icml_extend_aux0,eq:icml_extend_aux3,eq:icml_extend_aux4}, we 
		have
		\begin{equation*}
		\begin{split}
		F_t(x_t^{(k+1)}) &\geq \frac{1}{K} [-\frac{1}{2\beta^{(k)}} \| \nabla 
		F_t(x_t^{(k)}) - d_t^{(k)}\|^2 - \frac{\beta^{(k)}D^2}{2} 
		+ [F_t(x^*_\delta) - F_t(x_t^{(k)})] + \langle d_t^{(k)}, v_t^{(k)} - 
		x^*_\delta \rangle] \\
		&\quad + F_t(x_t^{(k)}) -\frac{L_2D^2}{2K^2}.
		\end{split}
		\end{equation*}
		Or, equivalently,
		\begin{equation*}
		\begin{split}
		F_t(x^*_\delta) -F_t(x_t^{(k+1)}) \leq & (1 - 1/K)[F_t(x^*_\delta) - 
		F_t(x_t^{(k)})] \\
		&-\frac{1}{K}[-\frac{1}{2\beta^{(k)}} \| \nabla F_t(x_t^{(k)}) - 
		d_t^{(k)} \|^2 - \frac{\beta^{(k)}D^2}{2} + \langle d_{t}^{(k)}, 
		v_t^{(k)} - x^*_\delta \rangle] + \frac{L_2D^2}{2K^2},   
		\end{split}
		\end{equation*}
	\end{proof}
	
	Since $\hat{F}_t$ is monotone continuous DR-submodular and $L_2$-smooth for 
	all $t$, with \cref{lem:icml_extend_K'}, and repeating the proof of 
	\cref{lem:regret_decomp} in \cref{app:one_shot}, we have
	\begin{equation*}
	\begin{split}
	\expect[(1-1/e)\bar{F}_{q,0}(x^*_\delta) - \bar{F}_{q,0}(x_q)] \leq & 
	\expect[ \frac{1}{K}\sum_{k=1}^K[\frac{1}{2\beta^{(k)}} 
	\Delta_q^{(k)} + \frac{\beta^{(k)}D^2}{2}]] + \frac{L_2D^2}{2K} \\
	& \quad + 1/K \sum_{k=1}^K (1-1/K)^{K-k} \expect [\langle d_{q}^{(k)}, 
	x^*_\delta -  v_q^{(k)} \rangle ]
	\end{split}
	\end{equation*}
	where $\Delta_q^{(k)} = \| \nabla \bar{F}_{q,k-1}(x_q^{(k)}) - 
	d_q^{(k)}\|^2$.
	
	Therefore, we have
	\begin{equation}
	\begin{split}
	& \expect[\sum_{q=1}^Q \sum_{i=1}^L 
	[(1-1/e)\hat{F}_{t_{q,i}}(x^*_\delta)-\hat{F}_{t_{q,i}}(x_q)]]\\
	=& \sum_{q=1}^Q L \expect[(1-1/e)\bar{F}_{q,0}(x^*_\delta) -
	\bar{F}_{q,0}(x_q)] \\
	=& \expect[\frac{L}{K}\sum_{q=1}^Q \sum_{k=1}^K 
	\frac{\Delta_q^{(k)}}{2\beta^{(k)}}] + \frac{LQ}{K}\sum_{k=1}^K 
	\frac{\beta^{(k)}D^2}{2} + \frac{LQL_2D^2}{2K} \\
	& \qquad + \frac{L}{K}\sum_{k=1}^K (1-1/K)^{K-k} \sum_{q=1}^Q \expect 
	[\langle d_{q}^{(k)}, 
	x^*_\delta -  v_q^{(k)} \rangle ] \\
	\leq & \expect[\frac{L}{K}\sum_{q=1}^Q \sum_{k=1}^K 
	\frac{\Delta_q^{(k)}}{2\beta^{(k)}}] + \frac{LQ}{K}\sum_{k=1}^K 
	\frac{\beta^{(k)}D^2}{2} + \frac{LQL_2D^2}{2K} \\
	& \qquad + \frac{L}{K} \sum_{k=1}^K 1 \cdot \mathcal{R}_Q^{\mathcal{E}} \\
	\leq & \expect[\frac{L}{K}\sum_{q=1}^Q \sum_{k=1}^K 
	\frac{\Delta_q^{(k)}}{2\beta^{(k)}}] + \frac{LQ}{K}\sum_{k=1}^K 
	\frac{\beta^{(k)}D^2}{2} + \frac{LQL_2D^2}{2K} + 
	L\mathcal{R}_Q^{\mathcal{E}}.
	\end{split}    
	\end{equation}
	
	Then we have
	\begin{equation}
	\label{eq:bandit_regret_decomp}   
	\begin{split}
	\expect[\mathcal{R}_T] &\leq (1-1/e)c_1L_1T\delta^\gamma + 
	(2-1/e)L_1T\delta + 2M_1QK\\
	&\quad + \expect[\frac{L}{K}\sum_{q=1}^Q \sum_{k=1}^K 
	\frac{\Delta_q^{(k)}}{2\beta^{(k)}}] + \frac{LQ}{K}\sum_{k=1}^K 
	\frac{\beta^{(k)}D^2}{2} + \frac{LQL_2D^2}{2K} + 
	L\mathcal{R}_Q^{\mathcal{E}}.  
	\end{split}
	\end{equation}
	
	Note $\mathcal{R}_Q^{\mathcal{E}}$ is the regret of the online linear 
	maximization oracle $\mathcal{E}$ at horizon $Q$, which is of 
	order $O(\sqrt{Q})$. So in order to get an upper bound for the expected 
	regret of \cref{alg:bandit}, the key is to bound $\expect[\Delta_q^{(k)}]$. 
	Here, we have an analogue of \cref{lem:delta_decomp} in \cref{app:one_shot}:
	
	\begin{lemma}
		\label{lem:bandit_delta_decomp}
		Under the setting of \cref{thm:bandit}, we have
		\begin{equation*}
		\expect[\Delta_q^{(k)}] \leq \rho_k^2 \sigma^2 + (1-\rho_k)^2 
		\expect[\Delta_q^{(k-1)}] + (1-\rho_k)^2 \frac{G}{(k+2)^2} + 
		(1-\rho_k)^2\left[ \frac{G}{\alpha_k (k+2)^2} + 
		\alpha_k\expect[\Delta _q^{(k-1)}] \right], 
		\end{equation*}
		where $\{\alpha_k\}$ is a sequence of positive parameters to be 
		determined, $\sigma^2 = L_1^2 + \frac{d^2M_1^2}{\delta^2}$, 
		$G=[3L_2R+2L_1]^2$.
	\end{lemma}
	
	\begin{proof}[Proof of \cref{lem:bandit_delta_decomp}]
		First, the decomposition of $\Delta_q^{(k)}$ \cref{eq:delta} still 
		holds, with $\tnabla F_{t_{q,k}}(x_q^{(k)})$ replaced by $g_{q,k}$.
		
		We also denote $\mathcal{F}_{q,k}$ to be the $\sigma$-field generated 
		by 
		$t_{q,1}, t_{q,2}, \cdots, t_{q,k}$. Since 
		$\expect[g_{q,k}|\mathcal{F}_{q,k}] = \nabla 
		\hat{F}_{t_{q,k}}(x_q^{(k)})|\mathcal{F}_{q,k}$, we have 
		$\expect[g_{q,k}|\mathcal{F}_{q,k-1}] = \nabla 
		\bar{F}_{q,k-1}(x_q^{(k)})|\mathcal{F}_{q,k-1}$. Then by law of 
		iterated 
		expectations, we can get the results similar to 
		\cref{eq:decomp_11,eq:decomp_12,eq:decomp_13,eq:decomp_14,eq:decomp_21}.
		 
		Precisely, we have:
		\begin{equation*}
		\begin{split}
		\expect[\expect[\| \nabla \bar{F}_{q,k-1}(x_q^{(k)}) - \nabla 
		\hat{F}_{t_{q,k}}(x_q^{(k)}) \|^2 |\mathcal{F}_{q,k-1}]] &= 
		\expect[\text{Var}(\nabla 
		\hat{F}_{t_{q,k}}(x_q^{(k)})|\mathcal{F}_{q,k-1})] \\
		&\leq \expect[ \| \nabla 
		\hat{F}_{t_{q,k}}(x_q^{(k)})\|^2] \\
		&\leq L_1^2,    
		\end{split}
		\end{equation*}
		\begin{equation*}
		\begin{split}
		\expect[\expect[ \|\nabla \hat{F}_{t_{q,k}}(x_q^{(k)}) - g_{q,k} \|^2 
		|\mathcal{F}_{q,k-1}]] 
		=& \expect[ \|\nabla \hat{F}_{t_{q,k}}(x_q^{(k)}) - g_{q,k} \|^2]\\
		=& \expect[\expect[ \|\nabla \hat{F}_{t_{q,k}}(x_q^{(k)}) - g_{q,k} 
		\|^2 |\mathcal{F}_{q,k}]] \\ 
		=&\expect[\text{Var}(g_{q,k}|\mathcal{F}_{q,k})] \\
		\leq& \frac{d^2M_1^2}{\delta^2},  
		\end{split}
		\end{equation*}
		and
		\begin{equation*}
		\expect[\expect[\langle \nabla \bar{F}_{q,k-1}(x_q^{(k)}) - \nabla 
		\hat{F}_{t_{q,k}}(x_q^{(k)}),\nabla \hat{F}_{t_{q,k}}(x_q^{(k)}) - 
		g_{q,k} \rangle|\mathcal{F}_{q,k-1}]] = 0. 
		\end{equation*}
		
		Thus we have
		\begin{equation}
		\label{eq:bandit_decomp1}
		\begin{split}
		&\expect [\| \nabla \bar{F}_{q,k-1}(x_q^{(k)}) - g_{q,k} \|^2] \\
		=& \expect[\expect[\| \nabla \bar{F}_{q,k-1}(x_q^{(k)}) - g_{q,k} \|^2  
		|\mathcal{F}_{q,k-1}]]  \\
		=& \expect[\expect[\| \nabla \bar{F}_{q,k-1}(x_q^{(k)}) - \nabla 
		\hat{F}_{t_{q,k}}(x_q^{(k)}) \|^2  + \|\nabla 
		\hat{F}_{t_{q,k}}(x_q^{(k)}) - 
		g_{q,k} \|^2 \\
		& \quad + 2 \langle \nabla \bar{F}_{q,k-1}(x_q^{(k)}) - \nabla
		\hat{F}_{t_{q,k}}(x_q^{(k)}),\nabla \hat{F}_{t_{q,k}}(x_q^{(k)}) - 
		g_{q,k} \rangle|\mathcal{F}_{q,k-1}]]\\
		\leq& L_1^2 + \frac{d^2M_1^2}{\delta^2}\\
		\triangleq& \sigma^2.
		\end{split}   
		\end{equation}
		
		We also have the results similar to \cref{eq:decomp_22,eq:decomp_23}:
		\begin{equation}
		\label{eq:bandit_decomp2}
		\expect[\langle \nabla \bar{F}_{q,k-1}(x_q^{(k)}) - g_{q,k}, \nabla 
		\bar{F}_{q,k-1}(x_q^{(k)}) - \nabla \bar{F}_{q,k-2}(x_q^{(k-1)}) 
		\rangle] = 0,    
		\end{equation}
		and
		\begin{equation}
		\label{eq:bandit_decomp3}
		\expect[\langle \nabla \bar{F}_{q,k-1}(x_q^{(k)}) - g_{q,k}, \nabla 
		\bar{F}_{q,k-2}(x_q^{(k-1)}) - d_q^{(k-1)} \rangle] = 0.   
		\end{equation}
		
		Also, by Young's Inequality, we have 
		\begin{equation}
		\label{eq:bandit_decomp4}
		\begin{split}
		\langle \nabla \bar{F}_{q,k-1}(x_q^{(k)}) - &\nabla  
		\bar{F}_{q,k-2}(x_q^{(k-1)}) , \nabla \bar{F}_{q,k-2}(x_q^{(k-1)}) - 
		d_q^{(k-1)} \rangle \\
		&\leq \frac{1}{2\alpha_k} \|\nabla \bar{F}_{q,k-1}(x_q^{(k)}) - 
		\nabla \bar{F}_{q,k-2}(x_q^{(k-1)}) \|^2 + 
		\frac{\alpha_k}{2}\Delta_q^{(k-1)}.  
		\end{split}
		\end{equation}
		
		Now we turn to bound $\| \nabla \bar{F}_{q,k-1}(x_q^{(k)}) - \nabla 
		\bar{F}_{q,k-2}(x_q^{(k-1)})\|^2 \triangleq z_{q,k}^2.$ Actually, we 
		have
		\begin{equation*}
		\begin{split}
		\expect[z_{q,k}^2] &= \expect[\expect[\| \nabla 
		\bar{F}_{q,k-1}(x_q^{(k)}) - \nabla 
		\bar{F}_{q,k-2}(x_q^{(k-1)})\|^2 |\mathcal{F}_{q,k-2}]]  \\
		&= \expect[\expect[\| \frac{\sum_{i=k}^L \nabla 
			\hat{F}_{t_{q,i}}(x_q^{(k)})}{L-k+1} - \frac{\sum_{i=k-1}^L \nabla 
			\hat{F}_{t_{q,i}}(x_q^{(k-1)})}{L-k+2}\|^2 |\mathcal{F}_{q,k-2}]] \\
		&= \expect[ \expect[ \| \frac{\sum_{i=k}^L \nabla 
		\hat{F}_{t_{q,i}}(x_q^{(k)})-\nabla 
		\hat{F}_{t_{q,i}}(x_q^{(k-1)})}{L-k+2} + \frac{\sum_{i=k}^L \nabla 
		\hat{F}_{t_{q,i}}(x_q^{(k)})}{(L-k+1)(L-k+2)} \\
		&\quad - \frac{\nabla \hat{F}_{t_{q,k-1}}(x_q^{(k-1)})}{L-k+2} \|^2 
		|\mathcal{F}_{q,k-2}]] \\
		&\leq \expect[ \expect[ (\sum_{i=k}^L \| \frac{\nabla 
			\hat{F}_{t_{q,i}}(x_q^{(k)})-\nabla 
			\hat{F}_{t_{q,i}}(x_q^{(k-1)})}{L-k+2}\| + 
		\sum_{i=k}^L \| \frac{\nabla  
		\hat{F}_{t_{q,i}}(x_q^{(k)})}{(L-k+1)(L-k+2)} \| \\
		& \quad + \| \frac{\nabla \hat{F}_{t_{q,k-1}}(x_q^{(k-1)})}{L-k+2} 
		\|)^2  |\mathcal{F}_{q,k-2}]],
		\end{split}
		\end{equation*}
		where the inequality comes from the Triangle Inequality of norms.
		
		Recall the update rule where $x_q^{(k)} = x_q^{(k-1)} + 
		\frac{1}{K}(v_q^{(k-1)}-\lb)$ and that $\hat{F}_t$ is $L_2$-smooth, we 
		have 
		\begin{equation*}
		\|\nabla \hat{F}_{t_{q,i}}(x_q^{(k)})-\nabla 
		\hat{F}_{t_{q,i}}(x_q^{(k-1)}) \| \leq L_2\frac{\| 
		v_q^{(k-1)}-\lb\|}{K} \leq \frac{L_2R}{K}.    
		\end{equation*}
		
		Also by \cref{assump_on_f}, $\| \nabla F_{t_{q,i}}(x) \| \leq 
		L_1$ for all $x \in \constraint$, thus $\| \nabla 
		\hat{F}_{t_{q,i}}(x_q^{(k)}) \| \leq L_1$, $\|\nabla 
		\hat{F}_{t_{q,k-1}}(x_q^{(k-1)})\| \leq L_1$.  Therefore, we have
		\begin{equation*}
		\begin{split}
		\expect[z_{q,k}^2] & \leq  [ (L-k+1) \frac{L_2R}{K}\frac{1}{L-k+2}  + 
		(L-k+1)\frac{L_1}{(L-k+1)(L-k+2)} + \frac{L_1}{L-k+2}]^2\\
		&\leq \left( \frac{L-k+1}{L-k+2}\frac{L_2R}{K}+\frac{2L_1}{L-k+2} 
		\right)^2. 
		\end{split}
		\end{equation*}
		
		Since we assume $L \gg K$, 
		we can always choose $L, K$ such that $L \geq 2K$. So we have 
		$\frac{2L_1}{L-k+2} \leq \frac{2L_1}{2K-k+2} \leq \frac{2L_1}{K+2} \leq 
		\frac{2L_1}{k+2}$. Also, $\frac{L-k+1}{L-k+2}\frac{L_2R}{K} \leq 
		\frac{L_2R}{K} = \frac{K+2}{K}\frac{L_2R}{K+2} \leq 3\frac{L_2R}{K+2} 
		\leq \frac{3L_2R}{k+2}.$
		
		Therefore, we have 
		\begin{equation}
		\label{eq:bandit_decomp5}
		\begin{split}
		\expect[z_{q,k}^2] & \leq \left( \frac{3L_2R}{k+2} + 
		\frac{2L_1}{k+2}\right)^2 \\
		& = \left(\frac{3L_2R+2L_1}{k+2} \right)^2 \\
		& \triangleq \frac{G}{(k+2)^2}.
		\end{split}
		\end{equation}
		
		Combining 
		\cref{eq:bandit_decomp1,eq:bandit_decomp2,eq:bandit_decomp3,eq:bandit_decomp4,eq:bandit_decomp5},
		we have
		\begin{equation*}
		\expect[\Delta_q^{(k)}] \leq \rho_k^2 \sigma^2 + (1-\rho_k)^2 
		\expect[\Delta_q^{(k-1)}] + (1-\rho_k)^2 \frac{G}{(k+2)^2} + 
		(1-\rho_k)^2\left[ \frac{G}{\alpha_k (k+2)^2} + 
		\alpha_k\expect[\Delta _q^{(k-1)}] \right].
		\end{equation*}
	\end{proof}
	
	Applying \cref{lem:bandit_delta_decomp} and setting $\alpha_k = 
	\frac{\rho_k}{2}, \forall k \in {1, 2, \cdots, K}$, we have
	\begin{equation*}
	\begin{split}
	\expect[\Delta_q^{(k)}] &\leq \rho_k^2 \sigma^2 + (1-\rho_k)^2 
	\expect[\Delta_q^{(k-1)}] + (1-\rho_k)^2 \frac{G}{(k+2)^2}\\
	&\quad + (1-\rho_k)^2\left[ \frac{G}{\alpha_k (k+2)^2} + 
	\alpha_k\expect[\Delta _q^{(k-1)}] \right] \\
	&= \rho_k^2 \sigma^2 + \frac{G}{(k+2)^2}(1-\rho_k)^2 \left( 1+ 
	\frac{2}{\rho_k} \right) + 
	\expect[\Delta_q^{(k-1)}](1-\rho_k)^2\left(1+\frac{\rho_k}{2}\right).
	\end{split}    
	\end{equation*}
	
	Note that if $0 < \rho_k \leq 1$, then we have 
	$$(1-\rho_k)^2\left(1+\frac{2}{\rho_k} \right) \leq 
	\left(1+\frac{2}{\rho_k} \right)$$
	and
	$$(1-\rho_k)^2\left(1+\frac{\rho_k}{2} \right) \leq (1 - \rho_k).$$
	
	So in this case, we have 
	\begin{equation}
	\label{eq:bandit_delta_iter}
	\expect[\Delta_q^{(k)}] \leq \rho_k^2 \sigma^2 + 
	\frac{G}{(k+2)^2}\left( 1+ \frac{2}{\rho_k} \right) + 
	\expect[\Delta_q^{(k-1)}](1-\rho_k).
	\end{equation}
	
	\begin{lemma}
		\label{lem:bandit_delta_final}
		Under the setting of \cref{thm:bandit}, 
		we have
		\begin{equation*}
		\expect[\Delta_q^{(k)}] \leq \frac{N_0}{(k+3)^{2/3}}, \forall k \in 
		[K],    
		\end{equation*}
		where $N_0 = 4^{2/3}(2\sigma^2+G)$.
	\end{lemma}	
	
	\begin{proof}[Proof of \cref{lem:bandit_delta_final}]
		Since $\rho_k = \frac{2}{(k+2)^{2/3}}$, we have $0 < \rho_k \leq 1$, 
		and 
		\begin{equation*}
		\begin{split}
		\expect[\Delta_q^{(k)}] &\leq \frac{4\sigma^2}{(k+2)^{4/3}} + 
		\frac{G}{(k+2)^2}[1+(k+2)^{2/3}] + \expect[\Delta_q^{(k-1)}]\left( 1 - 
		\frac{2}{(k+2)^{2/3}} \right)  \\
		&\leq \frac{4\sigma^2}{(k+2)^{4/3}} + \frac{G}{(k+2)^{4/3}} + 
		\frac{G}{(k+2)^{4/3}} + \expect[\Delta_q^{(k-1)}]\left( 1 - 
		\frac{2}{(k+2)^{2/3}} \right)  \\
		&= \frac{4\sigma^2+2G}{(k+2)^{4/3}} + \expect[\Delta_q^{(k-1)}]\left( 1 
		- \frac{2}{(k+2)^{2/3}} \right) \\
		& \leq \frac{\frac{4^{2/3}}{2}(4\sigma^2+2G)}{(k+2)^{4/3}} + 
		\expect[\Delta_q^{(k-1)}]\left( 1 - \frac{2}{(k+2)^{2/3}} \right) \\
		& = \frac{4^{2/3}(2\sigma^2+G)}{(k+2)^{4/3}} + 
		\expect[\Delta_q^{(k-1)}]\left( 1 - \frac{2}{(k+2)^{2/3}} \right) \\
		&\triangleq \frac{N_0}{(k+2)^{4/3}} +  \expect[\Delta_q^{(k-1)}]\left( 
		1 - \frac{2}{(k+2)^{2/3}} \right).
		\end{split}
		\end{equation*}
		
		Recall that $\Delta_q^{(k)} = \|\nabla \bar{F}_{q,k-1}(x_q^{(k)}) - 
		d_q^{(k)} \|^2$, and thus
		\begin{equation*}
		\begin{split}
		\Delta_q^{(1)} &= \|\nabla \bar{F}_{q,0}(\lb) - d_q^{(1)}  \|^2 \\
		&= \| \frac{\sum_{i=1}^L \nabla \hat{F}_{t_{q,i}}(\lb)}{L} - 
		\frac{2}{3^{2/3}} g_{q,1}) \|^2 \\
		&\leq \left( \sum_{i=1}^{L} \| \frac{\nabla 
		\hat{F}_{t_{q,i}}(\lb)}{L}\| + \| 
		\frac{2}{3^{2/3}} g_{q,1}\| \right)^2 \\
		&\leq \left(L\frac{L_1}{L} + \frac{2}{3^{2/3}} \frac{d}{\delta}M_1 
		\right)^2 \\
		&\leq (L_1+\frac{d}{\delta}M_1)^2. 
		\end{split}
		\end{equation*}
		
		Now we claim that $\expect[\Delta_q^{(k)}] \leq 
		\frac{N_0}{(k+3)^{2/3}}$ for any $k \in [K]$. We prove it by induction. 
		When $k = 1$, we have 
		$$\frac{N_0}{(1+3)^{2/3}} = 2\sigma^2+G \geq 2\sigma^2 = 
		2(L_1^2+\frac{d^2M_1^2}{\delta^2}) \geq (L_1 + \frac{dM_1}{\delta})^2 
		\geq \Delta_q^{(1)},$$ 
		where the second inequality holds since $2(a^2+b^2) \geq (a+b)^2.$
		
		Assume the statement holds for $k - 1$, \emph{i.e.}, 
		$\expect[\Delta_q^{(k-1)}] \leq \frac{N_0}{(k+2)^{2/3}}$, then
		\begin{equation*}
		\begin{split}
		\expect[\Delta_q^{(k)}] &\leq \frac{N_0}{(k+2)^{4/3}} + 
		\expect[\Delta_q^{(k-1)}]\left( 1 - \frac{2}{(k+2)^{2/3}} \right) \\
		&\leq \frac{N_0}{(k+2)^{4/3}} + \frac{N_0}{(k+2)^{2/3}}\left( 1 - 
		\frac{2}{(k+2)^{2/3}} \right) \\
		&= \frac{N_0[(k+2)^{2/3} - 1]}{(k+2)^{4/3}}. 
		\end{split}    
		\end{equation*}
		
		Since $(k+3)^2 = k^2+6k+9 \leq k^2+4k+4+1+3(k+2) \leq (k+2)^2 + 1 + 
		3(k+2)^{4/3} + 3(k+2)^{2/3} = [(k+2)^{2/3}+1]^3$, by taking the cube 
		roots of both sides, we have $(k+3)^{2/3} \leq (k+2)^{2/3}+1$, which 
		implies that $[(k+2)^{2/3}-1](k+3)^{2/3} \leq 
		[(k+2)^{2/3}-1][(k+2)^{2/3}+1] \leq (k+2)^{4/3}$, \emph{i.e.}, 
		$\frac{(k+2)^{2/3}-1}{(k+2)^{4/3}} \leq \frac{1}{(k+3)^{2/3}}$. Thus we 
		have 
		\begin{equation*}
		\expect[\Delta_q^{(k)}] \leq \frac{N_0}{(k+3)^{2/3}}, \forall k \in 
		[K].    
		\end{equation*}
	\end{proof}
	
	Recall that in \cref{eq:bandit_regret_decomp}, we have
	\begin{equation*}
	\begin{split}
	\expect[\mathcal{R}_T] &\leq (1-1/e)c_1L_1T\delta^\gamma + 
	(2-1/e)L_1T\delta + 2M_1QK \\
	&\quad + \expect[\frac{L}{K}\sum_{q=1}^Q \sum_{k=1}^K 
	\frac{\Delta_q^{(k)}}{2\beta^{(k)}}]
	+ \frac{LQ}{K}\sum_{k=1}^K \frac{\beta^{(k)}D^2}{2} + \frac{LQL_2D^2}{2K} + 
	L\mathcal{R}_Q^{\mathcal{E}}.  
	\end{split}
	\end{equation*}
	
	So if we set $\beta^{(k)} = \frac{1}{\delta(k+3)^{1/3}}$, then by 
	\cref{lem:bandit_delta_final}, we have
	\begin{equation*}
	\sum_{k=1}^{K}\frac{\expect[\Delta_q^{(k)}]}{\beta^{(k)}} \leq 
	\sum_{k=1}^{K} \frac{\delta N_0}{(k+3)^{1/3}} 
	\leq \sum_{k=1}^{K}\frac{\delta N_0}{k^{1/3}}  \leq 
	\int_0^{K}\frac{\delta N_0}{x^{1/3}}\mathrm{d}x 
	=\frac{3\delta N_0}{2}K^{2/3}.   
	\end{equation*}
	
	Similarly, 
	$$\sum_{k=1}^{K}\beta^{(k)} = \sum_{k=1}^{K} \frac{1}{\delta (k+3)^{1/3}} 
	\leq \frac{3K^{2/3}}{2\delta}.$$
	
	Therefore, we have
	\begin{equation*}
	\expect[\mathcal{R}_T] \leq (1-1/e)c_1L_1T\delta^\gamma + (2-1/e)L_1T\delta 
	+ 2M_1QK
	+ \frac{3\delta N_0LQ}{4K^{1/3}}+\frac{3D^2LQ}{4\delta K^{1/3}} + 
	\frac{LQL_2D^2}{2K} + L\mathcal{R}_Q^{\mathcal{E}}.  
	\end{equation*}
	
	By setting $\delta = c_2T^{-\frac{1}{3+6\min\{1,\gamma\}}}, Q = 
	T^{\frac{2\min\{1,\gamma\}}{3+6\min\{1,\gamma\}}}, L = 
	T^{\frac{3+4\min\{1,\gamma\}}{3+6\min\{1,\gamma\}}}, K = 
	T^{\frac{1+\min\{1,\gamma\}}{1+2\min\{1,\gamma\}}}$, and recall that 
	$\mathcal{R}_Q^{\mathcal{E}} 
	\leq C\sqrt{Q} = CT^{\frac{\min\{1,\gamma\}}{3+6\min\{1,\gamma\}}}$, $N_0 = 
	4^{2/3}(2\sigma^2 +G) = 4^{2/3}(2L_1^2+\frac{2d^2M_1^2}{\delta^2}+G)$, 
	where $G = (3L_2R+2L_1)^2$ is a constant, we have
	\begin{equation*}
	\begin{split}
	\expect[\mathcal{R}_T] \leq& (1-1/e)c_1c_2^\gamma L_1 
	T^{1-\frac{\gamma}{3+6\min\{1,\gamma\}}} + 
	(2-1/e)c_2L_1T^{1-\frac{1}{3+6\min\{1,\gamma\}}} + 
	2M_1T^{\frac{3+5\min\{1,\gamma\}}{3+6\min\{1,\gamma\}}} \\
	&\quad + 
	\frac{3\cdot4^{2/3}c_2(2L_1^2+G)}{4}T^{\frac{1+5\min\{1,\gamma\}}{3+6\min\{1,\gamma\}}}
	 + 
	\frac{3\cdot4^{2/3}d^2M_1^2}{2c_2}T^{\frac{3+5\min\{1,\gamma\}}{3+6\min\{1,\gamma\}}}
	 + \frac{3D^2}{4c_2}T^{\frac{3+5\min\{1,\gamma\}}{3+6\min\{1,\gamma\}}} \\
	&\quad + \frac{L_2D^2}{2}T^{\frac{\min\{1,\gamma\}}{1+2\min\{1,\gamma\}}} + 
	CT^{\frac{3+5\min\{1,\gamma\}}{3+6\min\{1,\gamma\}}}  \\
	\leq & (1-1/e)c_1c_2^\gamma L_1 
	T^{1-\frac{\min\{1,\gamma\}}{3+6\min\{1,\gamma\}}} + 
	(2-1/e)c_2L_1T^{1-\frac{\min\{1,\gamma\}}{3+6\min\{1,\gamma\}}} \\
	&\quad + 
	\left[ 2M_1+\frac{3\cdot4^{2/3}d^2M_1^2}{2c_2}+\frac{3D^2}{4c_2}+C 
	\right]T^{\frac{3+5\min\{1,\gamma\}}{3+6\min\{1,\gamma\}}} \\
	& \quad + 
	\frac{3\cdot4^{2/3}c_2(2L_1^2+G)}{4}T^{\frac{1+5\min\{1,\gamma\}}{3+6\min\{1,\gamma\}}}
	 + \frac{L_2D^2}{2}T^{\frac{\min\{1,\gamma\}}{1+2\min\{1,\gamma\}}} \\
	=& \left[(1-1/e)c_1c_2^\gamma L_1 + (2-1/e)c_2L_1+ 
	2M_1+\frac{3\cdot4^{1/6}d^2M_1^2}{c_2}+\frac{3D^2}{4c_2}+C 
	\right]T^{\frac{3+5\min\{1,\gamma\}}{3+6\min\{1,\gamma\}}} \\
	& \quad + 
	\frac{3c_2[2L_1^2+(3L_2R+2L_1)^2]}{4^{1/3}}T^{\frac{1+5\min\{1,\gamma\}}{3+6\min\{1,\gamma\}}}
	 + \frac{L_2D^2}{2}T^{\frac{\min\{1,\gamma\}}{1+2\min\{1,\gamma\}}}.
	\end{split}
	\end{equation*}
\end{proof}

\subsection{Down-closed Constraint Set}\label{app:donw-closed_K}

\begin{proof}[Proof of \cref{thm:downclose_bandit}]
	Since $\constraint$ satisfies \cref{assump_on_K} and is down-closed, 
	$\alpha = \frac{(\sqrt{d}+1)\delta}{r} = 
	\frac{\sqrt{d}+1}{\sqrt{d}+2}T^{-1/9} < 1$, by 
	\cref{lem:shrink_down_closed}, we have \cref{assump_on_constraint'} holds 
	with $c_1 = \sqrt{d}(\frac{R}{r}+1)+\frac{R}{r}, \gamma = 1, \underline{u} 
	= \delta \one$. Then by applying \cref{thm:bandit} directly, we can prove 
	\cref{thm:downclose_bandit}.  
\end{proof}

\section{Proof of \cref{lem:sampling_scheme}}\label{app:sampling_scheme}
\begin{proof}[Proof of \cref{lem:sampling_scheme}]
	We give an example of the matroids which satisfy 
	\cref{lem:sampling_scheme}. Let $\Omega = \{1, 2\}$, the matroid $\matroid 
	= 
	\{\varnothing, \{1\}, \{2\} \}$. Define set function
	\begin{equation*}
	f(X) = \begin{cases}
	0, & X = \varnothing; \\
	a, & X =\{ 1\};\\
	b, & X = \{2\}, \text{ or } X = \{1,2\};
	\end{cases}    
	\end{equation*}
	where $b >a > 0$. It can be verified that $f$ is submodular and its 
	multilinear extension $F(x) = ax_1 + bx_2 - ax_1x_2$, where $x = (x_1, x_2) 
	\in [0,1]^2$.
	
	Suppose that
	\begin{equation*}
	\round(x) =
	\begin{cases}
	\{1\}, & \text{with probability } p_1(x); \\
	\{2\}, & \text{with probability } p_2(x); \\
	\varnothing, & \text{with probability } p_3(x).
	\end{cases}
	\end{equation*}
	Then the assumption $F(x) = \expect[f(\round(x)]$ implies $F(x) = 
	p_1(x)\cdot a 
	+ p_2(x) \cdot b, \forall b > a >0$. So we have $p_1(x) = x_1-x_1x_2, 
	p_2(x)=x_2$.
	
	However, if we define $f$ in another way:
	\begin{equation*}
	f(X) = \begin{cases}
	0, & X = \varnothing; \\
	b, & X = \{2\}; \\
	a, & X= \{1\}, \text{ or } X=\{1,2\};
	\end{cases}    
	\end{equation*}
	where $a > b >0$. Then it can be also verified that $f$ is submodular and 
	its multilinear extension $F(x) = ax_1 + bx_2 - bx_1x_2$, where $x = (x_1, 
	x_2) \in [0,1]^2$.
	
	Again, suppose that
	\begin{equation*}
	\round(x) =
	\begin{cases}
	\{1\}, & \text{with probability } p_1(x); \\
	\{2\}, & \text{with probability } p_2(x); \\
	\varnothing, & \text{with probability } p_3(x).
	\end{cases}
	\end{equation*}
	Then the assumption $F(x) = \expect[f(\round(x)]$ implies $F(x) = 
	p_1(x)\cdot a 
	+ p_2(x) \cdot b, \forall a > b >0$. So we have $p_1(x) = x_1, 
	p_2(x)=x_2-x_1x_2$.
	
	Therefore, for different functions $f$'s, we have different sampling 
	schemes 
	$\round(\cdot)$'s, which are subject to the matroid $\matroid$ constraint, 
	and satisfy 
	$F(x) = \expect[f(\round(x)]$, \ie, the sampling scheme does depend on the 
	function. So there does not exist a sampling scheme $\round: [0,1]^d \to 
	\matroid$, 
	which satisfies $\expect[f(\round(x))] = F(x), \forall x \in [0,1]^d$, and 
	does 
	not depend on the submodular set function $f$,
\end{proof}

\section{Proof of \cref{thm:discrete_bandit}}\label{app:discrete_bandit}
Since \cref{alg:discrete_bandit} applies \cref{alg:bandit} on the multilinear 
extension $F_t$ of $f_t$, a prerequisite is that 
\cref{assump_on_oracle,assump_on_f,assump_on_K,bandit_assump_on_f,assump_on_discrete_bandit_K}
 all hold. 
The constraint set $\constraint$ is a polytope in $[0,1]^d$ that is convex and 
compact and contains $ 0 $. So \cref{assump_on_K} holds. Additionally, we have 
the diameter $D = \sup_{x,y \in 
	\constraint}\|x-y\| \leq \sqrt{d}$ and the radius $R = \sup_{x \in 
	\constraint} \|x\| \leq \sqrt{d}$.  

Since each objective function $f_t$ is monotone submodular, its multilinear 
extension $F_t$ is monotone and continuous DR-submodular 
\citep{calinescu2011maximizing}. If $\sup_{X \subseteq 
	\Omega}|f_t(X)| \leq M$, then \cref{bandit_assump_on_f} holds for $F_t$
	automatically, and the following lemma shows that its multilinear 
	extension $F_t$ is Lipschitz and smooth, which entails \cref{assump_on_f}.

\begin{lemma}[Lemma 4 in \citep{chen2019black}]
	\label{lem:discrete_to_continuous}
	For a submodular set function $f$ with $\sup_{X \subseteq \Omega} |f(X)| 
	\le 
	M$, its multilinear extension $F$ is $(2M\sqrt{d})$-Lipschitz and  
	$(4M\sqrt{d(d-1)})$-smooth.
\end{lemma}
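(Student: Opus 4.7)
The plan is to prove both statements by bounding the first and second partial derivatives of $F$ directly, exploiting the fact that $F(x) = \expect_{S \sim x}[f(S)]$ is \emph{multilinear} in each coordinate $x_i$. Because $F$ is linear in $x_i$ separately, the partial $\partial F/\partial x_i$ admits a clean ``marginal gain'' formula, and the second partials have a similarly clean expression involving a four-term submodularity expression; the entire argument reduces to bounding these combinatorial quantities by $M$ using only $\sup_X |f(X)| \le M$.

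First I would write
\[
\frac{\partial F}{\partial x_i}(x) = \expect_{S \sim x_{-i}}\bigl[f(S \cup \{i\}) - f(S)\bigr],
\]
where $S \sim x_{-i}$ means each $j \neq i$ is included independently with probability $x_j$; this follows from multilinearity ($F$ is affine in $x_i$ with intercept $F|_{x_i=0}$ and slope $F|_{x_i=1}-F|_{x_i=0}$). The hypothesis $|f| \le M$ immediately gives $|\partial F/\partial x_i| \le 2M$, so $\|\nabla F(x)\| \le 2M\sqrt{d}$. The mean value theorem along the segment from $x$ to $y$ then yields $|F(x) - F(y)| \le 2M\sqrt{d}\,\|x-y\|$, which is the $(2M\sqrt{d})$-Lipschitz claim.

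For smoothness I would compute the Hessian similarly. Since $F$ is linear in $x_i$ alone, the diagonal entries vanish: $\partial^2 F/\partial x_i^2 = 0$. For $i \neq j$,
\[
\frac{\partial^2 F}{\partial x_i \partial x_j}(x) = \expect_{S \sim x_{-\{i,j\}}}\bigl[f(S \cup \{i,j\}) - f(S \cup \{i\}) - f(S \cup \{j\}) + f(S)\bigr].
\]
The four-term expression is bounded in absolute value by $4M$ via the triangle inequality and $|f| \le M$ (submodularity gives the correct sign but is not needed for the magnitude bound). Hence the Frobenius norm obeys $\|\nabla^2 F(x)\|_F^2 \le \sum_{i \neq j} (4M)^2 = 16 M^2 d(d-1)$, so the spectral norm is at most $4M\sqrt{d(d-1)}$. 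Integrating $\|\nabla^2 F\|_{\mathrm{op}}$ along the segment from $x$ to $y$ then gives $\|\nabla F(x) - \nabla F(y)\| \le 4M\sqrt{d(d-1)}\,\|x-y\|$, which is the desired smoothness bound.

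The only subtle step is the second-partial computation and its four-term bound: one has to be careful that \emph{all four} summands can contribute with the worst sign, so the naive bound of $4M$ (rather than $2M$) is indeed the correct one and cannot be easily improved without invoking extra structure of $f$. Everything else is a routine consequence of the multilinear representation and the uniform bound $|f|\le M$.
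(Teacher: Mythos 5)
Your proof is correct, and the paper itself contains no proof of this statement: it is imported verbatim as Lemma~4 of the cited reference \citep{chen2019black}, whose argument is exactly the one you give (bound each first partial by $2M$ via the marginal-gain formula to get the $2M\sqrt{d}$ gradient-norm and Lipschitz bounds, note the diagonal Hessian entries vanish by multilinearity, bound each cross partial by $4M$, and pass through the Frobenius norm to get $4M\sqrt{d(d-1)}$). Nothing further is needed.
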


In summary, we only need 
\cref{assump_on_oracle,assump_on_discrete_bandit_K,assump_on_discrete_bandit_f}.
 Now we turn to prove \cref{thm:discrete_bandit}.


\begin{proof}[Proof of \cref{thm:discrete_bandit}]
	We first define $X^* = \argmax_{X \in \matroid}\sum_{t=1}^T f_t(X)$, the 
	corresponding fractional solution is $\tilde{x} \in \constraint$, 
	\emph{i.e.}, 
	\begin{equation}
	\label{eq:discrete_aux1}
	f_t(X^*) = F_t(\tilde{x}),    
	\end{equation}
	where $F_t$ is the multilinear extension of $f_t$. We also define $x^* = 
	\argmax_{x \in \constraint} \sum_{t=1}^T F_t(x), x_\delta^* = \argmax_{x 
	\in \constraint'} \sum_{t=1}^T F_t(x)$. The $(1-1/e)$-regret with horizon 
	$T$ is
	\begin{equation}
	\label{eq:discrete_regret}
	\mathcal{R}_T = \sum_{t=1}^T[(1-1/e)f_t(X^*) - f_t(Y_t)\mathds{1}_{Y_t \in 
	\matroid}].   
	\end{equation}
	We have
	\begin{equation}
	\label{eq:discrete_aux2}
	\begin{split}
	\sum_{t=1}^T f_t(Y_t)\mathds{1}_{Y_t \in \matroid} &= 
	\sum_{q=1}^Q\sum_{i=1}^L f_{t_{q,i}}(Y_{t_{q,i}})\mathds{1}_{Y_{t_{q,i}} 
	\in \matroid} \\
	&= \sum_{q=1}^Q\sum_{i=K+1}^L f_{t_{q,i}}(Y_{t_{q,i}}) + 
	\sum_{q=1}^Q\sum_{k=1}^K F_{t_{q,k}}(y_{t_{q,k}}) - 
	\sum_{q=1}^Q\sum_{k=1}^K F_{t_{q,k}}(y_{t_{q,k}}) \\
	&\quad + \sum_{q=1}^Q\sum_{k=1}^K 
	f_{t_{q,k}}(Y_{t_{q,k}})\mathds{1}_{Y_{t_{q,k}} \in \matroid} \\
	&\geq \sum_{q=1}^Q\sum_{i=K+1}^L F_{t_{q,i}}(y_{t_{q,i}}) + 
	\sum_{q=1}^Q\sum_{k=1}^K F_{t_{q,k}}(y_{t_{q,k}}) - 
	\sum_{q=1}^Q\sum_{k=1}^K M_1 + \sum_{q=1}^Q\sum_{k=1}^K 0 \\
	&= \sum_{t=1}^T F_{t}(y_{t}) - QKM_1,
	\end{split}    
	\end{equation}
	where the second equation holds since for $ t \in \{(q-1)L+1,\cdots, qL\} 
	\setminus \{ t_{q,1}, \cdots, t_{q,K} \} $, $Y_t = \losslessround(x_q) \in 
	\matroid$, and the inequality holds because of the fact that the rounding 
	is lossless and \cref{assump_on_discrete_bandit_f}.
	
	Therefore, by \cref{eq:discrete_regret,eq:discrete_aux1,eq:discrete_aux2} 
	and the optimality of $x^*$, we have
	\begin{equation}
	\label{eq:discrete_regret_final}
	\mathcal{R}_T  \leq \sum_{t=1}^T[(1-1/e)F_t(\tilde{x}) - F_{t}(y_{t})] + 
	QKM_1 \leq \sum_{t=1}^T[(1-1/e)F_t(x^*) - F_{t}(y_{t})] + QKM_1.    
	\end{equation}
	
	Now we can repeat the proof of \cref{thm:bandit} (\cref{app:general_K}) to 
	upper bound $\sum_{t=1}^T[(1-1/e)F_t(x^*) - \sum_{t=1}^T F_{t}(y_{t})$, 
	with $L_1 = 2M_1\sqrt{d}, L_2 = 4M_1\sqrt{d(d-1)}$ by 
	\cref{lem:discrete_to_continuous}. The only difference is when we turn to 
	bound $\expect[\Delta_q^{(k)}]= \expect [\|\nabla 
	\bar{F}_{q,k-1}(x_q^{(k)}) - d_q^{(k)} \|^2]$, where $\bar{F}_{q,k}(x) = 
	\frac{\sum_{i=k+1}^L\hat{F}_{t_{q,i}}(x)}{L-k}$, we have a larger upper 
	bound for $\expect [\| \nabla \bar{F}_{q,k-1}(x_q^{(k)}) - g_{q,k} \|^2]$, 
	where $g_{q,k} = \frac{d}{\delta}f_{t_{q,k}}(Y_{t_{q,k}})u_{q,k}$. 
	Precisely, we have
	\begin{equation*}
	\begin{split}
	&\expect [\| \nabla \bar{F}_{q,k-1}(x_q^{(k)}) - g_{q,k} \|^2] \\
	=& \expect[\| \nabla \bar{F}_{q,k-1}(x_q^{(k)}) - \nabla 
	\hat{F}_{t_{q,k}}(x_q^{(k)}) + \nabla 
	\hat{F}_{t_{q,k}}(x_q^{(k)}) 
	-\frac{d}{\delta}F_{t_{q,k}}(y_{t_{q,k}})u_{q,k} + 
	\frac{d}{\delta}F_{t_{q,k}}(y_{t_{q,k}})u_{q,k} - g_{q,k} \|^2]  \\
	=& \expect[\expect[\| \nabla \bar{F}_{q,k-1}(x_q^{(k)}) - \nabla 
	\hat{F}_{t_{q,k}}(x_q^{(k)}) \|^2  + \|\nabla \hat{F}_{t_{q,k}}(x_q^{(k)}) 
	- \frac{d}{\delta}F_{t_{q,k}}(y_{t_{q,k}})u_{q,k}\|^2 \\ 
	&\quad +\| \frac{d}{\delta}F_{t_{q,k}}(y_{t_{q,k}})u_{q,k} - g_{q,k} \|^2 \\
	\leq& L_1^2 + \frac{d^2M_1^2}{\delta^2} + \frac{d^2M_1^2}{\delta^2}\\
	=& L_1^2 + \frac{2d^2M_1^2}{\delta^2} \\
	\triangleq& \sigma^2.    
	\end{split}    
	\end{equation*}
	
	Plug in the new upper bound for $\sigma^2$, and repeat the analysis of 
	\cref{thm:bandit}, we have \begin{equation}
	\label{eq:discrete_aux3}
	\expect[\sum_{t=1}^T[(1-1/e)F_t(x^*) - F_{t}(y_{t})]] \leq N 
	T^{\frac{8}{9}} + 
	\frac{3r[2L_1^2+(3L_2R+2L_1)^2]}{4^{1/3}(\sqrt{d}+2)}T^{\frac{2}{3}} + 
	\frac{L_2D^2}{2}T^{\frac{1}{3}},
	\end{equation}
	where $N=\frac{(1-1/e)r}{\sqrt{d}+2}[\sqrt{d}(\frac{R}{r}+1)+\frac{R}{r}] 
	L_1 + \frac{(2-1/e)r}{\sqrt{d}+2}L_1+ 
	2M_1+\frac{3\cdot4^{2/3}(\sqrt{d}+2)d^2M_1^2}{r}+\frac{3(\sqrt{d}+2)D^2}{4r}+C$,
	 $C$ is a constant satisfying $\mathcal{R}_Q^{\mathcal{E}} \leq C\sqrt{Q}$. 
	
	Combine \cref{eq:discrete_regret_final,eq:discrete_aux3}, and using 
	$QKM_1=M_1T^{8/9}, D \leq \sqrt{d}, R \leq \sqrt{d}$, we conclude
	\begin{equation*}
	\expect[\mathcal{R}_T] \leq N T^{\frac{8}{9}} + 
	\frac{3r[2L_1^2+(3\sqrt{d}L_2+2L_1)^2]}{4^{1/3}(\sqrt{d}+2)}T^{\frac{2}{3}} 
	+ \frac{L_2d}{2}T^{\frac{1}{3}}, \end{equation*}
	where $N=\frac{(1-1/e)r}{\sqrt{d}+2}[\frac{d}{r}+\sqrt{d}(1+\frac{1}{r})] 
	L_1 + \frac{(2-1/e)r}{\sqrt{d}+2}L_1+ 
	3M_1+\frac{3\cdot4^{2/3}(\sqrt{d}+2)d^2M_1^2}{r}+\frac{3(\sqrt{d}+2)d}{4r}+C$,
	 $C$ is a constant satisfying $\mathcal{R}_Q^{\mathcal{E}} \leq C\sqrt{Q}$.
\end{proof}

\end{document}